\definecolor{blue1}{HTML}{2E86AB}
\newcommand{\Yuhang}[1]{{ \color{red} Yuhang: #1}}
\newtheorem{theorem}{Theorem}[section]
\newtheorem{lemma}{Lemma}
\newtheorem{corollary}{Corollary}
\newtheorem{definition}{Definition}
\newtheorem{remark}{Remark}
\newcommand{\ie}{\textit{i.e.}\xspace}
\newcommand{\eg}{\textit{e.g.}\xspace}
\newcommand*{\defeq}{\stackrel{\text{def}}{=}}
\newcommand{\second}[1]{{\color{blue} #1}}
\newcommand{\best}[1]{{\color{red} #1}}
\DeclareMathOperator*{\E}{\mathbb{E}}
\definecolor{lightgray}{gray}{0.9} 
\title{Beyond DAGs: A Latent Partial Causal Model for Multimodal Learning}
\author{Yuhang Liu\textsuperscript{1,2}, Zhen Zhang\textsuperscript{1,2}, Dong Gong\textsuperscript{3}, Erdun Gao\textsuperscript{1,2}, Biwei Huang\textsuperscript{4}, \\ \textbf{Mingming Gong\textsuperscript{5}, Anton van den Hengel\textsuperscript{1,2}, Kun Zhang\textsuperscript{6}, Javen Qinfeng Shi\textsuperscript{1,2}}
\\
\textsuperscript{1}Responsible AI Research Centre, Australia\\
\textsuperscript{2}Australian Institute for Machine Learning, Adelaide University\\
\textsuperscript{3}School of Computer Science and Engineering, The University of New South Wales\\
\textsuperscript{4}Halıcıoğlu Data Science Institute, University of California San Diego\\
\textsuperscript{5}School of Mathematics and Statistics, The University of Melbourne\\
\textsuperscript{6}Department of
Philosophy, Carnegie Mellon University\\
\texttt{yuhang.liu01@adelaide.edu.au}\\
\\ \textbf{Project:} \url{https://sites.google.com/view/yuhangliu/projects/bedags}
}
\begin{document}
\maketitle

\doparttoc 
\faketableofcontents

\begin{abstract}
Directed Acyclic Graphs (DAGs) are a standard tool in causal modeling, but their suitability for capturing the complexity of large-scale multimodal data is questionable. In practice, real-world multimodal datasets are often collected from heterogeneous generative processes that do not conform to a single DAG. Instead, they may involve multiple, and even opposing, DAG structures with inverse causal directions. To address this gap, in this work, we first propose a novel latent partial causal model tailored for multimodal data representation learning, featuring two latent coupled variables parts connected by an undirected edge, to represent the transfer of knowledge across modalities. Under specific statistical assumptions, we establish an identifiability result, demonstrating that representations learned by MultiModal Contrastive Learning (MMCL) correspond to the latent coupled variables up to a trivial transformation. This result deepens our understanding of the why MMCL works, highlights its potential for representation disentanglement, and expands the utility of pre-trained models like CLIP. Synthetic experiments confirm the robustness of our findings, even when the assumptions are partially violated. Most importantly, experiments on a pre-trained CLIP model embodies disentangled representations, enabling few-shot learning and improving domain generalization across diverse real-world datasets. Together, these contributions push the boundaries of MMCL, both in theory and in practical applications.

\end{abstract}
\section{Introduction}
\label{sec: intro}

Recent advances in multimodal learning have demonstrated remarkable capabilities across vision, language, and beyond~\citep{liang2024foundations, lymperaiou2024survey, li2024multimodal}. Representative models, such as CLIP, achieve this by aligning different modalities through MultiModal Contrastive Learning (MMCL)~\citep{radford2021learning}. A crucial factor behind their success is that these models are trained on large-scale multimodal datasets, enabling them to learn rich, high-quality cross-modal representations. Despite its remarkable empirical success, understanding the underlying mechanisms of multimodal learning is essential, not only to explain its current achievements but also to identify opportunities for further improvements \citep{liang2024foundations}. Recent works have also analyzed multimodal learning through the lens of latent causal models \citep{daunhawer2023identifiability,yao2024multiview,gresele2020incomplete}. These approaches examine the relationship between representations learned by multimodal learning from observed data and the high-level latent causal variables underlying such data, a line of inquiry referred to as identifiability analysis. By demonstrating that learned representations can, in principle, recover these latent causal variables, such analyses provide a causality-grounded explanation for the success of multimodal models. \emph{Crucially, most of these latent causal models rely on the assumption that the latent causal variables follow a Directed Acyclic Graph (DAG) structure.} See Appendix~\ref{app: rw} for more related work.

We argue that such a DAG assumption may be inappropriate for capturing the underlying generative processes of large-scale multimodal data, which underpin state-of-the-art multimodal models. This argument is supported by the following observation that large-scale multimodal data often arise from heterogeneous causal mechanisms that correspond to different, and sometimes even conflicting, DAG structures~\citep{scholkopf2012causal}. For instance, in the context of text–image paired data, some pairs are generated through a text-to-image causal mechanism, where a textual instruction serves as the input from which the corresponding image is produced~\citep{ramesh2021zero}. In contrast, some pairs arise from an image-to-text pipeline, where images are first collected from the internet and subsequently annotated with descriptive text by experts~\citep{sharma2018conceptual}. These two distinct causal mechanisms illustrate that large-scale multimodal data may arise from fundamentally opposite causal directions. Consequently, the common DAG assumption may be overly restrictive, failing to capture the diverse and sometimes conflicting generative processes underlying such data (see Sec.~\ref{sec: models} for a detailed discussion). As a result, although prior works on identifiability analysis under DAG assumptions~\citep{daunhawer2023identifiability, yao2023multi, gresele2020incomplete} have provided valuable theoretical insights, they are often restricted to specific, small-scale multimodal data, where a DAG structure is sufficient to capture the underlying generative process. As a direct consequence of this modeling choice, these studies largely remain confined to simulation experiments, and offer limited guidance for applying advanced multimodal models trained on large-scale data, e.g., CLIP-like models, to real-world applications. To this end, this paper makes the following contributions:
\begin{itemize}[itemindent=1em, leftmargin=*, labelsep=0.5em]
\item \textit{A Novel Latent Partial Causal Model (Sec.~\ref{sec: models}).} We propose a novel latent partial causal generative model, specifically designed for modeling the multimodal data generation process. Instead of relying on the DAGs assumption, our model introduces latent coupled variables, connected by undirected edges, to effectively capture transferable knowledge across different modalities.
\item \textit{Identifiability Guarantee (Sec.~\ref{sec: pre} and \ref{sec: identifyanalysis}).} We developed theoretical analyses specifically tailored to the proposed generative model, under certain statistical assumptions, showing that the representations learned by MMCL are related to the latent coupled variables up to a simple transformation, thereby providing a theoretical explanation for the success of MMCL.
\item \textit{Disentanglement Potential of MMCL (Sec.~\ref{sec: disent}).} Our theoretical results reveal the component-wise disentanglement potential of MMCL, which pushes the boundaries of how pre-trained models, such as CLIP-like models, can be leveraged. \emph{To the best of our knowledge, this is the first work to provide guarantees for the component-wise disentanglement potential of MMCL.} 
\item \textit{Extensive Experimental Results (Sec.~\ref{sec: exp}).} We validate our theoretical findings under ideal conditions via simulations and demonstrate their robustness even when the underlying assumptions are partially violated. Extensive experiments on pre-trained CLIP model across various tasks, such as few-shot learning, domain generalization, and disentangled representation learning, on over $16$ real-world datasets substantiate the practical effectiveness of our findings.
\end{itemize}

In summary, our work provides a principled explanation for the success of MMCL and, importantly, highlights its potential for learning disentangled representations. Although our theoretical findings rely on certain assumptions that may not be fully verifiable in practice, similar to most existing works on identifiability analysis, simulations demonstrate the robustness of our results even when these assumptions are partially violated. In addition, extensive experiments with pre-trained CLIP models across diverse real-world tasks provide strong evidence that the theoretical insights can translate into practical benefits. Taken together, these findings relax the conventional reliance on DAG assumptions in advanced MMCL, while maintaining applicability and effectiveness in real-world scenarios.

\section{Generative Model: The Latent Partial Causal Model} \label{sec: models}

In this section, we introduce a latent partial causal model that captures the generative mechanisms of multimodal data. Before presenting the model, we outline a key observation about such data.

\paragraph{Diversity in generative process of large-scale multimodal data.} We argue that real-world large-scale multimodal data often entails multiple, complex generative processes that may not be fully captured by a single DAG structure. To illustrate this (see Figure~\ref{fig: lcm1}), let latent variables $\mathbf{z}_x$ and $\mathbf{z}_t$ denote shared semantic factors. For example, $\mathbf{z}_x$ may correspond to high-level visual concepts such as object category or scene type in an image, while $\mathbf{z}_t$ may capture the semantic content of a sentence, such as topic or intent. To model modality-specific characteristics, we introduce additional latent variables $\mathbf{m}_x$ and $\mathbf{m}_t$. For instance, $\mathbf{m}_x$ may represent image-specific factors such as background noise or visual artifacts, whereas $\mathbf{m}_t$ could encode linguistic aspects such as sentence structure or grammatical patterns. Together, $(\mathbf{z}_x, \mathbf{m}_x)$ and $(\mathbf{z}_t, \mathbf{m}_t)$ generate the observed variables $\mathbf{x}$ (image) and $\mathbf{t}$ (text), respectively.

In the left DAG model of Figure~\ref{fig: lcm1}, the latent confounder $\mathbf{c}$ represents a shared source of variation that influences both latent variables $\mathbf{z}_x$ and $\mathbf{z}_t$, which correspond to latent semantic factors generating the observed variables $\mathbf{x}$ (e.g., image) and $\mathbf{t}$ (e.g., text), respectively. This confounder captures a common underlying context or concept connecting the two modalities. For example, if the image and text are related to the topic "sports," $\mathbf{c}$ could encapsulate this shared theme, influencing the generation of both the visual and textual data. The middle DAG depicts a structure where $\mathbf{b}$ represents transferable knowledge. Specifically, $\mathbf{b}$ serves as the bridge, deriving information from the text latent variable $\mathbf{z}_t$ and informing the image latent space $\mathbf{z}_x$. This scenario aligns with the generative process where text serves as a guiding input for image generation, e.g., text-to-image generation. A classical example is the MNIST dataset \citep{LeCun2005TheMD}. In contrast, the DAG on the right represents an image-guided text generation process, e.g., image captioning. Here, the high-level latent information in the image influences the high-level latent variable in the generated caption. A classical example is the CelebA dataset \citep{jiang2021talkedit}.

\begin{figure}[t!]
\centering

\begin{tikzpicture}[
    scale=0.5, transform shape,
    node distance=1.4cm and 0.8cm,
    obs/.style={circle, draw, fill=gray!40, thick, minimum size=1cm, font=\Large},
    latent/.style={circle, draw, fill=gray!5, thick, minimum size=1cm, font=\Large},
    edge/.style={->, >={Stealth[length=2.5mm]}, thick}
]
    \node[latent] (c) {c};
    \node[latent, left=of c] (zx) {$\mathbf{z}_x$};
    \node[latent, right=of c] (zt) {$\mathbf{z}_t$};
    \node[latent, left=of zx] (mx) {$\mathbf{m}_x$};
    \node[latent, right=of zt] (mt) {$\mathbf{m}_t$};
    \node[obs, below=of mx, xshift=1cm] (x) {$\mathbf{x}$};
    \node[obs, below=of mt, xshift=-1cm] (t) {$\mathbf{t}$};
    
    \path[edge]
        (mx) edge (x)
        (zx) edge (x)
        (zt) edge (t)
        (mt) edge (t)
        (c) edge (zx)
        (c) edge (zt);
\end{tikzpicture}
\hfill 
\begin{tikzpicture}[
    scale=0.5, transform shape,
    node distance=1.4cm and 0.8cm,
    obs/.style={circle, draw, fill=gray!40, thick, minimum size=1cm, font=\Large},
    latent/.style={circle, draw, fill=gray!5, thick, minimum size=1cm, font=\Large},
    edge/.style={->, >={Stealth[length=2.5mm]}, thick}
]
    \node[latent] (b) {b};
    \node[latent, left=of b] (zx) {$\mathbf{z}_x$};
    \node[latent, right=of b] (zt) {$\mathbf{z}_t$};
    \node[latent, left=of zx] (mx) {$\mathbf{m}_x$};
    \node[latent, right=of zt] (mt) {$\mathbf{m}_t$};
    \node[obs, below=of mx, xshift=1cm] (x) {$\mathbf{x}$}; 
    \node[obs, below=of mt, xshift=-1cm] (t) {$\mathbf{t}$};
    \path[edge]
        (mx) edge (x)
        (zx) edge (x)
        (zt) edge (t)
        (mt) edge (t)
        (zt) edge (b)
        (b) edge (zx);
\end{tikzpicture}
\hfill 
\begin{tikzpicture}[
    scale=0.5, transform shape,
    node distance=1.4cm and 0.8cm,
    obs/.style={circle, draw, fill=gray!40, thick, minimum size=1cm, font=\Large},
    latent/.style={circle, draw, fill=gray!5, thick, minimum size=1cm, font=\Large},
    edge/.style={->, >={Stealth[length=2.5mm]}, thick}
]
    \node[latent] (b) {b};
    \node[latent, left=of b] (zx) {$\mathbf{z}_x$};
    \node[latent, right=of b] (zt) {$\mathbf{z}_t$};
    \node[latent, left=of zx] (mx) {$\mathbf{m}_x$};
    \node[latent, right=of zt] (mt) {$\mathbf{m}_t$};
    \node[obs, below=of mx, xshift=1cm] (x) {$\mathbf{x}$};
    \node[obs, below=of mt, xshift=-1cm] (t) {$\mathbf{t}$};
    \path[edge]
        (mx) edge (x)
        (zx) edge (x)
        (zt) edge (t)
        (mt) edge (t)
        (zx) edge (b)
        (b) edge (zt);
\end{tikzpicture}

\caption{Possible DAG structures underlying large-scale multimodal data: Left: A latent confounder influences both $\mathbf{z}_x$ and $\mathbf{z}_t$. Middle: $\mathbf{z}_t$ influences $\mathbf{z}_x$ through an intermediate mediator $\mathbf{b}$, serving as a bottleneck for transferable knowledge. Right: A symmetric inverse relationship where $\mathbf{z}_x$ influences $\mathbf{z}_t$ via $\mathbf{b}$. These DAGs illustrate that a single DAG assumption may not hold when modeling large-scale multimodal data with heterogeneous generative processes.}
\label{fig: lcm1}
\end{figure}

Current advanced multimodal models, such as CLIP~\citep{liang2022mind}, are typically trained on vast collections of multimodal data, which may in fact arise from a mixture of the three scenarios illustrated in Figure~\ref{fig: lcm1} (potentially with additional DAG assumptions not depicted). In this context, restricting the generative modeling of large-scale multimodal data to a single DAG structure may be inadequate to capture the inherent diversity of real-world multimodal dependencies.

\begin{wrapfigure}{r}{0.35\textwidth}
\centering
\begin{tikzpicture}[
    scale=0.6, transform shape,
    node distance=1.4cm and 0.8cm,
    obs/.style={circle, draw, fill=gray!40, thick, minimum size=1cm, font=\Large},
    latent/.style={circle, draw, fill=gray!5, thick, minimum size=1cm, font=\Large},
    edge/.style={->, >={Stealth[length=2.5mm]}, thick}
]

\node[latent] (zx) {$\mathbf{z}_x$};
\node[latent, right=of zx] (zt) {$\mathbf{z}_t$};
\node[latent, left=of zx] (mx) {$\mathbf{m}_x$};
\node[latent, right=of zt] (mt) {$\mathbf{m}_t$};
\node[obs, below=of mx, xshift=1cm] (x) {$\mathbf{x}$};
\node[obs, below=of mt, xshift=-1cm] (t) {$\mathbf{t}$};

\path[edge]
    (mx) edge (x)
    (zx) edge (x)
    (zt) edge (t)
    (mt) edge (t);
\draw[thick] (zx) -- (zt);

\end{tikzpicture}
\caption{The proposed latent partial causal model. $\mathbf{z}_x$ and $\mathbf{z}_t$ are latent coupled variables, and $\mathbf{m}_x$, $\mathbf{m}_t$ are modality-specific.}
\label{fig: lcm}
\end{wrapfigure}
\textbf{The Proposed Latent Partial Causal Models.} Instead of DAGs structure, we propose latent partial causal model, designed to represent the generative process for multimodal data, as illustrated in Figure \ref{fig: lcm}. In it, the latent space is partitioned into two components, each corresponding to a specific modality, such as image and text. To capture unique characteristics within each domain, the model incorporates modality-specific latent variables, $\mathbf{m}_{x}$ and $\mathbf{m}_{t}$. In addition, to capture transferable knowledge between these modalities, the model introduces an undirected edge between the latent coupled variables, $\mathbf{z}_x$ and $\mathbf{z}_t$. Further, the observations are generated through distinct processes that link the latent variables to the observed data. Specifically, images ($\mathbf{x}$) are generated by the function  $\mathbf{g}_x(\mathbf{m}_{x},\mathbf{z}_x)$, while text ($\mathbf{t}$) is produced by $\mathbf{g}_t(\mathbf{m}_{t},\mathbf{z}_t)$. Besides the justification mentioned in Figure \ref{fig: lcm1}, this modeling approach is also grounded in the intricate dependencies between modalities. For instance, the adage ``a picture is worth a thousand words'' highlights the richness and detail of visual data, as supported by \citet{gropper1963picture,hum2011picture}. However, this perspective is not universally applicable, as \citet{reinert1976one} argues that textual information can often convey more precise meanings. Similarly, \citet{fidler2013sentence} reinforces the complementary nature of text, asserting that ``a sentence is worth a thousand pixels'' in its ability to succinctly express complex ideas.

\section{A First Look: The Recovery Potential of MMCL}
\label{sec: pre}
Given the proposed generative model, our goal is to analyze how MMCL framework, trained with observed data $\mathbf{x}$ and $\mathbf{t}$, can recover the true latent variables $\mathbf{z}_x$ and $\mathbf{z}_t$, up to a simple transformation. Before this, we provide an intuitive motivation for why MCL is expected to achieve this. MMCL leverages a loss function designed to maximize similarity between embeddings of real paired data while minimizing similarity for incorrect pairs. The loss function is defined as \citet{zhang2022contrastive,radford2021learning}:
\begin{equation}
    \mathcal{L} = -\frac{1}{N}\sum_{i=1}^{N} \log \frac{e^ {-d \big( \mathbf{f}_x(\mathbf{x}_i) , \mathbf{f}_t(\mathbf{t}_i) \big) /\tau}}{\sum_{j=1}^{N} e^ {-d \big( \mathbf{f}_x(\mathbf{x}_i) , \mathbf{f}_t(\mathbf{t}_j) \big) /\tau} }  
    -\frac{1}{N}\sum_{i=1}^{N} \log \frac{ {e^ {-d \big( \mathbf{f}_x(\mathbf{x}_i) , \mathbf{f}_t(\mathbf{t}_i) \big) /\tau}}}{\sum_{j=1}^{N} e^ {-d \big( \mathbf{f}_x(\mathbf{x}_j) , \mathbf{f}_t(\mathbf{t}_i) \big) /\tau}},
    \label{eq: clloss}
\end{equation}
where $d$ denotes a distance metric, \eg, cosine similarity on hypersphere or L1 norm on convex bodies, $\tau$ is a learnable temperature hyper-parameter, $N$ denotes the sample size, which means that we have $N$ positive pairs and $N^2-N$ negative pairs, $\mathbf{f}_x$ denote the encoder on one modality $\mathbf{x}$, \ie, image, similarly, $\mathbf{f}_t$ denote the encoder on another $\mathbf{t}$, \ie, text. To understand the multimodal contrastive loss further, we investigate its asymptotics:
\begin{theorem} [Asymptotics of $\mathcal{L}$]\label{theory: asym}
 For fixed $\tau > 0$, as the sample size $N \rightarrow \infty$, the (normalized) multimodal contrastive loss converges to
\vspace{-1mm}
\begin{equation}
\begin{aligned}
\lim_{N \rightarrow \infty} \mathcal{L} - 2\log N = &
2\E\limits_{  (\mathbf{x},\mathbf{t}) \sim p(\mathbf{x},\mathbf{t}) }{\big[ d \big( \mathbf{f}_x(\mathbf{x}) , \mathbf{f}_t(\mathbf{t}) \big) /\tau \big]}  + \E\limits_{\mathbf{x} \sim p(\mathbf{x})} \Big[ \log \E\limits_{\mathbf{t} \sim p(\mathbf{t})} \big[  e^ {-d \big( \mathbf{f}_x(\mathbf{x}) , \mathbf{f}_t(\mathbf{t}) \big) /\tau}  \big]\Big] \\
&+ \E\limits_{\mathbf{t} \sim p(\mathbf{t})} \Big[ \log \E\limits_{\mathbf{x} \sim p(\mathbf{x})} \big[e^ {-d \big( \mathbf{f}_x(\mathbf{x}) , \mathbf{f}_t(\mathbf{t}) \big) /\tau} \big]\Big].  \label{eq: asyloss}
\end{aligned}
\end{equation}
\end{theorem}

The proof is provided in Appendix \ref{appendix: asym}. This is a generalization of Theorem 1 in \citet{wang2020understanding}.

\paragraph{Insights into Latent Variable Recovery} The loss function in Eq. \eqref{eq: asyloss} connects directly to two fundamental principles in latent variable recovery: Prior Matching and Information Preservation. These principles are crucial for methods like nonlinear independent component analysis (ICA) \citep{hyvärinen2001independent}, which recover latent independent variables from observed data. 
\begin{itemize}
    \item \textit{Prior Matching}: This constrains the solution space using prior knowledge, addressing the non-uniqueness problem that often arises in latent variable recovery.
    \item \textit{Information Preservation}: This ensures that the solution space fully captures the complexity of the latent variables derived from the observed data.
\end{itemize}

\paragraph{Prior Matching} 
The first term in Eq. \eqref{eq: asyloss} promotes alignment between representations of real data pairs across modalities, enforcing that one modality (i.e., text) acts as a prior signal for the other (i.e., image). Minimizing this term drives cross-modal alignment and incorporates prior knowledge, which is key for recovering latent variables.

\paragraph{Information Preservation} 
The last two terms in Eq. \eqref{eq: asyloss} are closely related to ensuring that the learned representations capture the full complexity of the latent variables. These terms can be approximated by optimizing the following expression (proof in Appendix \ref{app: relation}):
\begin{align}
 - H\big(p(\mathbf{f}_x(\mathbf{x})), p(\mathbf{f}_t(\mathbf{t}))\big)  - H\big(p(\mathbf{f}_t(\mathbf{t})), p(\mathbf{f}_x(\mathbf{x}))\big),\label{eq: kde} 
\end{align}
where $H(\cdot, \cdot)$ denotes cross-entropy. The objective function in Eq. \eqref{eq: asyloss} is symmetric between $\mathbf{x}$ and $\mathbf{t}$. Intuitively, if $p(\mathbf{f}_x(\mathbf{x}))$ and $p(\mathbf{f}_t(\mathbf{t}))$ are not equal, the solution deviates, increasing the objective value and introducing asymmetry in the last two terms. For the optimal solution, the two distributions must align. When $p(\mathbf{f}_x(\mathbf{x})) = p(\mathbf{f}_t(\mathbf{t}))$, the cross-entropy in Eq. \eqref{eq: kde} reduces to entropy, and if $\mathbf{f}_x$ and $\mathbf{f}_t$ transform $\mathbf{x}$ and $\mathbf{t}$ into uniformly distributed random variables, Eq. \eqref{eq: kde} reaches its optimal value. This highlights the importance of finding transformations $\mathbf{f}_x$ and $\mathbf{f}_t$ that preserve information by fully capturing the latent variable structure.

\paragraph{A Novel Unified Perspective on Contrastive Loss} Previous research has primarily focused on contrastive loss in the context of single modality, emphasizing two main perspectives: 1) alignment-uniformity \citep{wang2020understanding}, which is closely related to prior matching, and 2) information preservation \citep{oord2018representation}. However, these two perspectives have largely been treated separately. In this work, we offer a novel insight by combining these two perspectives within the multimodal context for latent variable recovery. This insight motivates our belief that MMCL holds significant potential for recovering latent variables.

\section{From Potential to Principles: Identifiability Guarantee} \label{sec: identifyanalysis}
Given the initiative analysis in Section \ref{sec: pre}, which highlights the potential of MMCL for recovering latent variables, we now move forward to rigorous identifiability analysis, which provide theoretical guarantees that MMCL can indeed recover the true latent variables, by parameterizing the proposed latent partial causal model. We examine two distinct types of parameterization in latent spaces, hyperspheres and convex bodies, under specific assumptions, respectively.

\subsection{Identifiability Analysis on Hypersphere} \label{sec: identihyper}
On hypersphere, we parameterize the proposed latent partial causal generative models as following:
\begin{equation}
\begin{aligned}
p(\mathbf{z}_x)=|\mathcal{Z}|^{-1}, \quad p(\mathbf{z}_t|\mathbf{z}_x) = C_p^{-1} e^{(k\mathbf{z}^{T}_t  \mathbf{z}_x)} , \quad   \mathbf{x} =\mathbf{g}_x(\mathbf{z}_x,\mathbf{m}_x), \quad \mathbf{t} =\mathbf{g}_t(\mathbf{z}_t,\mathbf{m}_t),  \label{eq: gener} 
\end{aligned}
\end{equation}
where $\mathcal{Z}$ denotes the space of latent factors $\mathbf{z}_x$ and $\mathbf{z}_t$. We assume that $\mathcal{Z}$ is the unit hypersphere $\mathbb{S}^{M-1}$, aligning with the commonly used normalization in constrastive loss. We do not enforce any further assumptions for $\mathbf{m}_x$ and $\mathbf{m}_t$. For $\mathbf{g}_x$ and $\mathbf{g}_t$, we assume them to be invertible, and differentiable, ensuring the information in latent space can be recovered. In addition, we assume that $p(\mathbf{z}_x)$ follows a uniform distribution, and $ p(\mathbf{z}_t|\mathbf{z}_x)$ follows a von Mises-Fisher (vMF) distribution, considering the constraint of unit hypersphere. Given these assumptions, we first establish that the minimization of the cross-entropy Eq. \eqref{eq: asyloss} converges to a symmetric cross entropy, as follows: 

\begin{theorem}\label{theory: sphere}
($\mathcal{L}$ converges to the symmetric cross-entropy) 
Under the assumptions defined in Eqs. \eqref{eq: gener} for the proposed latent partial causal model, the necessary condition $\mathbf{f}_x \circ \mathbf{g}_x = \mathbf{f}_t \circ \mathbf{g}_t$, denoted as $\mathbf{h}$, for the optimal normalized multimodal contrastiveloss given by Eq. \eqref{eq: asyloss} leads to the following reduction of the loss itself:
\begin{equation}
\begin{aligned}
\lim_{N \rightarrow \infty} \mathcal{L} - 2\log N +2\log |\mathcal{Z}|  = & \E\limits_{  \mathbf{z}_x \sim p(\mathbf{z}_x) }{\big[ H ( p(\mathbf{z}_t|\mathbf{z}_x),q_{\mathbf{h}}(\mathbf{z}_t|\mathbf{z}_x)) \big]} +  \E\limits_{  \mathbf{z}_t \sim p(\mathbf{z}_t) }{\big[ H ( p(\mathbf{z}_x|\mathbf{z}_t),q_{\mathbf{h}}(\mathbf{z}_x|\mathbf{z}_t)) \big]},
 \label{eq: identonsphere} 
\end{aligned}
\end{equation}
where $H$ is the cross entropy, the conditional distributions $q_\mathbf{h}(\mathbf{z}_t|\mathbf{z}_x)$ and $q(\mathbf{z}_x|\mathbf{z}_t)$ are parameterized by the following:
\begin{equation}
\begin{aligned}
q_\mathbf{h}(\mathbf{z}_x|\mathbf{z}_t) = C_q(\mathbf{z}_t)^{-1} e ^ {(\mathbf{h}(\mathbf{z}_x)^{T} \mathbf{h}(\mathbf{z}_t)/\tau)}  q_\mathbf{h}(\mathbf{z}_t|\mathbf{z}_x) = C_q(\mathbf{z}_x)^{-1} e ^{(\mathbf{h}(\mathbf{z}_t)^{T} \mathbf{h}(\mathbf{z}_x)/\tau)}, \label{eq: posterzt} 
\end{aligned}
\end{equation}
with 
\begin{equation}
\begin{aligned}
C_q(\mathbf{z}_t) = \int e ^{(\mathbf{h}(\mathbf{z}_x)^{T} \mathbf{h}(\mathbf{z}_t)/\tau)} \mathrm{d} \mathbf{z}_x, C_q(\mathbf{z}_x) = \int e ^{(\mathbf{h}(\mathbf{z}_x)^{T} \mathbf{h}(\mathbf{z}_t)/\tau)} \mathrm{d} \mathbf{z}_t. \notag 
\end{aligned}
\end{equation}
\end{theorem}

Refer to Appendix \ref{app: proofhyper} for proof. This is a generalization of Theorem 1 in \citet{zimmermann2021contrastive}.

\paragraph{Bridge Between Modalities} By addressing key asymmetries arising from modality differences, such as modality-specific variables $\mathbf{m}_x$ and $\mathbf{m}_t$, along with distinct generative processes $\mathbf{g}_x$ and $\mathbf{g}_t$, we derive the result in Theorem \ref{theory: sphere}. This result is pivotal as it establishes a critical connection between MMCL and traditional single-modal contrastive learning. In particular, Theorem \ref{theory: sphere} enables the transfer of insights and results from single-modal settings to the multimodal context. As a result, we present the following corollary:

\begin{corollary} \label{corollary:hyper}
By leveraging Theorem \ref{theory: sphere}, the minimization of Eq. \eqref{eq: identonsphere} identifies the latent variables $\mathbf{z}_x$ (and symmetrically, $\mathbf{z}_t$) up to a linear transformation. Specifically, the representations $\mathbf{f}_x(\mathbf{x})$, learned by the minimization of Eq. \eqref{eq: identonsphere}, are linearly related to the underlying latent variables $\mathbf{z}_x$ in the proposed latent partial causal model, as follows: $\mathbf{f}_x(\mathbf{x}) = \mathbf{A} \mathbf{z}_x + \mathbf{c}$, where $\mathbf{A}$ is an orthogonal matrix and $\mathbf{c}$ is a constant vector.
\end{corollary}
For further details, see Appendix \ref{app: idenhyper}.

\paragraph{Success of MMCL} Corollary \ref{corollary:hyper} shows that minimizing Eq. \eqref{eq: identonsphere} (or equivalently, the multimodal contrastive loss in Eq. \eqref{eq: clloss}) identifies the latent variables $\mathbf{z}_x$ (and symmetrically, $\mathbf{z}_t$) up to a linear transformation. This means that the representations $\mathbf{f}_x(\mathbf{x})$, learned through MMCL, are directly related to the latent variables $\mathbf{z}_x$ via a linear transformation, i.e., $\mathbf{f}_x(\mathbf{x}) = \mathbf{A} \mathbf{z}_x + \mathbf{c}$. A similar result holds for $\mathbf{z}_t$. This finding highlights the effectiveness of MMCL, suggesting that its success in practical applications stems from its ability to recover latent coupled variables. This recovery preserves essential, transferable knowledge across modalities, enabling the learned representations to capture high-level transferable information while discarding model-specific details. Such properties are key to the robustness and transferability of MMCL representations.

\subsection{Identifiability Analysis on Convex Bodies}  \label{sec: identiconv}
We now extend the previous identifiability result to convex bodies, \eg, the hyperrectangle $[a_1, b_1] \times...\times [a_M, b_M]$. On convex bodies, we parameterize the proposed generative models by the following:
\begin{equation}
\begin{aligned}
p(\mathbf{z}_x)=|\mathcal{Z}_c|^{-1}, \quad p(\mathbf{z}_t|\mathbf{z}_x) = C_p(\mathbf{z}_x)^{-1} e ^{- \delta(\mathbf{z}_t , \mathbf{z}_x)/\lambda} , \quad
\mathbf{x} =\mathbf{g}_x(\mathbf{z}_x,\mathbf{m}_x), \quad \mathbf{t} =\mathbf{g}_t(\mathbf{z}_t,\mathbf{m}_t), \label{eq: gener_convex} 
\end{aligned}
\end{equation}
where $\delta$ is a distance metric induced by a norm. We consider a convex body in $\mathbb{R}^M$, denoted as $\mathcal{Z}_c$, where we assume that $p(\mathbf{z}_x)$ follows a uniform distribution, and the conditional distribution $ p(\mathbf{z}_t|\mathbf{z}_x)$ follows an exponential distribution. Again, we do not enforce any further assumptions for $\mathbf{m}_x$ and $\mathbf{m}_t$. For $\mathbf{g}_x$ and $\mathbf{g}_t$, we assume them to be invertible and differentiable mapping, ensuring information in latent space can be recovered. Given these assumptions, we have the following result: 

\begin{theorem}\label{theory: convex}
($\mathcal{L}$ converges to the symmetric cross-entropy) 
Under the assumptions defined in Eq. \eqref{eq: gener_convex} for the proposed latent partial causal model, the necessary condition $\mathbf{f}_x \circ \mathbf{g}_x = \mathbf{f}_t \circ \mathbf{g}_t$, denoted as $\mathbf{h}$, for the optimal normalized multimodal contrastiveloss given by Eq. \eqref{eq: asyloss} leads to the following reduction of the loss itself:
\begin{equation}
\begin{aligned}
\small
\lim_{N \rightarrow \infty} \mathcal{L}  - 2\log N  +2\log |\mathcal{Z}_c| = \E\limits_{  \mathbf{z}_x \sim p(\mathbf{z}_x) }{\big[ H ( p(\mathbf{z}_t|\mathbf{z}_x),q_{\mathbf{h}}(\mathbf{z}_t|\mathbf{z}_x)) \big]} + \E\limits_{  \mathbf{z}_t \sim p(\mathbf{z}_t) }{\big[ H ( p(\mathbf{z}_x|\mathbf{z}_t),q_{\mathbf{h}}(\mathbf{z}_x|\mathbf{z}_t)) \big]}, 
 \label{eq: identonconvex} 
\end{aligned}
\end{equation}
where $H$ is the cross entropy, the conditional distributions $q_\mathbf{h}(\mathbf{z}_t|\mathbf{z}_x)$ and $q(\mathbf{z}_x|\mathbf{z}_t)$ are parameterized by the following:
\begin{equation}
\begin{aligned}
q_\mathbf{h}(\mathbf{z}_x|\mathbf{z}_t) = C_q(\mathbf{z}_t) e ^ {-\delta(\mathbf{h}(\mathbf{z}_x), \mathbf{h}(\mathbf{z}_t))/\tau}, 
q_\mathbf{h}(\mathbf{z}_t|\mathbf{z}_x) = C_q(\mathbf{z}_x) e ^ {-\delta(\mathbf{h}(\mathbf{z}_x), \mathbf{h}(\mathbf{z}_t))/\tau}, \label{eq: posterzt:c}
\end{aligned}
\end{equation}
\end{theorem}
with 
\begin{equation}
\begin{aligned}
C_q(\mathbf{z}_t) = \int e ^ {-\delta(\mathbf{h}(\mathbf{z}_x), \mathbf{h}(\mathbf{z}_t))/\tau} \mathrm{d} \mathbf{z}_x, C_q(\mathbf{z}_x) = \int e ^ {-\delta(\mathbf{h}(\mathbf{z}_x), \mathbf{h}(\mathbf{z}_t))/\tau} \mathrm{d}{\mathbf{z}_t}. \notag 
\end{aligned}
\end{equation}


\paragraph{Bridge Between Modalities} In convex bodies, Theorem \ref{theory: convex}, introduced for the first time in this work, plays a key role in bridging MMCL with traditional contrastive learning by addressing the asymmetric challenges arising from modality differences. Building on this theorem, we have:

\begin{corollary}\label{coro:conv}
The minimization of Eq. \eqref{eq: identonconvex} in theorem \ref{theory: convex} identifies the latent variables $\mathbf{z}_x$ (symmetrically, $\mathbf{z}_t$) up to a permutation transformation, \ie, the representations $\mathbf{f}_x(\mathbf{x})$, learned by the minimization of Eq. \eqref{eq: identonconvex}, is related to the underlying $\mathbf{z}_x$ in the proposed partial causal model as follows: $\mathbf{f}_x(\mathbf{x}) = \mathbf{P} \mathbf{z}_x + \mathbf{c}$, where $\mathbf{P}$ is an permutation matrix with scaling, $\mathbf{c}$ is a constant vector.
\end{corollary}
For completeness, see details in Appendix \ref{app: idenconv}.

\paragraph{Success of MMCL} Similar to Corollary \ref{corollary:hyper} on hyperspheres, Corollary \ref{coro:conv} establishes that, on convex bodies, the representations $\mathbf{f}_x(\mathbf{x})$ learned by MMCL are related to the true latent variables $\mathbf{z}_x$ as $\mathbf{f}_x(\mathbf{x}) = \mathbf{P} \mathbf{z}_x + \mathbf{c}$. This provides a foundation for the success of MMCL on convex bodies. 

\section{From Principles to Practice: Disentanglement in CLIP Models}
\label{sec: disent}
In theory, both Corollaries \ref{corollary:hyper} and \ref{coro:conv} suggest a disentanglement potential of CLIP-like models trained by MMCL, under the assumption that the variables in $\mathbf{z}_x$ (and symmetrically, $\mathbf{z}_t$) are mutually independent. we explore how these theoretical insights can be translated into practical guidance for the effective use of CLIP-like models.

Corollary~\ref{corollary:hyper} shows that the representations $\mathbf{f}_x(\mathbf{x})$ learned by MMCL are linearly related to the true latent variables $\mathbf{z}_x$ via an orthogonal transformation, i.e., $\mathbf{f}_x(\mathbf{x}) = \mathbf{A} \mathbf{z}_x + \mathbf{c}$. This result holds under two key conditions: (1) the true latent variables are sampled from a hyperspherical latent space, and (2) the inference model, e.g., CLIP-like models, is trained in a hyperspherical inference space. Notably, CLIP-like models naturally satisfy condition (2), as they typically employ L2 normalization, constraining representations to the unit sphere. Therefore, under condition (1) holds, the representations from CLIP-like models can be passed through linear unmixing method (e.g., FastICA~\citep{hyvarinen1999fast}) to resolve the mixing matrix $\mathbf{A}$, resulting in disentangled representations. It is worth to note that the geometry of the hypersphere, specifically the unit ${M-1}$-dimensional hypersphere, places an upper bound on the number of independent variables, i.e., ${M-1}$ at most.

Unlike Corollary~\ref{corollary:hyper}, Corollary~\ref{coro:conv} shows that the learned representations $\mathbf{f}_x(\mathbf{x})$ from MMCL are already disentangled, i.e., $\mathbf{f}_x(\mathbf{x}) = \mathbf{P} \mathbf{z}_x + \mathbf{c}$. This result mainly requires two conditions: (1) the true latent variables $\mathbf{z}_x$ are sampled from a convex body latent space, and (2) the inference space of the model is also constrained to a convex body. However, CLIP-like models typically violate the second condition, as they operate in a hyperspherical inference space due to L2 normalization, even through we assume that the first condition may hold. Nevertheless, the insight from Corollary~\ref{coro:conv} remains useful with appropriate adjustments. In particular, the Corollary relies on the existence of an isometric mapping from the latent space to the representation space (see Eq.~\eqref{app: isometry} in Appendix). Although a global isometry from a convex body to a hypersphere is not feasible, it is reasonable to assume a local isometry between the convex body and small regions of the hypersphere. Based on this, we propose first applying Principal Component Analysis (PCA) to the representations $\mathbf{f}_x(\mathbf{x})$. Then, FastICA can be used to account for the orthogonal transformation introduced by PCA, enabling the extraction of the final disentangled representations. This PCA+ICA pipeline thus enables effective use of CLIP-like models under the result of Corollary~\ref{coro:conv}.

\begin{remark} 
By leveraging the disentanglement capabilities of CLIP-like models, we can improve performance on tasks that benefit from disentangled representations, such as few-shot learning and domain generalization. This observation further motivates exploration of the disentanglement potential inherent in CLIP-like models across a broad range of downstream applications.
\end{remark}

\section{Synthetic Experiments and Real-World Evaluation}
\label{sec: exp}
\paragraph{{Synthetic Experiments}} In our initial experiments, we use synthetic data to validate our main identifiability results on hyperspheres and convex bodies, while also empirically assessing their robustness under significant violations of assumptions. We first sample $p(\mathbf{z}_x)$ according to the distributions listed in Table 
\ref{tab: lineartrans}. Additionally, we generate paired samples from the conditional distribution $p(\mathbf{z}_t|\mathbf{z}_x)$ following the distributions specified in the same table. Beyond hyperspheres, our experiments also consider bounded and unbounded spaces. Each experiment is repeated three times for every setting. For more details regarding experiments, refer to Appendix \ref{app: imple}.

\begin{table}[t]
\caption{Assessing identifiability up to linear (left) and permutation (right) transformations under varying assumptions. The first row in (left) and the first two rows in (right) represent settings that align with our assumptions in Corollary \ref{corollary:hyper} and Corollary \ref{coro:conv}, while the others show results for violated assumptions. S: Space, Sp: Sphere, U: Uniform, v: vMF ($k=1$), L: Laplace ($\lambda=0.05$), N: Normal ($\delta=0.05$), B: Box, Un: Unbounded, G: GenNorm ($\beta=3$).}
\label{tab: lineartrans}
\begin{center}
\begin{minipage}{0.45\textwidth}
\centering
\scriptsize
\setlength{\tabcolsep}{3pt}
\begin{tabular}{lcccccc}
\toprule
 \multicolumn{3}{c}{Generative process}   & \multicolumn{2}{c}{Model} &   &  \\
 \multicolumn{3}{c}{------------------------------------}& \multicolumn{2}{c}{------------------} &   &  \\  
S& $p(\mathbf{z}_x)$ &  $p(\mathbf{z}_x|\mathbf{z}_t)$ & S & $q(\mathbf{z}_x|\mathbf{z}_t)$  & $R^2$\\
\midrule
Sp&U &v            & Sp& v  & 99.5 $\pm$ 0.1 \\
Sp&U &L     & Sp& v  &  99.4 $\pm$ 0.2  \\
Sp&U &N      & Sp& v &  98.7 $\pm$ 0.3 \\
\midrule
B&U &N       & Un& N  &  90.5 $\pm$ 0.2 \\
B&U &L       & Un& N  & 92.2 $\pm$ 0.3  \\
B&U &L      & Un& G  & 99.1 $\pm$ 0.4 \\
B&U &N       & Un& G  & 91.2 $\pm$ 0.3 \\
\midrule
Sp &N ($\delta=1$) & L  & Sp & v  & 96.3 $\pm$ 0.3  \\
Sp &N ($\delta=1$) & N  & Sp & v  & 95.9 $\pm$ 0.2 \\
\midrule
Un &L ($\lambda=1$) & N  & Un & N  &  88.5 $\pm$ 0.3 \\
Un &N ($\delta=1$) & N  & Un & N& 89.2 $\pm$ 0.2 \\
\bottomrule
\end{tabular}
\end{minipage}%
\hspace{1em} 
\begin{minipage}{0.48\textwidth}
\centering
\scriptsize
\setlength{\tabcolsep}{3pt}
\begin{tabular}{lcccccc}
\toprule
 \multicolumn{3}{c}{Generative process}   & \multicolumn{2}{c}{Model} &   &  \\
\multicolumn{3}{c}{------------------------------------}& \multicolumn{2}{c}{------------------} &   &  \\  
S& $p(\mathbf{z}_x)$ &  $p(\mathbf{z}_x|\mathbf{z}_t)$ & S & $q(\mathbf{z}_x|\mathbf{z}_t)$  & MCC\\
\midrule
B&U &L          & B& L  &  99.1 $\pm$ 0.1  \\
B&U &G      & B& G  &  97.2 $\pm$ 0.3 \\
\midrule
B&U &N             & B& N  & 98.6$\pm$ 0.2  \\
B&U &L     & B& N  & 99.1$\pm$ 0.1 \\
B&U &G      & B& L & 98.4$\pm$ 0.1 \\
\midrule
B&U &L       & Un& L  & 95.6$\pm$ 0.2  \\
B&U &G     & Un& G  &96.4$\pm$ 0.2  \\
\bottomrule
\end{tabular}
\end{minipage}
\end{center}
\end{table}

To evaluate linear identifiability result in Corollary \ref{corollary:hyper}, we fit a linear regression model between the ground-truth $\mathbf{ z}_x$ and representations $\mathbf{f}_x(\mathbf{x})$ learned by MMCL and report the coefficient of determination ($R^2$). Further, to evaluate permutation identifiability result in Corollary \ref{coro:conv}, we employ the mean correlation coefficient (MCC) between the ground-truth $\mathbf{ z}_x$ and representations $\mathbf{f}_x(\mathbf{x})$ learned by MMCL. The first row in Table \ref{tab: lineartrans} (left) and the first two rows in Table \ref{tab: lineartrans} (right), corresponding to the setting where the assumptions are satisfied, verify the identifiability results on hypersphere and convex bodies, respectively. Our empirical investigations have yielded a critical insight: discrepancies in the assumptions concerning marginal and conditional distributions, as well as the nature of the spaces (hypersphere and convex body), do not significantly impact performance. This robustness is demonstrated by the results detailed in Table \ref{tab: lineartrans} (left) for the hypersphere space and Table \ref{tab: lineartrans} (right) for convex bodies. This might be attributed to the fact that the loss function described in Eq. \eqref{eq: asyloss} predominantly relies on expectation computations, inherently allowing for a wide range of approximations. If we can approximate the expectation calculations consistently across various distributions and spaces, it is reasonable that the identifiability results remain well within acceptable bounds.

\paragraph{{Real-World Evaluation with Pretrained CLIP}} In real data, the true latent coupled variables are unknown. Therefore, we evaluate our theoretical findings from the perspective of disentanglement as discussed in Section \ref{sec: disent}. \emph{Again, we emphasize that, in contrast to previous studies on identifiability for MMCL \citep{daunhawer2023identifiability,yao2023multi,gresele2020incomplete}, which rely on simulation experiments, this work validates identifiability through empirical analysis on real datasets and pre-trained CLIP model. This underscores the practicality of our theoretical contributions.}

\begin{figure}[t]
  \centering 
\subfigure[Smile]
{\includegraphics[width=0.23\textwidth]{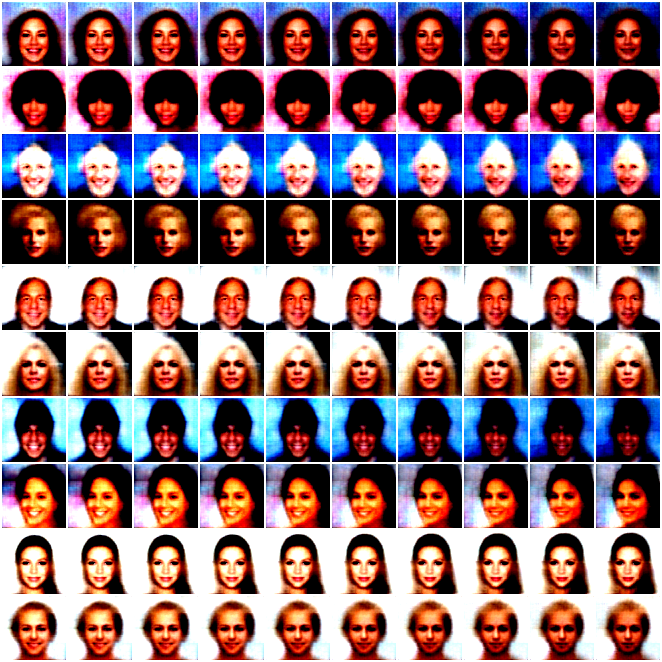}}\
\subfigure[Gender w/ Mustache]
{\includegraphics[width=0.23\textwidth]{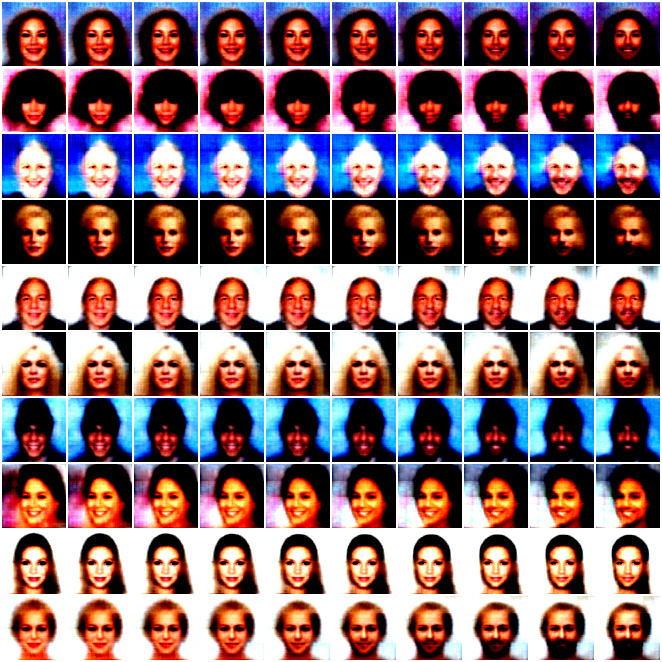}}\
\subfigure[Glasses]
{\includegraphics[width=0.23\textwidth]{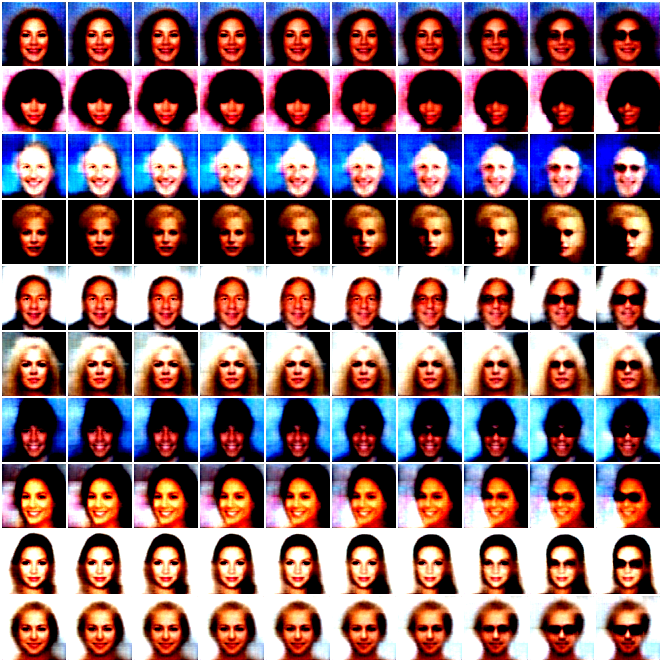}}\
\subfigure[Face Size]
{\includegraphics[width=0.23\textwidth]{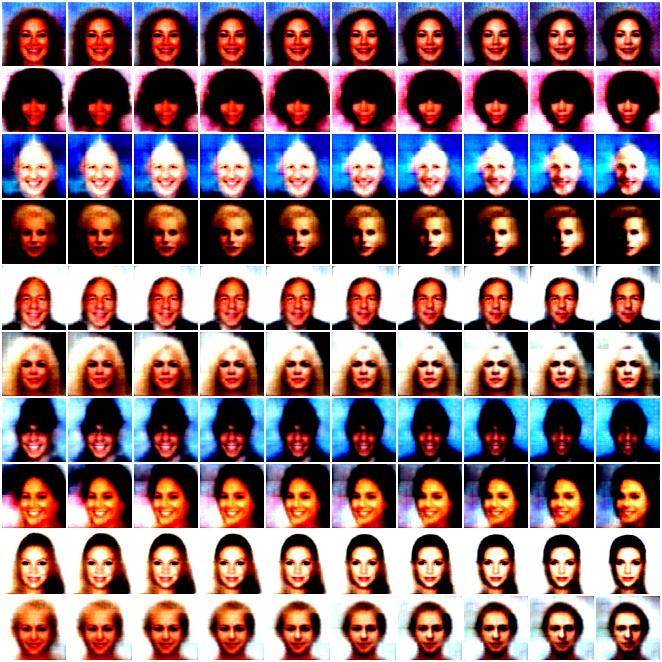}}
  \caption{Disentangled Representations learned by combining pre-trained CLIP and FastICA. The results are aligned with our disentanglement findings.}
  \label{fig: celeba}
\end{figure}

\begin{table}
\centering
\small
\caption{Quantitative results for 2-shot learning and domain generalization by different methods. \ding{172}: Linear Probe, \ding{173}: Linear Probe with FastICA, and \ding{174}: Linear Probe with PCA and FastICA.}
\label{table: 2-shot}
\begin{sc}
\begin{tabular}{lccccccc}
\toprule
&        & Source & \multicolumn{5}{c}{Target (ImageNet-)}   \\
&        & {----------} & \multicolumn{5}{c}{----------------------------------------------}  \\        
Encoders&Methods & ImageNet & V2 & Sketch & R & A &Avg.\\
\midrule
RN50&\ding{172}   & 31.95 & 26.48& \second{8.41} & 20.74 & 7.44 & 15.77 \\
~ &\ding{173} & \best{34.06}& \best{28.74}& {8.37} & \second{21.72}&\best{10.15}&\second{17.25}\\
~ &\ding{174} & \second{34.12}& \second{28.68}& \best{11.55} & \best{25.57}&\best{10.15}&\best{18.99}\\
\midrule

RN101&\ding{172}   &  37.64  & 31.45  & \second{13.71}  & \second{31.09}  &  11.85 & 20.03  \\
~ &\ding{173} & \second{39.58}& \second{33.15}& {13.49} & {30.29}&\second{14.77}&\second{22.93}\\
~ &\ding{174} & \best{39.86}& \best{33.58}& \best{17.93} & \best{35.48}&\second{14.20}&\best{25.29}\\
\midrule
ViT32 &\ding{172} &  38.23  & 32.00   & 16.17 &   33.67&   12.88 & 23.68  \\
~ &\ding{173} & \best{40.21}& \best{33.97}& \second{16.54} & \second{34.79}&\best{15.72}&\second{25.26}\\
~ &\ding{174} & \second{39.34}& \second{33.44}& \best{19.02} & \best{36.98}&\second{14.69}&\best{26.03}\\
\midrule
ViT16 &\ding{172}   & 44.97  &   38.11  & 22.06  &   43.86 &   25.99  & 32.51  \\
~ &\ding{173} & \second{45.52} & \second{39.38}& \second{22.55} & \second{45.33}&\second{30.47}&\second{34.43}\\
~ &\ding{174} & \best{46.57}& \best{40.66} & \best{26.67} & \best{49.69}&\best{31.48}&\best{37.13}\\
\bottomrule
\end{tabular}
\end{sc}
\end{table} 
\paragraph{Disentangled representations for CelebA data} According to Section \ref{sec: disent}, we first extract representations from the pre-trained CLIP model and then apply FastICA to these representations to achieve final representations for CelebA data \citep{liu2015faceattributes}. We expect these final representations to exhibit clear signs of disentanglement. To validate this, we proceed to train a decoder that reconstructs observational data using these extracted representations. Figure \ref{fig: celeba} illustrates the effectiveness of our method through latent space traversals. Specifically, it visualizes changes in reconstructions as we traverse one dimension of the latent space at a time, showcasing 4 out of 16 attributes uncovered by our approach. Additional results are available in Appendix \ref{app: celeba}. This achievement not only validates our identifiability results, but also offers a new research line, i.e., learning disentangled representations by CLIP, or exploring how this disentanglement potential relate to the manipulation of pre-trained vision models, such as diffusion models.

\paragraph{Few-shot learning and domain generalization} The goal of disentangled representations is to learn representations that transfer easily and robustly to downstream tasks, making them well-suited for few-shot learning and resilient to distribution shifts \citep{fumero2023leveraging}. We thus focus on few-shot learning and domain generalization tasks to further evaluate our disentanglement findings. We extract representations from a limited set of labeled samples using a pre-trained CLIP model, combined with FastICA to align with the hypersphere, and with PCA followed by FastICA to align with convex body. These representations are then used to train a linear classifier. We evaluate the methods on ImageNet \citep{deng2009imagenet} for few-shot learning and test robustness on ImageNet-V2 \citep{recht2019imagenet}, ImageNet-Sketch \citep{wang2019learning}, ImageNet-R \citep{hendrycks2021many}, and ImageNet-A \citep{hendrycks2021nae}.
Table \ref{table: 2-shot} presents the performance metrics of the proposed methods in few-shot learning (the `SOURCE' column) and domain generalization (the `TARGET' columns). Analyzing the data in the `SOURCE' column reveals that the proposed methods outperform the baseline approach, which trains a linear classifier using representations directly obtained from pre-trained CLIP (i.e., the Linear Probe). This superior performance validates our disentanglement findings. The `TARGET' column further reinforces the benefits of disentanglement. Refer to Appendix \ref{app: imagenet} for more results.

\begin{wrapfigure}{r}{0.45\textwidth}
\centering
\includegraphics[width=0.45\textwidth]{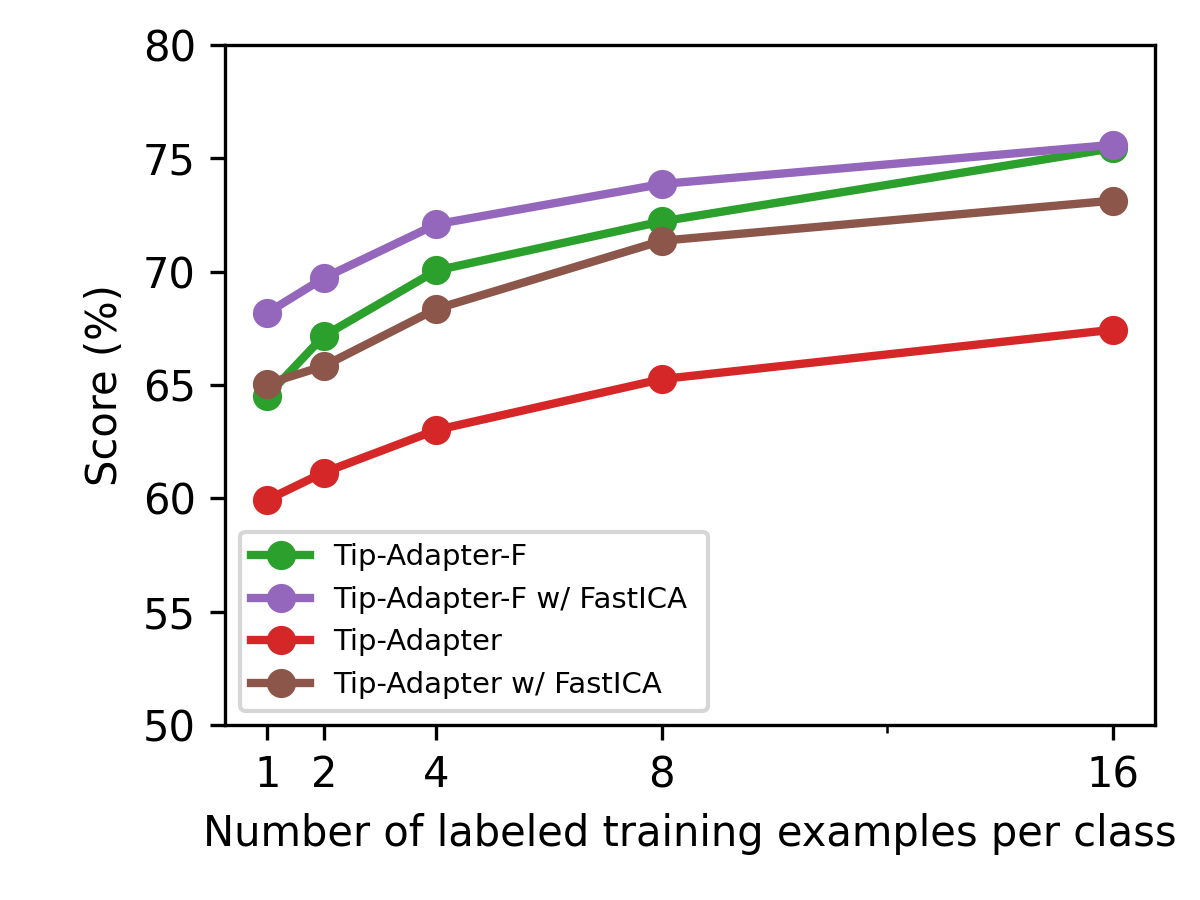}
\caption{Comparison of accuracy ($\%$) achieved by different few-shot CLIP adaptation methods across 11 datasets.}
\centering
  \label{fig: fs}
\end{wrapfigure}
\paragraph{Leveraging the Disentanglement Potential of CLIP-like Models for Few-Shot Learning} In the final experiments, we demonstrate how the disentanglement potential pushes the boundaries of leveraging pre-trained models, e.g., CLIP. Recent progress shows that pre-trained CLIP's adaptability can be significantly improved with just a few labeled training samples. The key to leveraging pre-trained CLIP for few-shot learning is effectively utilizing its extracted representations from limited labeled data, as Tip-Adapter and Tip-Adapter-F methods proposed in the work \citep{zhang2022tip}. 
As we claim, leveraging disentangled potential of pre-trained CLIP can enhance performance on tasks that rely on disentangled representations, including few-shot learning. Therefore, rather than using CLIP’s raw representations, we apply FastICA to extract disentangled representations for few-shot tasks. This can be implemented in a plug-and-play way. As shown in Figure \ref{fig: fs}, incorporating FastICA in the methods in \citep{zhang2022tip}, termed Tip-Adapter with FastICA and Tip-Adapter-F with FastICA, results in better performance, across 11 datasets. See Appendix \ref{app: fsl} for more details.

\section{Conclusion and Discussions}
\label{sec:conclusion}

In this work, we propose a novel latent partial causal model for multimodal data that moves beyond the traditional DAG structure, using latent coupled variables connected by undirected edges to capture transferable knowledge across modalities. We establish a theoretical link between this generative model and MMCL, showing that the representations learned by MMCL correspond to latent variables in the generative model, with linear and permutation transformations in hypersphere and convex body spaces, respectively. Our results provide the first theoretical guarantees for the disentanglement capabilities of MMCL, with applications in tasks like few-shot learning and domain generalization. Unlike prior simulation-based studies, our work demonstrates the real-world utility of MMCL and offers insights into leveraging pre-trained models like CLIP for disentangled representations. Our model challenges conventional DAG assumptions and provides a flexible, practical framework that enhances the effectiveness of MMCL.

One of the main limitations of this work lies in the parametric assumptions, e.g., Eqs.~\eqref{eq: gener} and \eqref{eq: gener_convex}, which may not strictly hold in real-world applications. However, simulation experiments in settings where some of these assumptions are violated indicate that the theoretical results still largely hold (e.g., Table \ref{tab: lineartrans}). Furthermore, empirical evaluations, including learning disentangled representations on face images (e.g., Figure \ref{fig: celeba}), few-shot learning and domain generalization on ImageNet-type datasets (e.g., Table \ref{table: 2-shot}), and few-shot learning across 11 cross-domain datasets (e.g., Figure \ref{fig: fs}), demonstrate the practical advantages of our theoretical findings, providing additional support for their relevance.

\section{Acknowledgment}
This project was partially funded by the Responsible AI Research Centre (Yuhang Liu, Zhen Zhang, Erdun Gao, Anton van den Hengel, and Javen Qinfeng Shi). Dong Gong was partially supported by the Australian Government through the Australian Research Council Discovery Early Career Researcher Award (DECRA)(DE230101591). Mingming Gong was partially supported by the Australian Government through the Australian Research Council Discovery Projects (DP240102088). The authors also thank the anonymous reviewers for their constructive feedback.

\newpage
\paragraph{Ethics Statement.}
This study complies with the ethical standards of ICLR. It relies exclusively on public datasets and does not pose foreseeable negative impacts.

\paragraph{Reproducibility Statement.}
We provide thorough descriptions of the methodology and experiments. The code and instructions will be openly available once the paper is published.
\bibliographystyle{iclr2026_conference}
\bibliography{References}


\newpage
\appendix
\onecolumn
\part{Appendix} 
\parttoc

\newpage
\section{Related Work}\label{app: rw}
\paragraph{Multimodal contrastive representation learning} Multi-modal contrastive representation learning, driven by underlying transferable
knowledge across modalities, aims to coalesce inputs from these diverse sources into a cohesive representation space. This is typically achieved using a symmetric version of the standard contrastive loss \citep{oord2018representation,gutmann2010noise}, a method designed to align accurate pairings while distinguishing incorrect ones \citep{he2017fine,radford2021learning}. Although this approach has proven successful in a range of downstream tasks \citep{radford2021learning,zhou2022conditional,zhou2022learning,luddecke2022image,ban2022pre}, there remains a gap in our comprehensive theoretical and empirical understanding of the representations it learns. Recently, there has been a growing interest in exploring multi-modal contrastive learning from various perspectives. For instance, the study by \citet{liang2022mind} provides insights into the modality gap inherent in multi-modal contrastive learning. Similarly, the research presented by \citet{nakada2023understanding} establishes a link between general multimodal contrastive loss and SVD analysis. Additionally, \citet{huang2021makes} posits that learning with multiple modalities can lead to a reduced population risk compared to using a subset of these modalities. Diverging from these approaches, our work delves into multi-modal contrastive representation learning by examining its connection with generative models.

Past research has sought to comprehend the representations derived from standard single-modality contrastive learning, examining them through the lens of alignment and uniformity \citep{wang2020understanding}, showing
guarantees on the performance of the learned representations on the average classification task \citep{saunshi2019theoretical}, or in terms of the identifiability of latent variables \citep{zimmermann2021contrastive, von2021self}. Building on these foundations, our work takes a foreword step. We demonstrate that multi-modal contrastive learning can identify latent coupled variables, extending the insights from previous studies into the realm of multi-modality. Refer to Section \ref{app: differsingle} for more details.

Very recently, several studies have emerged, focusing on multi-modal settings \citep{daunhawer2023identifiability,yao2023multi}. A clear distinction is that: the proposed model captures transferable knowledge across modalities by an undirected edge between latent coupled variables, while previous works often achieve it by introducing shared variables \citep{daunhawer2023identifiability,yao2023multi}. Notably, our modeling approach is more general, as it can be reduced to the shared variables used in previous works \citep{daunhawer2023identifiability,yao2023multi} by enforcing an identical mapping on the undirected edge between latent coupled variables. Some of these works have only achieved partial identifiability of coupled variables \citep{daunhawer2023identifiability,yao2023multi}, specifically identifying latent content variables but not latent style variables. In contrast, our work achieves comprehensive identifiability results for all latent coupled variables, offering a deeper level of understanding. Our research also diverges from the approach taken in \citet{gresele2020incomplete} in two key ways: Firstly, we model ransferable knowledge across modalities using conditional distributions, whereas the latter utilizes identical variables for this purpose. Secondly, while \citet{gresele2020incomplete} relies on the premise that the mapping from the latent space to observations must be constrained by component-wise corrupters to ensure identifiability, our findings do not necessitate such constraints. Refer to Section \ref{app: differmulti} for details.

\paragraph{Nonlinear ICA and Causal Representation Learning}
Nonlinear Independent Component Analysis (ICA) aims to unravel latent independent variables from observational data that has been subject to a nonlinear mixture of these latent factors. However, as pointed out in the seminal work by \citet{hyvarinen1999nonlinear}, solving this problem is generally infeasible without specific underlying assumptions. A prominent direction in contemporary research leverages the concept of distributional changes in latent variables, which leads to the creation of multi-domain observational data. This approach has been extensively explored and developed in a series of studies \citep{hyvarinen2016unsupervised,hyvarinen2017nonlinear,hyvarinen2019nonlinear,khemakhem2020variational}, each contributing to a deeper understanding and more refined methodologies in the field of Nonlinear ICA. We build upon this body of research by incorporating co-occurrence patterns observed across multiple modalities. It is important to note the distinct difference between multi-domain and multi-modal approaches. The former typically implies a consistent mapping from the latent space to the observational space across all domains, whereas the latter accommodates different mappings for each modality. Additionally, while multi-domain approaches generally assume a totally shared latent variables across all domains, multi-modal methods allow for the existence of modality-specific latent variables. This work is also distinct from prior studies on causal representation learning \citep{von2021self, liu2022identifying, buchholz2023learning, liuidentifiable, liu2022identifying1, liu2024identifiable, liu2026i, brehmer2022weakly, ahuja2023interventional, seigal2022linear, varici2023score}. While these works aim to address more general causal settings, achieving identifiability in such regimes typically requires substantially stronger assumptions, which can limit their applicability in real-world scenarios. In contrast, our approach is empirically grounded: extensive experiments with pre-trained CLIP models across a diverse set of real-world tasks demonstrate practical benefits.

\section{The Proof of Theorem \ref{theory: asym}}
\label{appendix: asym}
\renewcommand{\thetheorem}{\arabic{theorem}.1}
\setcounter{theorem}{2} 
\begin{theorem}\label{app: asym}
(The asymptotics of $\mathcal{L}$) For fixed $\tau > 0$, as the sample size $N \rightarrow \infty$, the (normalized) multimodal contrastiveloss converges to

\begin{align}
\lim_{N \rightarrow \infty} \mathcal{L} - 2\log N = & 
2\E\limits_{  (\mathbf{x},\mathbf{t}) \sim p(\mathbf{x},\mathbf{t}) }{\big[ d \big( \mathbf{f}_x(\mathbf{x}) , \mathbf{f}_t(\mathbf{t}) \big) /\tau \big]}  + \E\limits_{\mathbf{x} \sim p(\mathbf{x})} \Big[ \log \E\limits_{\mathbf{t} \sim p(\mathbf{t})} \big[  e^ {-d \big( \mathbf{f}_x(\mathbf{x}) , \mathbf{f}_t(\mathbf{t}) \big) /\tau}  \big]\Big] \label{app: eq: asyloss} \\
&+ \E\limits_{\mathbf{t} \sim p(\mathbf{t})} \Big[ \log \E\limits_{\mathbf{x} \sim p(\mathbf{x})} \big[e^ {-d \big( \mathbf{f}_x(\mathbf{x}) , \mathbf{f}_t(\mathbf{t}) \big) /\tau} \big]\Big]. \notag 
\end{align}
\end{theorem}

\begin{proof}
This proof is done by mainly depending on the Continuous Mapping Theorem and the law of large
numbers.
\begin{equation}
\begin{aligned}
\lim_{N \rightarrow \infty} \mathcal{L} - 2\log N  & = \lim_{N \rightarrow \infty} \Big(-\frac{1}{N}\sum_{i=1}^{N} \log \frac{e^ {-d \big( \mathbf{f}_x(\mathbf{x}_i) , \mathbf{f}_t(\mathbf{t}_i) \big) /\tau}}{\sum_{j=1}^{N} e^ {-d \big( \mathbf{f}_x(\mathbf{x}_i) , \mathbf{f}_t(\mathbf{t}_j) \big) /\tau} } \\ & -\frac{1}{N}\sum_{i=1}^{N} \log \frac{ {e^ {-d \big( \mathbf{f}_x(\mathbf{x}_i) , \mathbf{f}_t(\mathbf{t}_i) \big) /\tau}}}{\sum_{j=1}^{N} e^ {-d \big( \mathbf{f}_x(\mathbf{x}_j) , \mathbf{f}_t(\mathbf{t}_i) \big) /\tau}}\Big) - 2\log N, \\
     & = \lim_{N \rightarrow \infty}\Big(\frac{2}{N} \sum_{i=1}^{N} {d \big( \mathbf{f}_x(\mathbf{x}_i) , \mathbf{f}_t(\mathbf{t}_i) \big) /\tau} + \frac{1}{N} \sum_{i=1}^{N} \log  {\sum_{j=1}^{N} e^ {-d \big( \mathbf{f}_x(\mathbf{x}_i) , \mathbf{f}_t(\mathbf{t}_j) \big) /\tau} } \\
     &+  \frac{1}{N} \sum_{i=1}^{N} \log {\sum_{j=1}^{N} e^ {-d \big( \mathbf{f}_x(\mathbf{x}_j) , \mathbf{f}_t(\mathbf{t}_i) \big) /\tau}} \Big) - 2\log N \\
     & = \lim_{N \rightarrow \infty} \Big(\frac{2}{N} \sum_{i=1}^{N} {d \big( \mathbf{f}_x(\mathbf{x}_i) , \mathbf{f}_t(\mathbf{t}_i) \big) /\tau} + \frac{1}{N} \sum_{i=1}^{N} \log \frac{1}{N}  {\sum_{j=1}^{N} e^ {-d \big( \mathbf{f}_x(\mathbf{x}_i) , \mathbf{f}_t(\mathbf{t}_j) \big) /\tau} } \\
     &+  \frac{1}{N} \sum_{i=1}^{N} \log \frac{1}{N}  {\sum_{j=1}^{N} e^ {-d \big( \mathbf{f}_x(\mathbf{x}_j) , \mathbf{f}_t(\mathbf{t}_i) \big) /\tau}} + \frac{2}{N} \sum_{i=1}^{N} \log N \Big) - 2\log N \\
& = 2\E\limits_{  (\mathbf{x},\mathbf{t}) \sim p(\mathbf{x},\mathbf{t}) }{\big[ d \big( \mathbf{f}_x(\mathbf{x}) , \mathbf{f}_t(\mathbf{t}) \big) /\tau \big]}  + \E\limits_{\mathbf{x} \sim p(\mathbf{x})} \Big[ \log \E\limits_{\mathbf{t} \sim p(\mathbf{t})} \big[  e^ {-d \big( \mathbf{f}_x(\mathbf{x}) , \mathbf{f}_t(\mathbf{t}) \big) /\tau}  \big]\Big] \\
&+ \E\limits_{\mathbf{t} \sim p(\mathbf{t})} \Big[ \log \E\limits_{\mathbf{x} \sim p(\mathbf{x})} \big[e^ {-d \big( \mathbf{f}_x(\mathbf{x}) , \mathbf{f}_t(\mathbf{t}) \big) /\tau} \big]\Big]. \notag 
\end{aligned}
\end{equation}

\end{proof}

\section{Relation with Recovering All Information} \label{app: relation}
In this section, we show 
\begin{align}
&\E\limits_{\mathbf{x} \sim p(\mathbf{x})} \Big[ \log \E\limits_{\mathbf{t} \sim p(\mathbf{t})} \big[  e^ {-d \big( \mathbf{f}_x(\mathbf{x}) , \mathbf{f}_t(\mathbf{t}) \big) /\tau}  \big]\Big] + \E\limits_{\mathbf{t} \sim p(\mathbf{t})} \Big[ \log \E\limits_{\mathbf{x} \sim p(\mathbf{x})} \big[e^ {-d \big( \mathbf{f}_x(\mathbf{x}) , \mathbf{f}_t(\mathbf{t}) \big) /\tau} \big]\Big] \notag \\
\approx &  - H\big(p(\mathbf{f}_x(\mathbf{x})), p(\mathbf{f}_t(\mathbf{t}))\big)  - H\big(p(\mathbf{f}_t(\mathbf{t})), p(\mathbf{f}_x(\mathbf{x}))\big). \notag
\end{align}
Considering the symmetry evident in both the left and right sides of the equation, let us focus our attention on the initial term on the left and its corresponding counterpart on the right.
\begin{align}
    &\E\limits_{\mathbf{x} \sim p(\mathbf{x})} \Big[ \log \E\limits_{\mathbf{t} \sim p(\mathbf{t})} \big[  e^ {-d \big( \mathbf{f}_x(\mathbf{x}) , \mathbf{f}_t(\mathbf{t}) \big) /\tau}  \big]\Big] \notag \\
    =&\lim_{N \rightarrow \infty}\frac{1}{N} \sum_{i=1}^{N} \log \frac{1}{N} {\sum_{j=1}^{N}  e^ {-d \big( \mathbf{f}_x(\mathbf{x}_i) , \mathbf{f}_t(\mathbf{t}_j) \big) /\tau} } \label{app: ralation_1} \\
    \approx&\lim_{N \rightarrow \infty}\frac{1}{N} \sum_{i=1}^{N} \log  {p_{\text{KDE}}(\mathbf{f}_x(\mathbf{x}_i)) } + \log Z_{\text{KDE}} \label{app: ralation_2} \\
    =&-H(p(\mathbf{f}_x(\mathbf{x}), p(\mathbf{f}_t(\mathbf{t})))   + \log Z_{\text{KDE}}, \label{app: ralation_3} 
\end{align}
Transitioning from Eq. \eqref{app: ralation_1} to Eq. \eqref{app: ralation_2}, we employ kernel density estimation, wherein the choice of kernel is influenced by the distance metric used. For instance, on a hypersphere, a von Mises-Fisher kernel is suitable, whereas on convex bodies, a Laplace kernel aligns well with the L1 norm. In this context, $\log Z_{\text{KDE}}$ represents the normalization constant associated with the kernel. The inherent symmetry in this setup allows us to logically deduce the equation. Note that since here the bandwidth $\tau$ can be optimized in MMCL, if true distribution is the same as the chosen kernel, Eq. \eqref{app: ralation_2} is equal to Eq. \eqref{app: ralation_1}, \ie, $\approx$ in Eq. \eqref{app: ralation_2} can be $=$. Under certain conditions the kernel density estimation will converge to the real distribution, in that case $\approx$ in Eq. \eqref{app: ralation_2} can also be $=$ \citep{silverman2018density}. Specifically, kernel density estimation converges to the true density $p(\mathbf{f}_x(\mathbf{x}))$ under the following assumptions:
\begin{itemize}
    \item The kernel $K_\tau(\mathbf{z}, \mathbf{z}') = e^{-d(\mathbf{z}, \mathbf{z}')/\tau}$ is a smooth, symmetric density function (e.g., Gaussian-like).
    \item The number of samples $N \to \infty$, and the bandwidth $\tau \to 0$, such that $N \tau^d \to \infty$.
    \item The true density $p(\mathbf{f}_t(\mathbf{t}))$ is smooth (e.g., twice differentiable with bounded second derivatives).
\end{itemize}
These conditions ensure that $p_{\text{KDE}}(\mathbf{f}_x(\mathbf{x}_i))$ converges to the true density.

\section{The Proof of identifiability on hypersphere}
\subsection{The Proof of Theorem \ref{theory: sphere}}\label{app: proofhyper}
\renewcommand{\thetheorem}{\arabic{theorem}.1}
\setcounter{theorem}{3} 
\begin{theorem}\label{app: theory: sphere}
($\mathcal{L}$ converges to the symmetric cross-entropy) 
Under the assumptions defined in Eq. \eqref{eq: gener} for the proposed latent partial causal model, the necessary condition $\mathbf{f}_x \circ \mathbf{g}_x = \mathbf{f}_t \circ \mathbf{g}_t$, denoted as $\mathbf{h}$, for the optimal normalized multimodal contrastive loss given by Eq. \eqref{eq: asyloss} leads to the following reduction of the loss itself:
\begin{align}
 \lim_{N \rightarrow \infty} \mathcal{L} - 2\log N  =  \E\limits_{  \mathbf{z}_x \sim p(\mathbf{z}_x) }{\big[ H ( p(\mathbf{z}_t|\mathbf{z}_x),q_{\mathbf{h}}(\mathbf{z}_t|\mathbf{z}_x)) \big]} +\E\limits_{  \mathbf{z}_t \sim p(\mathbf{z}_t) }{\big[ H ( p(\mathbf{z}_x|\mathbf{z}_t),q_{\mathbf{h}}(\mathbf{z}_x|\mathbf{z}_t)) \big]} 
 \label{app: eq: identonsphere} 
\end{align}
where $H$ is the cross entropy, the conditional distributions $q_\mathbf{h}(\mathbf{z}_t|\mathbf{z}_x)$ and $q(\mathbf{z}_x|\mathbf{z}_t)$ are parameterized by the following:
\begin{align}
&q_\mathbf{h}(\mathbf{z}_x|\mathbf{z}_t) = C_q(\mathbf{z}_t)^{-1} e ^ {(\mathbf{h}(\mathbf{z}_x)^{T} \mathbf{h}(\mathbf{z}_t)/\tau)}, \label{app: eq: postzx} \\
&q_\mathbf{h}(\mathbf{z}_t|\mathbf{z}_x) = C_q(\mathbf{z}_x)^{-1} e ^{(\mathbf{h}(\mathbf{z}_t)^{T} \mathbf{h}(\mathbf{z}_x)/\tau)}, \label{app: eq: postzt} 
\end{align}
with 
\begin{align}
&C_q(\mathbf{z}_t) = \int e ^{(\mathbf{h}(\mathbf{z}_x)^{T} \mathbf{h}(\mathbf{z}_t)/\tau)} \mathrm{d} \mathbf{z}_x, \notag \\
&C_q(\mathbf{z}_x) = \int e ^{(\mathbf{h}(\mathbf{z}_x)^{T} \mathbf{h}(\mathbf{z}_t)/\tau)} \mathrm{d} \mathbf{z}_t. \notag 
\end{align}
\end{theorem}

To proof Theorem \ref{theory: sphere}, we first introduce the following Lemma.

\begin{lemma}\label{app: lemma: hyper}
    Consider the unit hypersphere space, given uniform prior $p(\mathbf{z}_x)$, $p(\mathbf{z}_x)=|\mathcal{Z}|^{-1}$ where $\mathcal{Z} \subseteq \mathbb{R}^M$ denotes the space of $\mathbf{z}_x$, and conditional distribution $p(\mathbf{z}_t|\mathbf{z}_x)=C_p(k)\exp{(k\mathbf{z}_x^{T}\mathbf{z}_t)}$, $p(\mathbf{z}_t)$ follows a uniform distribution.
\end{lemma}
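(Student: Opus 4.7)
The plan is to compute the marginal density $p(\mathbf{z}_t)$ directly by integrating the joint density over $\mathbf{z}_x$, and then to show that the resulting integral is independent of $\mathbf{z}_t$. Explicitly, starting from
\begin{equation*}
p(\mathbf{z}_t) \;=\; \int_{\mathcal{Z}} p(\mathbf{z}_t \mid \mathbf{z}_x)\, p(\mathbf{z}_x)\, \mathrm{d}\mathbf{z}_x \;=\; \frac{C_p(k)}{|\mathcal{Z}|} \int_{\mathbb{S}^{M-1}} \exp\!\bigl(k\, \mathbf{z}_x^{T} \mathbf{z}_t\bigr)\, \mathrm{d}\mathbf{z}_x,
\end{equation*}
the task reduces to showing that $I(\mathbf{z}_t) \defeq \int_{\mathbb{S}^{M-1}} \exp(k\, \mathbf{z}_x^{T} \mathbf{z}_t)\, \mathrm{d}\mathbf{z}_x$ does not depend on $\mathbf{z}_t$ for $\mathbf{z}_t \in \mathbb{S}^{M-1}$.

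The main tool is the rotational invariance of the surface (Haar) measure on the unit hypersphere: for any orthogonal matrix $R \in O(M)$, the pushforward of $\mathrm{d}\mathbf{z}_x$ under $\mathbf{z}_x \mapsto R\mathbf{z}_x$ equals $\mathrm{d}\mathbf{z}_x$, and $\mathbb{S}^{M-1}$ is mapped to itself. The key step is to fix any two points $\mathbf{z}_t, \mathbf{z}_t' \in \mathbb{S}^{M-1}$, pick an orthogonal $R$ with $R \mathbf{z}_t = \mathbf{z}_t'$ (such an $R$ always exists since $O(M)$ acts transitively on the sphere), and then change variables via $\mathbf{u} = R^{T} \mathbf{z}_x$ inside $I(\mathbf{z}_t')$:
\begin{equation*}
I(\mathbf{z}_t') \;=\; \int_{\mathbb{S}^{M-1}} \exp\!\bigl(k\, \mathbf{z}_x^{T} R \mathbf{z}_t\bigr)\, \mathrm{d}\mathbf{z}_x \;=\; \int_{\mathbb{S}^{M-1}} \exp\!\bigl(k\, \mathbf{u}^{T} \mathbf{z}_t\bigr)\, \mathrm{d}\mathbf{u} \;=\; I(\mathbf{z}_t).
\end{equation*}
Thus $I$ is constant on $\mathbb{S}^{M-1}$, so $p(\mathbf{z}_t)$ is constant; combined with the fact that $p$ integrates to $1$ over $\mathbb{S}^{M-1}$, this forces $p(\mathbf{z}_t) = |\mathcal{Z}|^{-1}$, \ie{} $\mathbf{z}_t$ is uniform on the hypersphere.

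I do not expect any serious obstacle here: the only subtlety is justifying the change of variables, which amounts to invoking the standard fact that the canonical surface measure on $\mathbb{S}^{M-1}$ is $O(M)$-invariant (equivalently, it is the unique normalized $O(M)$-invariant Radon measure on the sphere). The normalization constant $C_p(k)$ is then automatically consistent, since integrating $p(\mathbf{z}_t \mid \mathbf{z}_x)$ over $\mathbf{z}_t$ yields a $\mathbf{z}_x$-independent constant by the same rotational-symmetry argument applied with the roles of $\mathbf{z}_x$ and $\mathbf{z}_t$ swapped, which fixes $C_p(k)^{-1} = \int_{\mathbb{S}^{M-1}} \exp(k\, \mathbf{z}_x^{T} \mathbf{z}_t)\, \mathrm{d}\mathbf{z}_t$.
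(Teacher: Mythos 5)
Your proposal is correct and takes essentially the same route as the paper: marginalize over $\mathbf{z}_x$ and observe that the remaining integral over the hypersphere is independent of $\mathbf{z}_t$, hence constant, hence equal to $|\mathcal{Z}|^{-1}$. The paper compresses the key step into the phrase ``due to the unit hypersphere space, $\int\exp(k\mathbf{z}_x^{T}\mathbf{z}_t)\,\mathrm{d}\mathbf{z}_x = C_p(k)^{-1}$,'' which is exactly the $O(M)$-invariance-plus-normalization argument you spell out explicitly.
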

\begin{proof}
By Bayesian theorem, $p(\mathbf{z}_t) = \int p(\mathbf{z}_t|\mathbf{z}_x)p(\mathbf{z}_x)\mathrm{d}\mathbf{z}_x=|\mathcal{Z}|^{-1}\int p(\mathbf{z}_t|\mathbf{z}_x) \mathrm{d}\mathbf{z}_x=|\mathcal{Z}|^{-1}C_p(k)\int\exp{(k\mathbf{z}_x^{T}\mathbf{z}_t)}\mathrm{d}\mathbf{z}_x$, then due to the unit hypersphere space, we have $\int\exp{(k\mathbf{z}_x^{T}\mathbf{z}_t)}\mathrm{d}\mathbf{z}_x = C_p(k)^{-1}$. As a result, we obtain $p(\mathbf{z}_t)=|\mathcal{Z}|^{-1}$.
\end{proof}

\begin{lemma}\label{app: lemma: hyper1}
The normalized multimodal contrastive loss in Eq. \eqref{eq: asyloss} has an optimal global solution of 0, which can be attained under the following conditions:
 \begin{itemize}
     \item $\mathbf{h}_x(\mathbf{m}_x,\mathbf{z}_x)= \mathbf{h}_t(\mathbf{m}_t,\mathbf{z}_t)$ \text{almost surely, for pair} $\big((\mathbf{m}_x,\mathbf{z}_x), (\mathbf{m}_t,\mathbf{z}_t)\big)$, \quad \text{(C1)},
     \item $\mathbf{h}_x$ and $\mathbf{h}_t$ map $(\mathbf{m}_x,\mathbf{z}_x)$ and $(\mathbf{m}_t,\mathbf{z}_t)$, respectively, to uniform variables on hypersphere,\quad \text{(C2)},
 \end{itemize}
\end{lemma}
\begin{proof}
First, it is well known that traditional contrastive loss in single modality has an optimal global solution of $\log N$ \citep{oord2018representation,tian2020contrastive}, as a result, the multimodal contrastive loss Eq. \ref{eq: clloss} has an optimal global solution of $2\log N$. For completeness, let us focus on the first term in Eq. \ref{eq: clloss}:
\begin{align}
    -\frac{1}{N}\sum_{i=1}^{N} \log \frac{e^ {-d \big( \mathbf{f}_x(\mathbf{x}_i) , \mathbf{f}_t(\mathbf{t}_i) \big) /\tau}}{\sum_{j=1}^{N} e^ {-d \big( \mathbf{f}_x(\mathbf{x}_i) , \mathbf{f}_t(\mathbf{t}_j) \big) /\tau} } ,
\end{align}
Under optimal contrastive learning conditions, the distance for positive pairs satisfies: $e^ {-d \big( \mathbf{f}_x(\mathbf{x}_i) , \mathbf{f}_t(\mathbf{t}_i) \big) /\tau}=1$, for negative pairs $(\mathbf{x}_i,\mathbf{x}_j)$ where $i \neq j$: $e^ {-d \big( \mathbf{f}_x(\mathbf{x}_i) , \mathbf{f}_t(\mathbf{t}_j) \big) /\tau}=\epsilon$, where $\epsilon$ is a small value. As a result, for each $i$, the denominator can be expressed as: $1 + (N-1)\epsilon$. Therefore, the first term in Eq. \ref{eq: clloss} reduces to : $-\frac{1}{N}\sum_{i=1}^{N} \log \frac{1}{1 + (N-1)\epsilon}$. Clearly, when $N$ is large, the first term in Eq. \ref{eq: clloss} equals to $\log N$. Given that the second term is symmetric, we conclude that Eq. \ref{eq: clloss} has an optimal global solution of $2\log N$. Therefore, Eq. \ref{app: eq: asyloss} achieves a global optimal solution of 0. Moreover, this optimum is unique up to isometries and permutations. 
Minimizing the loss requires each positive pair to dominate its softmax denominator, which is only achieved when their embeddings are maximally aligned. Simultaneously, negative pairs must be mapped as far apart as possible under the bounded metric to minimize their influence. This configuration, tight positive alignment and maximal negative separation, is geometrically rigid: any deviation increases the loss. Thus, except for distance-preserving transformations and index permutations, the solution is unique. Achieving the global minimum of Eq. \ref{app: eq: asyloss} therefore necessitates maximizing the alignment of positive pairs. This occurs if and only if  $\mathbf{h}_x(\mathbf{m}_x,\mathbf{z}_x)= \mathbf{h}_t(\mathbf{m}_t,\mathbf{z}_t)$
\text{almost surely, for real pair} $\big((\mathbf{m}_x,\mathbf{z}_x), (\mathbf{m}_t,\mathbf{z}_t)\big)$, ({marked as (C1)}). Thus, we obtain a minimum solution of 0 for the first term. Next, considering the remaining two terms in Eq. \ref{app: eq: asyloss}, as detailed in Appendix \ref{app: relation}, we see an equivalent expression: $-H(p(\mathbf{f}_x(\mathbf{x}), p(\mathbf{f}_t(\mathbf{t})))-H(p(\mathbf{f}_x(\mathbf{x}), p(\mathbf{f}_t(\mathbf{t})))   + 2\log Z_{\text{KDE}}$. When both $\mathbf{h}_x$ and $\mathbf{h}_t$ map $(\mathbf{m}_x,\mathbf{z}_x)$ and $(\mathbf{m}_t,\mathbf{z}_t)$, respectively, to uniform variables on hypersphere ({marked as (C2)}), it reduces to $-2H(p(\mathbf{f}_x(\mathbf{x})) + 2\log Z_{\text{KDE}} $. Note that the entropy of a uniform distribution on the hypersphere $\mathbb{S}^{M-1}$ is $\log(\frac{2\pi^{M/2}}{\Gamma(M/2)}) $, where $\Gamma$ is the gamma function. Together with the fact that the normalization constant of uniform distribution on hypersphere is $\log(\frac{2\pi^{M/2}}{\Gamma(M/2)})$ (i.e., $\log Z_{\text{KDE}}$), we arrive at the optimal solution of 0 for the last two terms.
\end{proof}

\paragraph{Proof sketch} The proof of Theorem \ref{theory: sphere} hinges on demonstrating the equality between the right-hand side of Eq. \eqref{app: eq: identonsphere} and Eq. \eqref{app: eq: asyloss}. Let us define $\mathbf{h}_x = \mathbf{f}_x \circ \mathbf{g}_x$ and $\mathbf{h}_t = \mathbf{f}_t \circ \mathbf{g}_t$. In Step I, using Lemma \ref{app: lemma: hyper1}, we show that (1) $\mathbf{f}_x \circ \mathbf{g}_x = \mathbf{f}_t \circ \mathbf{g}_t$, and (2) they are independent of the modality-specific variables $\mathbf{m}_x$ and $\mathbf{m}_t$. In Step II, by defining $\mathbf{h}=\mathbf{f}_x \circ \mathbf{g}_x = \mathbf{f}_t \circ \mathbf{g}_t$ and applying both the generative model from Eq. \eqref{eq: gener} and the inference model from Eqs. \eqref{app: eq: postzx}-\eqref{app: eq: postzt}, we establish the theorem.

\paragraph{Step I} Consider C1 in Lemma \ref{app: lemma: hyper1}, e.g., $\mathbf{h}_x(\mathbf{m}_x,\mathbf{z}_x)= \mathbf{h}_t(\mathbf{m}_t,\mathbf{z}_t)$ {almost surely, for pair} $\big((\mathbf{m}_x,\mathbf{z}_x), (\mathbf{m}_t,\mathbf{z}_t)\big)$, by differentiating it with respect to $\mathbf{m}_x$, we have:
\begin{align}
    \frac{\partial \mathbf{h}_x(\mathbf{m}_x,\mathbf{z}_x)}{\partial \mathbf{m}_x}=\frac{\partial \mathbf{h}_t(\mathbf{m}_t,\mathbf{z}_t)}{\partial \mathbf{m}_x}=0, \label{app: eq: partx}
\end{align}
, due to the independence between $\mathbf{m}_x$ and $(\mathbf{m}_t,\mathbf{z}_t)$. Similarly, by differentiating it with respect to $\mathbf{m}_t$, we have:
\begin{align}
    \frac{\partial \mathbf{h}_t (\mathbf{m}_t,\mathbf{z}_t)}{\partial \mathbf{m}_t}=\frac{\partial \mathbf{h}_x(\mathbf{m}_x,\mathbf{z}_x)}{\partial \mathbf{m}_t}=0. \label{app: eq: partt}
\end{align}
Based on Eqs. \eqref{app: eq: partx} and \eqref{app: eq: partt}, we conclude that both $\mathbf{h}_x$ and $\mathbf{h}_t$ are independent of the modality-specific variables $\mathbf{m}_x$ and $\mathbf{m}_t$, respectively, \ie, $\mathbf{h}_x (\mathbf{m}_x,\mathbf{z}_x)  = \mathbf{h}_x (\mathbf{z}_x)$ and $\mathbf{h}_t(\mathbf{m}_t,\mathbf{z}_t)  = \mathbf{h}_t(\mathbf{z}_t)$. As a result, we have $\mathbf{h}_x(\mathbf{z}_x)  = \mathbf{h}_t(\mathbf{z}_t)$, for all real pairs $(\mathbf{z}_x,\mathbf{z}_t)$ sampled from the conditional distribution $p(\mathbf{z}_t|\mathbf{z}_x)$ defined in Eq. \eqref{eq: gener}. Note that this expression also holds true for $\mathbf{z}_t=\mathbf{z}_x$ (e.g., when $\mathbf{z}_t$ is sampled with the same value as $\mathbf{z}_x$), which implies $\mathbf{h}_x(\mathbf{z}_x) = \mathbf{h}_t(\mathbf{z}_x)$. As a result, we can obtain: $\mathbf{h}_x  = \mathbf{h}_t$.

\paragraph{Step II} According to the results above: $\mathbf{h}_x (\mathbf{m}_x,\mathbf{z}_x)  = \mathbf{h}_x (\mathbf{z}_x)$, $\mathbf{h}_t(\mathbf{m}_t,\mathbf{z}_t)  = \mathbf{h}_t(\mathbf{z}_x)$, and $\mathbf{h}_x= \mathbf{h}_t$ from Step I, by defining $\mathbf{h} \defeq \mathbf{h}_x = \mathbf{h}_t$ , we can rewrite Eq. \eqref{app: eq: asyloss} as:
\begin{align}
2\E\limits_{ (\mathbf{z}_x,\mathbf{z}_t) \sim p(\mathbf{z}_x,\mathbf{z}_t) }{\big[ d\big( \mathbf{h}(\mathbf{z}_x), \mathbf{h}(\mathbf{z}_t) \big) /\tau \big]}  &+ \E\limits_{\mathbf{z}_x \sim p(\mathbf{z}_x)} \Big[ \log \E\limits_{\mathbf{z}_t \sim p(\mathbf{z}_t)} \big[  e^ { -d \big( \mathbf{h}(\mathbf{z}_x), \mathbf{h}(\mathbf{z}_t) \big) /\tau}  \big]\Big] \notag
\\&+ \E\limits_{\mathbf{z}_t \sim p(\mathbf{z}_t)} \Big[ \log \E\limits_{\mathbf{z}_x \sim p(\mathbf{z}_x)} \big[e^ {-d \big( \mathbf{h}(\mathbf{z}_x),\mathbf{h}(\mathbf{z}_t) \big) /\tau} \big]\Big]. \label{app: eq: losshyperz}
\end{align}
We then connect the right-hand side of Eq. \eqref{app: eq: identonsphere} with Eq. \eqref{app: eq: losshyperz}. To this end, since the two terms in the right-hand side of Eq. \eqref{app: eq: identonsphere} are symmetrical, we focus on one of the two terms for convenience, \eg, $\E\limits_{  \mathbf{z}_x \sim p(\mathbf{z}_x) }{\big[ H ( p(\mathbf{z}_t|\mathbf{z}_x)),q_{\mathbf{h}}(\mathbf{z}_t|\mathbf{z}_x)) \big]} $. Based on Lemma \ref{app: lemma: hyper}, it can be shown that: 
\begin{align}
&\E\limits_{  \mathbf{z}_x \sim p(\mathbf{z}_x) }{\big[ H ( p(\mathbf{z}_t|\mathbf{z}_x)),q_{\mathbf{h}}(\mathbf{z}_t|\mathbf{z}_x)) \big]} \\
=&\E\limits_{  \mathbf{z}_x \sim p(\mathbf{z}_x) }{\big[ \E \limits_{  \mathbf{z}_t \sim p(\mathbf{z}_t|\mathbf{z}_x) } [-\log q_{\mathbf{h}}(\mathbf{z}_t|\mathbf{z}_x)]
\big]} \\
=& \E\limits_{ (\mathbf{z}_x,\mathbf{z}_t) \sim p(\mathbf{z}_x,\mathbf{z}_t) }{\big[ - \mathbf{h}(\mathbf{z}_x)^{T} \mathbf{h}(\mathbf{z}_t)  /\tau + \log C_q(\mathbf{z}_x)\big]}  \\
=& \E\limits_{ (\mathbf{z}_x,\mathbf{z}_t) \sim p(\mathbf{z}_x,\mathbf{z}_t) }{\big[ - \mathbf{h}(\mathbf{z}_x)^{T} \mathbf{h}(\mathbf{z}_t)  /\tau \big] + \E\limits_{ (\mathbf{z}_x) \sim p(\mathbf{z}_x) }[\log C_q(\mathbf{z}_x)]}  \\
=& \E\limits_{ (\mathbf{z}_x,\mathbf{z}_t) \sim p(\mathbf{z}_x,\mathbf{z}_t) }{\big[ - \mathbf{h}(\mathbf{z}_x)^{T} \mathbf{h}(\mathbf{z}_t)  /\tau \big] + \E\limits_{ (\mathbf{z}_x) \sim p(\mathbf{z}_x)}[\log \int e ^{(\mathbf{h}(\mathbf{z}_x)^{T} \mathbf{h}(\mathbf{z}_t)/\tau)} \mathrm{d} \mathbf{z}_x]}  \label{app: eq: hyperz}
\end{align}
Since $p(\mathbf{z}_x)=|\mathcal{Z}|^{-1}$, and $p(\mathbf{z}_t)=|\mathcal{Z}|^{-1}$ by Lemma \ref{app: lemma: hyper}, Eq. \eqref{app: eq: hyperz} simplifies to: 
\begin{align}
    =-\E\limits_{ (\mathbf{z}_x,\mathbf{z}_t) \sim p(\mathbf{z}_x,\mathbf{z}_t) }{\big[ \big( \mathbf{h}(\mathbf{z}_x)^{T} \mathbf{h}(\mathbf{z}_t) \big) /\tau \big]}  + \E\limits_{\mathbf{z}_x \sim p(\mathbf{z}_x)} \Big[ \log \E\limits_{\mathbf{z}_t \sim p(\mathbf{z}_t)} \big[  e^ { \big( \mathbf{h}(\mathbf{z}_x)^{T} \mathbf{h}(\mathbf{z}_t) \big) /\tau}  \big]\Big]  +\log |\mathcal{Z}| \label{app: eq: proof111}
\end{align}
On hypersphere space with radius $r$, due to $\lVert \mathbf{h}(\mathbf{z}_x)- \mathbf{h}(\mathbf{z}_t) \rVert = 2r - 2\mathbf{h}(\mathbf{z}_x)^{T} \mathbf{h}(\mathbf{z}_t)$, Eq. \ref{app: eq: proof111} simplifies to:
\begin{align}
    =\E\limits_{ (\mathbf{z}_x,\mathbf{z}_t) \sim p(\mathbf{z}_x,\mathbf{z}_t) }{\big[ d\big( \mathbf{h}(\mathbf{z}_x), \mathbf{h}(\mathbf{z}_t) \big) /\tau \big]}  + \E\limits_{\mathbf{z}_x \sim p(\mathbf{z}_x)} \Big[ \log \E\limits_{\mathbf{z}_t \sim p(\mathbf{z}_t)} \big[  e^ { -d\big( \mathbf{h}(\mathbf{z}_x) \mathbf{h}(\mathbf{z}_t) \big) /\tau}  \big]\Big] \label{app: eq: proof1}
\end{align}

Similarly, for the second term in the right-hand side of Eq. \eqref{app: eq: identonsphere}, we can proof that:

\begin{align}
\E\limits_{  (\mathbf{z}_t) \sim p(\mathbf{z}_t) }{\big[ H ( p(\mathbf{z}_x|\mathbf{z}_t)),q_{\mathbf{h}}(\mathbf{z}_x|\mathbf{z}_t)) \big]} = & \E\limits_{ (\mathbf{z}_x,\mathbf{z}_t) \sim p(\mathbf{z}_x,\mathbf{z}_t) }{\big[d \big( \mathbf{h}(\mathbf{z}_x), \mathbf{h}(\mathbf{z}_t) \big) /\tau \big]} \notag \\
& + \E\limits_{\mathbf{z}_t \sim p(\mathbf{z}_t)} \Big[ \log \E\limits_{\mathbf{z}_x \sim p(\mathbf{z}_x)} \big[e^ {-d\big( \mathbf{h}(\mathbf{z}_x),\mathbf{h}(\mathbf{z}_t) \big) /\tau} \big]\Big] +\log |\mathcal{Z}|.  \label{app: eq: proof2}
\end{align}
By combining Eq. \eqref{app: eq: proof1} and Eq. \eqref{app: eq: proof2}, we can conclude the proof.

\subsection{Identifiability result on hypersphere} \label{app: idenhyper}
Theorem \ref{theory: sphere} represents a adaptation of Theorem 1 from \citep{zimmermann2021contrastive} in the context of multi-modal setting. Specifically, within the confines of a single-modal framework, Theorem \ref{theory: sphere} is consistent with the findings presented in Theorem 1 in \citep{zimmermann2021contrastive}. Consequently, this alignment allows us to employ Propositions 1 and 2 from \citep{zimmermann2021contrastive} to demonstrate that the global minimization of the objective outlined in Eq. \eqref{eq: identonsphere}, as specified in Theorem \ref{theory: sphere}, identifies the latent variables $\mathbf{z}_x$, as well as $\mathbf{z}_x$, up to linear transformations. For completeness, a brief proof is provided herein, with comprehensive details available in the original work. Clearly, the global minimum of the cross-entropy between two distributions is reached if they match by value and have the same support. Therefore, for the optimal solution of the objective loss Eq. \eqref{app: eq: identonsphere} in Theorem \ref{theory: sphere}, we have: 
\begin{align}
     p(\mathbf{z}_t|\mathbf{z}_x)=q_{\mathbf{h}}(\mathbf{z}_t|\mathbf{z}_x),
\end{align}
This expression also holds true for $\mathbf{z}_t = \mathbf{z}_x$; additionally using that $\mathbf{h}$ maps from a unit hypersphere to one with radius $\sqrt{\tau k}$, we have: 
\begin{align}
    & C_p ^{-1} e ^{(k\mathbf{z}^{T}_x  \mathbf{z}_x)}  = C_q(\mathbf{z}_x)^{-1} e ^{(\mathbf{h}(\mathbf{z}_x)^{T} \mathbf{h}(\mathbf{z}_x)/\tau)}, \notag \\
    \Leftrightarrow &C_p = C_q(\mathbf{z}_x)
\end{align}
As the normalization constants are identical we get for all $\mathbf{z}_x, \mathbf{z}_t$,
\begin{align}
{k\mathbf{z}^{T}_x  \mathbf{z}_t}  =   {\mathbf{h}(\mathbf{z}_x)^{T} \mathbf{h}(\mathbf{z}_t)/\tau}, 
\end{align}
here we can see that $\mathbf{h}$ maintains the dot product, which implies that $\mathbf{h}$ must be an orthogonal linear transformation by using Proposition 2 in \citet{zimmermann2021contrastive}. As a result, Theorem \ref{theory: sphere} supports the conclusion that the latent variables $\mathbf{z}_x$ (and $\mathbf{z}_t$) can be identified up to an orthogonal linear transformation, \ie, the recovered latent variables $\mathbf{f}_x(\mathbf{x})$ (note that $\mathbf{h}(\mathbf{z}_x) = \mathbf{f}_{\mathbf{x}}(\mathbf{x})$), obtained by the minimization of Eq. \eqref{eq: identonsphere}, is linearly related to the true $\mathbf{z}_x$ as follows: $\mathbf{f}_x(\mathbf{x}) = \mathbf{A} \mathbf{z}_x + \mathbf{c}$, where $\mathbf{A}$ represents an orthogonal matrix, and $\mathbf{c}$ is a constant vector.

\section{The Proof of identifiability on convex bodies} 
\subsection{The Proof of Theorem \ref{theory: convex}}\label{app: proofconvex}
\renewcommand{\thetheorem}{\arabic{theorem}.2}
\setcounter{theorem}{3} 
\begin{theorem}
($\mathcal{L}$ converges to the symmetric cross-entropy) Under the assumptions defined in Eq. \eqref{eq: gener_convex} for the proposed latent partial causal model, the necessary condition $\mathbf{f}_x \circ \mathbf{g}_x = \mathbf{f}_t \circ \mathbf{g}_t$, denoted as $\mathbf{h}$, for the optimal normalized multimodal contrastiveloss given by Eq. \eqref{eq: asyloss} leads to the following reduction of the loss itself:
\begin{align}
 \lim_{N \rightarrow \infty} \mathcal{L} - 2\log N  = \E\limits_{  \mathbf{z}_x \sim p(\mathbf{z}_x) }{\big[ H ( p(\mathbf{z}_t|\mathbf{z}_x)),q_{\mathbf{h}}(\mathbf{z}_t|\mathbf{z}_x)) \big]} +\E\limits_{  (\mathbf{z}_t) \sim p(\mathbf{z}_t) }{\big[ H ( p(\mathbf{z}_x|\mathbf{z}_t)),q_{\mathbf{h}}(\mathbf{z}_x|\mathbf{z}_t)) \big]} 
 \label{app: eq: identonconvex} 
\end{align}
where $H$ is the cross entropy, the conditional distributions $q_\mathbf{h}(\mathbf{z}_t|\mathbf{z}_x)$ and $q(\mathbf{z}_x|\mathbf{z}_t)$ are parameterized by the following:
\begin{align}
&q_\mathbf{h}(\mathbf{z}_x|\mathbf{z}_t) = C_q(\mathbf{z}_t)^{-1} e ^ {-\delta(\mathbf{h}(\mathbf{z}_x), \mathbf{h}(\mathbf{z}_t))/\tau}, \label{app: eq: posterzx:c} \\
&q_\mathbf{h}(\mathbf{z}_t|\mathbf{z}_x) = C_q(\mathbf{z}_x)^{-1} e ^ {-\delta(\mathbf{h}(\mathbf{z}_x), \mathbf{h}(\mathbf{z}_t))/\tau}, \label{app: eq: posterzt:c}
\end{align}
\end{theorem}
with 
\begin{align}
&C_q(\mathbf{z}_t) = \int e ^ {-\delta(\mathbf{h}(\mathbf{z}_x), \mathbf{h}(\mathbf{z}_t))/\tau} \mathrm{d} \mathbf{z}_x, \notag \\
&C_q(\mathbf{z}_x) = \int e ^ {-\delta(\mathbf{h}(\mathbf{z}_x), \mathbf{h}(\mathbf{z}_t))/\tau} \mathrm{d}{\mathbf{z}_t}. \notag 
\end{align}

Similar to the proof \ref{app: proofhyper}, we first introduce the following Lemma.
\begin{lemma}\label{app: lemma: convex}
\newcommand{\zb}{\mathbf{z}}
For random variables $\zb_x \in \mathcal{Z}_c$ and $\zb_t = \mathcal{Z}_c$, assume that $p(\zb_x) = 1/|\mathcal{Z}_c|$ if $\zb_x\in \mathcal{Z}_c$ and $0$ otherwise, and assume that conditional distribution $p(\mathbf{z}_t|\mathbf{z}_x)=C(\mathbf{z}_x)\exp{\big( -\delta(\mathbf{z}_x,\mathbf{z}_t)/\lambda\big)}$, where $\delta$ is a symmetric metric induced by a norm, then  $p(\mathbf{z}_t)$ converges to uniform distribution on $\mathcal{Z}_c$ as $\lambda\rightarrow 0_{+}$.
\end{lemma}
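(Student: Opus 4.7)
The plan is to establish pointwise convergence $p(\mathbf{z}_t)\to |\mathcal{Z}_c|^{-1}$ for every interior point $\mathbf{z}_t\in\mathcal{Z}_c$ as $\lambda\to 0_+$, and then invoke Scheff\'e's lemma to upgrade this to $L^1$/total-variation convergence of the density $p(\cdot)$ to the uniform density on $\mathcal{Z}_c$. Marginalising $\mathbf{z}_x$ against the uniform prior gives the explicit formula
\begin{equation*}
p(\mathbf{z}_t) = \frac{1}{|\mathcal{Z}_c|}\int_{\mathcal{Z}_c} C(\mathbf{z}_x)\,e^{-\delta(\mathbf{z}_x,\mathbf{z}_t)/\lambda}\,d\mathbf{z}_x,\quad C(\mathbf{z}_x)^{-1}=\int_{\mathcal{Z}_c} e^{-\delta(\mathbf{z}_x,\mathbf{z}')/\lambda}\,d\mathbf{z}'.
\end{equation*}
The intuition is that $e^{-\delta/\lambda}$ acts as a mollifier concentrating on $\mathbf{z}_x=\mathbf{z}_t$, so both integrals scale like $\lambda^M K$, with $K := \int_{\mathbb{R}^M} e^{-\|\mathbf{u}\|}\,d\mathbf{u}$, and these factors cancel to leave $|\mathcal{Z}_c|^{-1}$.

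Concretely, I would apply the rescalings $\mathbf{u}=(\mathbf{z}_x-\mathbf{z}_t)/\lambda$ to the outer integral and $\mathbf{v}=(\mathbf{z}'-\mathbf{z}_x)/\lambda$ to the integral in $C^{-1}$, obtaining
\begin{equation*}
p(\mathbf{z}_t) = \frac{1}{|\mathcal{Z}_c|}\int_{(\mathcal{Z}_c-\mathbf{z}_t)/\lambda}\frac{e^{-\|\mathbf{u}\|}}{\int_{(\mathcal{Z}_c-\mathbf{z}_t-\lambda\mathbf{u})/\lambda}e^{-\|\mathbf{v}\|}\,d\mathbf{v}}\,d\mathbf{u}.
\end{equation*}
For any interior $\mathbf{z}_t$ and any fixed $\mathbf{u}\in\mathbb{R}^M$, the point $\mathbf{z}_t+\lambda\mathbf{u}$ eventually becomes interior, so the dilated domain $(\mathcal{Z}_c-\mathbf{z}_t-\lambda\mathbf{u})/\lambda$ increases monotonically to all of $\mathbb{R}^M$ and the inner integral tends to $K$; likewise the outer domain exhausts $\mathbb{R}^M$. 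Hence the integrand converges pointwise to $e^{-\|\mathbf{u}\|}/K$, whose integral over $\mathbb{R}^M$ equals $1$, giving the claimed limit.

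The main technical obstacle is producing a $\lambda$-independent integrable dominating function to legitimise the interchange of limit and integral, because for $\mathbf{u}$ making $\mathbf{z}_t+\lambda\mathbf{u}$ close to $\partial\mathcal{Z}_c$ the inner integral could in principle be small. I would exploit convexity: at every $\mathbf{z}\in\mathcal{Z}_c$ the nested family $(\mathcal{Z}_c-\mathbf{z})/\lambda$ increases as $\lambda$ decreases to the tangent cone of $\mathcal{Z}_c$ at $\mathbf{z}$, which is a convex cone with nonempty interior whose solid angle is bounded below by that of the sharpest corner of $\mathcal{Z}_c$ (\eg by $2^{-M}$ for a hyperrectangle). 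By compactness of $\mathcal{Z}_c$ this yields a uniform positive lower bound $c_0$ on the denominator, so $e^{-\|\mathbf{u}\|}/c_0$ serves as the required dominating function. Scheff\'e's lemma then converts the almost-everywhere convergence of the probability densities on $\mathcal{Z}_c$ into convergence in total variation, concluding that $p(\mathbf{z}_t)$ converges to the uniform distribution on $\mathcal{Z}_c$ as $\lambda\to 0_+$.
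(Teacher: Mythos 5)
Your proof is correct, but it follows a genuinely different route from the paper's. The paper's argument is a probabilistic sketch: it decomposes the normalising constant, rewrites the marginalisation integral as a (Monte Carlo) sum over draws $\mathbf{z}_{x_i}\sim p(\mathbf{z}_x)$, identifies the resulting expression as a kernel density estimate of $p(\mathbf{z}_x)$ with kernel $e^{-\delta(\cdot)/\lambda}$ and bandwidth $\lambda$, and then invokes classical KDE consistency (citing Jiang, 2017) to conclude that the estimate recovers the uniform density as $\lambda\to 0_+$. Your argument instead rescales both the outer integral and the normalising constant by $\lambda$, establishes pointwise convergence of the integrand to $e^{-\|\mathbf{u}\|}/K$ at interior points, constructs a $\lambda$-independent dominating function by exploiting convexity of $\mathcal{Z}_c$ (the tangent-cone / inscribed-ball argument for a uniform lower bound on the denominator), and finishes with dominated convergence plus Scheff\'e's lemma. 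This is more self-contained and makes the boundary control explicit, whereas the paper's version leaves it to the cited KDE result (and in fact contains slips: the prefactor of the normalising constant should be $\lambda^{-M}$ rather than $\lambda^{-1}$, the ``Monte Carlo sum'' is missing a $1/N$, and the replacement of $C'(\mathbf{z}_{x_i})$ by a constant $C'$ is asserted without justification). The trade-off is that your approach needs the convexity lemma for the uniform lower bound $c_0$, and this step is stated a bit loosely: the set $(\mathcal{Z}_c-\mathbf{z}_t-\lambda\mathbf{u})/\lambda$ is not literally nested as $\lambda\downarrow 0$ because its centre moves with $\lambda$, so ``increases monotonically'' should be replaced by the observation that for small $\lambda$ it contains balls of radius $\Theta(1/\lambda)$; likewise the claim that compactness alone gives a uniform lower bound on the tangent-cone solid angle deserves the inscribed-ball argument spelled out, since the solid angle can jump discontinuously at corners. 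Neither issue is fatal, and once tightened your proof is actually the more rigorous of the two.
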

\begin{proof}
\newcommand{\zb}{\mathbf{z}}
The proof can be done by the fact that as  $\lambda\rightarrow 0$, the condition distribution $p(\mathbf{z}_t|\mathbf{z}_x)$ converges to a delta distribution, resulting that $p(\mathbf{z}_t)=p(\mathbf{z}_x)$. More specifically, as we will let $\lambda\rightarrow 0$ in the procedure, it is notable that the normalize $C(\zb_x)$ actually depend on $\lambda$ and should be write as $C(\zb_x, \lambda)$ in a more formal way. With simple integration trick, it would be straightforward to show that  $C(\zb_x, \lambda)$ can be decomposed as $C(\zb_x, \lambda) = \frac{1}{\lambda}C'(\zb_x)$.

    By definition we have 
    \begin{equation}
        \begin{split}
            p(\zb_t) = & \int_{\zb_x\in\mathcal{Z}_c}p(\zb_x)p(\zb_t|\zb_x)\mathrm{d}\zb_x\\
            = & \int_{\zb_x\in\mathcal{Z}_c}p(\zb_x) \frac{1}{\lambda}C'(\mathbf{z}_x)\exp{\big( -\delta(\mathbf{z}_x,\mathbf{z}_t)/\lambda\big)}\mathrm{d}\zb_x\\
            = & \lim_{N\rightarrow +\infty}\sum_{i=1}^{N}\frac{1}{\lambda}C'(\mathbf{z}_{x_i})\exp{\big(- \delta(\mathbf{z}_{x_i},\mathbf{z}_t)/\lambda\big)}, \forall i, ~\zb_{x_i} \sim p(\zb_x)
        \end{split}
    \end{equation}
    then obviously we have that 
    \begin{equation}\label{eq:kde_zx}
        \begin{split}
            \lim_{\lambda \rightarrow 0_+}p(\zb_t) = & \lim_{\lambda \rightarrow 0_+}\lim_{N\rightarrow +\infty}\sum_{i=1}^{N}\frac{1}{\lambda}C'(\mathbf{z}_{x_i})\exp\big( -\delta(\mathbf{z}_{x_i},\mathbf{z}_t)/\lambda\big)\\
            = & \lim_{\lambda \rightarrow 0_+}\lim_{N\rightarrow +\infty}\sum_{i=1}^{N}\frac{1}{\lambda}C'\exp{\big( -\delta(\mathbf{z}_{x_i},\mathbf{z}_t)/\lambda\big)},
        \end{split}
    \end{equation}
    where $C'=\int_{-\infty}^{\infty}\exp\big(- \delta(\mathbf{0},\mathbf{z}_t)\big)\mathrm{d}\zb_t$.
    It is obvious that \eqref{eq:kde_zx} can be viewed as a Kernel Density Estimation over samples $\zb_{x_i}\sim p(\zb_x)$, and obviously $\lim_{\tau \rightarrow 0_+}p(\zb_t)$ will converge to $p(\zb_x)$ (which is uniform distribution) under quite mild condition (for details of the convergence, refer to \citet{jiang2017uniform}).
\end{proof}

\paragraph{Proof sketch} Similar to hypersphere, the proof of Theorem \ref{theory: convex} can be done by demonstrating that the right-hand side of Eq. \eqref{app: eq: identonconvex} is equal to the right-hand side of Eq. \eqref{app: eq: asyloss} on convex bodies. To achieve this, using Lemma \ref{app: lemma: hyper1},  we show that $\mathbf{f}_x \circ \mathbf{g}_x = \mathbf{f}_t \circ \mathbf{g}_t$, and they are independent of the modality-specific variables $\mathbf{m}_x$ and $\mathbf{m}_t$, respectively. Finally, by defining $\mathbf{h} = \mathbf{f}_x \circ \mathbf{g}_x = \mathbf{f}_t \circ \mathbf{g}_t$, and using the inference model \eqref{app: eq: posterzx:c} and \eqref{app: eq: posterzt:c}, we obtain our result.

\paragraph{Step I} On convex bodies, and define $\mathbf{h}_x = \mathbf{f}_x \circ \mathbf{g}_x$ and $\mathbf{h}_t = \mathbf{f}_t \circ \mathbf{g}_t$. Consider C1 in Lemma \ref{app: lemma: hyper1}, e.g., $\mathbf{h}_x(\mathbf{m}_x,\mathbf{z}_x)= \mathbf{h}_t(\mathbf{m}_t,\mathbf{z}_t)$ {almost surely, for pair} $\big((\mathbf{m}_x,\mathbf{z}_x), (\mathbf{m}_t,\mathbf{z}_t)\big)$. Similar to Step I in Appendix \ref{app: proofhyper}, by differentiating it with respect to $\mathbf{m}_x$ and  $\mathbf{m}_t$, respectively, we can conclude that both $\mathbf{h}_x$ and $\mathbf{h}_t$ are independent of the modality-specific variables $\mathbf{m}_x$ and $\mathbf{m}_t$, respectively, \ie, $\mathbf{h}_x (\mathbf{m}_x,\mathbf{z}_x)  = \mathbf{h}_x (\mathbf{z}_x)$ and $\mathbf{h}_t (\mathbf{m}_t,\mathbf{z}_t)  = \mathbf{h}_t (\mathbf{z}_t)$. Further, since $\mathbf{h}_x (\mathbf{z}_x)  = \mathbf{h}_t (\mathbf{z}_t)$ hold, for all real pairs $(\mathbf{z}_x,\mathbf{z}_t)$ sampled from the conditional distribution $p(\mathbf{z}_t|\mathbf{z}_x)$ defined in Eq. \eqref{eq: gener_convex}, this expression also holds true for $\mathbf{z}_t=\mathbf{z}_x$, which implies $\mathbf{h}_x (\mathbf{z}_x) = \mathbf{h}_t (\mathbf{z}_x)$. As a result, we can obtain: $\mathbf{h}_x  = \mathbf{h}_t$.

\paragraph{Step II} According to the results above: $\mathbf{h}_x (\mathbf{m}_x,\mathbf{z}_x)  = \mathbf{h}_x (\mathbf{z}_x)$, $\mathbf{h}_t (\mathbf{m}_t,\mathbf{z}_t)  = \mathbf{h}_t (\mathbf{z}_t)$, and $\mathbf{h}_x = \mathbf{h}_t$, by defining $\mathbf{h} \defeq \mathbf{f}_x \circ \mathbf{g}_x = \mathbf{f}_t \circ \mathbf{g}_t$ , we can rewrite Eq. \eqref{app: eq: asyloss} as:
\begin{align}
2\E\limits_{ (\mathbf{z}_x,\mathbf{z}_t) \sim p(\mathbf{z}_x,\mathbf{z}_t) }{\big[ d\big( \mathbf{h}(\mathbf{z}_x), \mathbf{h}(\mathbf{z}_t) \big) /\tau \big]}  & + \E\limits_{\mathbf{z}_x \sim p(\mathbf{z}_x)} \Big[ \log \E\limits_{\mathbf{z}_t \sim p(\mathbf{z}_t)} \big[  e^ { -d\big( \mathbf{h}(\mathbf{z}_x), \mathbf{h}(\mathbf{z}_t) \big) /\tau}  \big]\Big] \notag \\
& + \E\limits_{\mathbf{z}_t \sim p(\mathbf{z}_t)} \Big[ \log \E\limits_{\mathbf{z}_x \sim p(\mathbf{z}_x)} \big[e^ {-d\big( \mathbf{h}(\mathbf{z}_x),\mathbf{h}(\mathbf{z}_t) \big) /\tau} \big]\Big]. \label{app: eq: losshyperz_c}
\end{align}
We then connect the right-hand side of Eq. \eqref{app: eq: identonconvex} with Eq. \eqref{app: eq: losshyperz_c}. To this end, since the two terms in the right-hand side of Eq. \eqref{app: eq: identonconvex} are symmetrical, we focus on one of the two terms for convenience, \eg, $\E\limits_{  \mathbf{z}_x \sim p(\mathbf{z}_x) }{\big[ H ( p(\mathbf{z}_t|\mathbf{z}_x)),q_{\mathbf{h}}(\mathbf{z}_t|\mathbf{z}_x)) \big]} $. It can be shown that: 
\begin{align}
&\E\limits_{  \mathbf{z}_x \sim p(\mathbf{z}_x) }{\big[ H ( p(\mathbf{z}_t|\mathbf{z}_x)),q_{\mathbf{h}}(\mathbf{z}_t|\mathbf{z}_x)) \big]} \\
=&\E\limits_{  \mathbf{z}_x \sim p(\mathbf{z}_x) }{\big[ \E \limits_{  \mathbf{z}_t \sim p(\mathbf{z}_t|\mathbf{z}_x) } 
[-\log q_{\mathbf{h}}(\mathbf{z}_t|\mathbf{z}_x)]
\big]} \\
=& \E\limits_{ (\mathbf{z}_x,\mathbf{z}_t) \sim p(\mathbf{z}_x,\mathbf{z}_t) }{\big[ \delta (\mathbf{h}(\mathbf{z}_x), \mathbf{h}(\mathbf{z}_t))  /\tau + \log C_q(\mathbf{z}_x)\big]}  \\
=& \E\limits_{ (\mathbf{z}_x,\mathbf{z}_t) \sim p(\mathbf{z}_x,\mathbf{z}_t) }{\big[ \delta (\mathbf{h}(\mathbf{z}_x), \mathbf{h}(\mathbf{z}_t))  /\tau \big] + \E\limits_{ (\mathbf{z}_x) \sim p(\mathbf{z}_x) }[\log C_q(\mathbf{z}_x)]}  \\
=& \E\limits_{ (\mathbf{z}_x,\mathbf{z}_t) \sim p(\mathbf{z}_x,\mathbf{z}_t) }{\big[ \delta(\mathbf{h}(\mathbf{z}_x), \mathbf{h}(\mathbf{z}_t))  /\tau \big] + \E\limits_{ (\mathbf{z}_x) \sim p(\mathbf{z}_x)}[\log \int e ^{(-\delta(\mathbf{h}(\mathbf{z}_x), \mathbf{h}(\mathbf{z}_t))/\tau)} \mathrm{d} \mathbf{z}_x]}  \label{app: eq: convexz}
\end{align}
Since $p(\mathbf{z}_x)=|\mathcal{Z}|^{-1}$, and $p(\mathbf{z}_t)=|\mathcal{Z}|^{-1}$ by Lemma \ref{app: lemma: convex}, Eq. \eqref{app: eq: convexz} is equal to: 
\begin{align}
    =\E\limits_{ (\mathbf{z}_x,\mathbf{z}_t) \sim p(\mathbf{z}_x,\mathbf{z}_t) }{\big[ \delta (\mathbf{h}(\mathbf{z}_x), \mathbf{h}(\mathbf{z}_t)) /\tau \big]}  + \E\limits_{\mathbf{z}_x \sim p(\mathbf{z}_x)} \Big[ \log \E\limits_{\mathbf{z}_t \sim p(\mathbf{z}_t)} \big[  e^ { -\delta\big( \mathbf{h}(\mathbf{z}_x), \mathbf{h}(\mathbf{z}_t) \big) /\tau}  \big]\Big]  + \log |\mathcal{Z}_c| \label{app: eq: proof1_c}
\end{align}

Similarly, for the second term in the right-hand side of Eq. \eqref{app: eq: identonconvex}, we can proof that:

\begin{align}
\E\limits_{  (\mathbf{z}_t) \sim p(\mathbf{z}_t) }{\big[ H ( p(\mathbf{z}_x|\mathbf{z}_t)),q_{\mathbf{h}}(\mathbf{z}_x|\mathbf{z}_t)) \big]} = & \E\limits_{ (\mathbf{z}_x,\mathbf{z}_t) \sim p(\mathbf{z}_x,\mathbf{z}_t) }{\big[ \delta \big( \mathbf{h}(\mathbf{z}_x), \mathbf{h}(\mathbf{z}_t) \big) /\tau \big]} \\
& + \E\limits_{\mathbf{z}_t \sim p(\mathbf{z}_t)} \Big[ \log \E\limits_{\mathbf{z}_x \sim p(\mathbf{z}_x)} \big[e^ {-\delta \big( \mathbf{h}(\mathbf{z}_x),\mathbf{h}(\mathbf{z}_t) \big) /\tau} \big]\Big] + \log |\mathcal{Z}_c|.  \label{app: eq: proof2_c}
\end{align}
By combining Eq. \eqref{app: eq: proof1_c} and Eq. \eqref{app: eq: proof2_c}, we can conclude the proof.

\subsection{Identifiability result on Convex Bodies} \label{app: idenconv}
Theorem \ref{theory: convex} represents a symmetrical adaptation of Theorem 3 from \citep{zimmermann2021contrastive}. This alignment allows us to employ Propositions 4, Lemma 1 and Lemma A from \citep{zimmermann2021contrastive} to demonstrate that the global minimization of the objective outlined in Eq. \eqref{app: eq: identonconvex}, as specified in Theorem \ref{theory: convex}, identifies the latent variables $\mathbf{z}_x$, as well as $\mathbf{z}_x$, up to linear transformations. For completeness, a brief proof is provided herein, with comprehensive details available in the original work. Clearly, the global minimum of the cross-entropy between two distributions is reached if they match by value and have the same support. Therefore, for the optimal solution of the objective loss Eq. \eqref{app: eq: asyloss} in Theorem \ref{theory: convex}, we have: 
\begin{align}
p(\mathbf{z}_t|\mathbf{z}_x)=q_{\mathbf{h}}(\mathbf{z}_t|\mathbf{z}_x),
\end{align}
This expression also holds true for $\mathbf{z}_t = \mathbf{z}_x$, we have: 
\begin{align}
    & C_p(\mathbf{z}_x)^{-1} e ^{- \delta(\mathbf{z}_x, \mathbf{z}_x)/\lambda}  = C_q(\mathbf{z}_x)^{-1} e ^ {-\delta(\mathbf{h}(\mathbf{z}_x), \mathbf{h}(\mathbf{z}_x))/\tau}, \notag \\
    \Leftrightarrow &C_p(\mathbf{z}_x) = C_q(\mathbf{z}_x)
\end{align}
As the normalization constants are identical we get for all $\mathbf{z}_x, \mathbf{z}_t$,
\begin{align}
\delta(\mathbf{z}_x , \mathbf{z}_t)  =   \lambda\delta(\mathbf{h}(\mathbf{z}_x), \mathbf{h}(\mathbf{z}_t))/\tau.
\label{app: isometry}
\end{align}
Then, by limiting $\delta$ be an $L^\alpha$ metric for $\alpha \geq 1$, $\alpha \neq 2$ or the $\alpha$-th power
of such an $L^\alpha$ metric, using the Theorems 5 and 6 in \citet{zimmermann2021contrastive}, $\mathbf{h}$ is a composition of input independent permutations, sign flips and rescaling. In other words, Theorem \ref{theory: convex} establishes that the latent variables $\mathbf{z}_x$ (and $\mathbf{z}_t$) are identifiable up to a permutation transformation, \ie, the recovered latent variable $\mathbf{f}_x(\mathbf{x})$, obtained through the minimization of Eq. \eqref{eq: identonconvex}, is related to the true $\mathbf{z}_x$ as follows: $\mathbf{f}_x(\mathbf{x}) = \mathbf{P} \mathbf{z}_x + \mathbf{c}$, where $\mathbf{P}$ represents an permutation matrix with scaling, and $\mathbf{c}$ is a constant vector.

\newpage
\section{Differences from Previous Analysis for Mutilmodal Contrastive Learning} 
\label{app: differmulti}
This work differs from previous works focusing on identifiability analysis for multimodal settings \citep{daunhawer2023identifiability,yao2023multi,gresele2020incomplete} across the following dimensions.

\paragraph{Modeling Setting} This work proposes
modeling transferable knowledge across modalities by latent coupled variables. In contrast, previous works \citep{daunhawer2023identifiability,yao2023multi,gresele2020incomplete} often achieve this by introducing the same/shared variables. The advantages of employing latent coupled variables are thoroughly justified in Section \ref{sec: models}. Loosely speaking, From the perspective of model flexibility, the proposed model can be considered a generalization of previous works. This generalization is apparent as the proposed model seamlessly reduces to a single-modal setting when the mixing functions from latent space to observed space are enforced to be identical, and specific variables are omitted. 

\paragraph{Identifiability Results} The identifiability results obtained in this work diverge from those found in previous works \citep{daunhawer2023identifiability,yao2023multi}, both in terms of breadth and depth of identifiability, due to the introduction of the undirected edge between $\mathbf{z}_x$ and $\mathbf{z}_t$. a) \textit{Breadth of Identifiability:} Unlike earlier works that often achieve only partial identifiability of latent coupled variables $\mathbf{z}_x$ or $\mathbf{z}_t$, \eg, latent content variables but not latent style variables \citep{daunhawer2023identifiability,yao2023multi}, our model extends this scope to ensure complete identifiability of latent coupled variables $\mathbf{z}_x$ and $\mathbf{z}_t$. b) \textit{Depth of Identifiability:} In terms of depth, this work identifies latent coupled variables $\mathbf{z}_x$ and $\mathbf{z}_t$ up to linear or permutation transformations. As a result, after applying a linear ICA method, we can obtain component-wise identifiability, i.e., recovering independent latent variables up to permutation and scaling.  This level of precision offers an enhancement over the block identifiability result in previous studies \citep{daunhawer2023identifiability,yao2023multi}, which only identifying latent variables up to a nonlinear invertible mapping, even for independent latent variables. \emph{The differences above in both breadth and depth of identifiability results enable us, for the first time, to unveil the component-wise disentanglement capabilities of multimodal contrastive representation learning.}

\paragraph{Practical Significance in Real Applications} Prior studies \citep{daunhawer2023identifiability,yao2023multi,gresele2020incomplete} have predominantly relied on simulation experiments, which often encounter a substantial gap between the assumptions made in theoretical analyses and the practical conditions of real-world applications. In contrast, our work bridges this gap by validating our theoretical findings using pre-trained CLIP models on over 16 diverse real-world datasets. This empirical approach not only substantiates the practical effectiveness of our theoretical results but also demonstrates their applicability and robustness in real-world multimodal settings, highlighting a significant departure from previous work in terms of real-world applicability.

\section{Differences from Previous Analyses for Single-Modal Contrastive Learning}
\label{app: differsingle}
This work sets itself apart from prior studies focused on the analysis of single-modal contrastive learning \citep{zimmermann2021contrastive,von2021self} in the following key aspects.
\paragraph{Problem Context} 
Previous works primarily address single-modal scenarios, whereas our proposed model extends this framework to the more complex multimodal domain. This extension can be viewed as a generalization of prior approaches. Specifically, our model naturally reduces to a single-modal setting when the mixing functions from the latent space to the observed space are identical, and certain variables are omitted. By expanding the scope to multimodal data, our approach addresses the limitations of prior studies and provides a more comprehensive understanding of contrastive learning.

\paragraph{Technical Perspective} 
Addressing multimodal settings requires significantly broader technical developments compared to single-modal analyses. To this end, we developed Theorem~\ref{theory: asym}, which generalizes the asymptotic analysis of contrastive learning to the multimodal context, providing a robust theoretical foundation. Bridging the gap between single-modal and multimodal settings also necessitated novel theoretical insights. For instance, Theorem~\ref{theory: sphere} and Theorem~\ref{theory: convex} establish critical connections between multimodal contrastive learning and traditional single-modal frameworks, enabling a unified understanding across these domains. These results not only expand the applicability of contrastive learning but also highlight the intricate dependencies introduced by multimodal data.

\paragraph{New Insights} 
In the multimodal context, a key challenge is effectively modeling the connections between different modalities. This motivates the central insight of our work: latent coupled variables, linked by a unidirectional edge, provide a foundation for exploring whether partial causal models are sufficient for multimodal learning. As highlighted in the introduction, we offer theoretical support for the success of multimodal contrastive learning, including guarantees for its disentanglement capabilities. From a practical perspective, we recommend refining representations from pre-trained CLIP-like models rather than using them directly. Specifically, applying linear ICA methods, such as FastICA (aligned with assumptions on the hypersphere), or combining PCA and FastICA (aligned with assumptions on convex bodies), can enhance performance on tasks that rely on disentangled representations. These insights not only validate the robustness of our theoretical findings but also emphasize their practical significance in real-world applications.

\section{More Results on CelebA}\label{app: celeba}
Figures \ref{fig: celeba11} - \ref{fig: celeba1} illustrate the 16 distinct disentangled representations obtained using pre-trained CLIP with FastICA. Interestingly, our method achieves competitive results compared to specialized disentanglement techniques, such as FactorVAE \citep{kim2018disentangling} and $\beta$-TCVAE \citep{chen2018isolating}. Specifically, FactorVAE identified 8 disentangled attributes, while $\beta$-TCVAE reported 15, whereas our approach successfully discerns 16 distinct disentangled representations.

It is important to note that this comparison is not meant to position our method as a more effective disentanglement technique. Rather, our experiments are designed solely to validate our theoretical findings. We present this comparison to provide insight into the potential of leveraging CLIP for learning disentangled representations, thereby motivating future research in this direction. A particularly interesting avenue could be exploring how disentanglement capabilities relate to the manipulation of pre-trained vision models, such as diffusion models.

\begin{figure}[!htp]
  \centering
\subfigure[Smile]
{\includegraphics[width=0.35\textwidth]{Celeba_Figs/feature_150_smile.png}}\
\subfigure[Brightness]
{\includegraphics[width=0.35\textwidth]{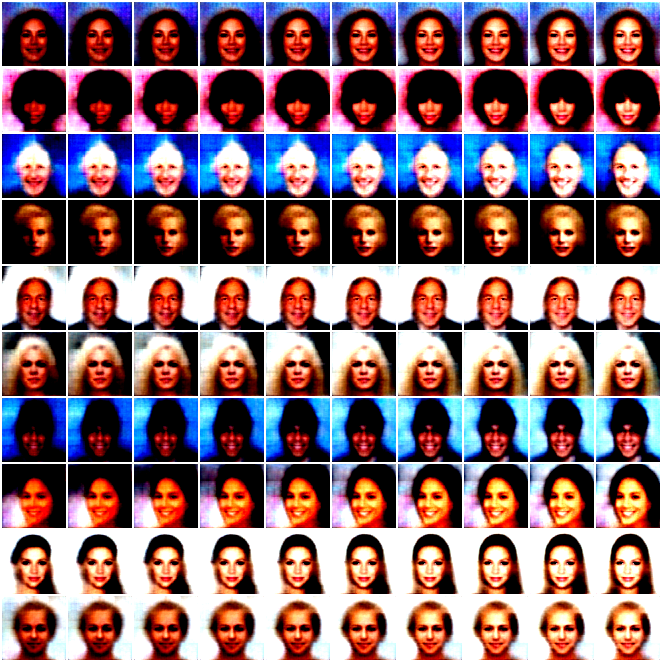}}\\
\subfigure[Hair color]
{\includegraphics[width=0.35\textwidth]{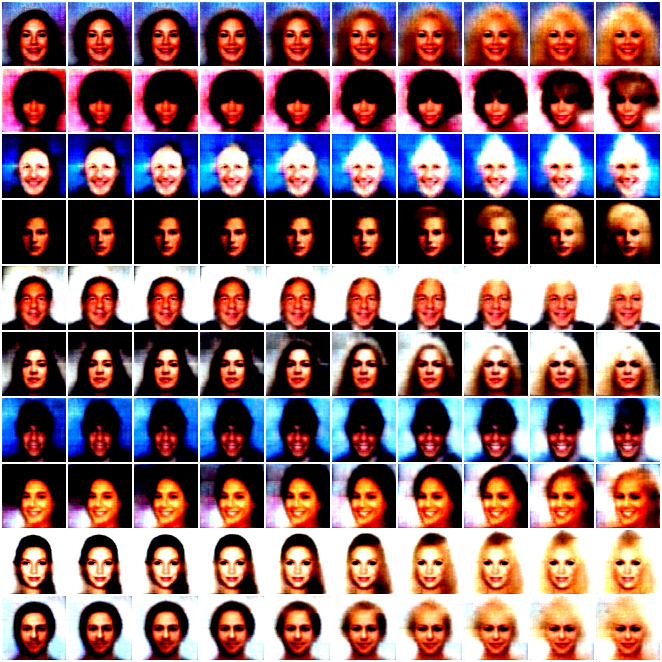}}\
\subfigure[Eye shadow]
{\includegraphics[width=0.35\textwidth]{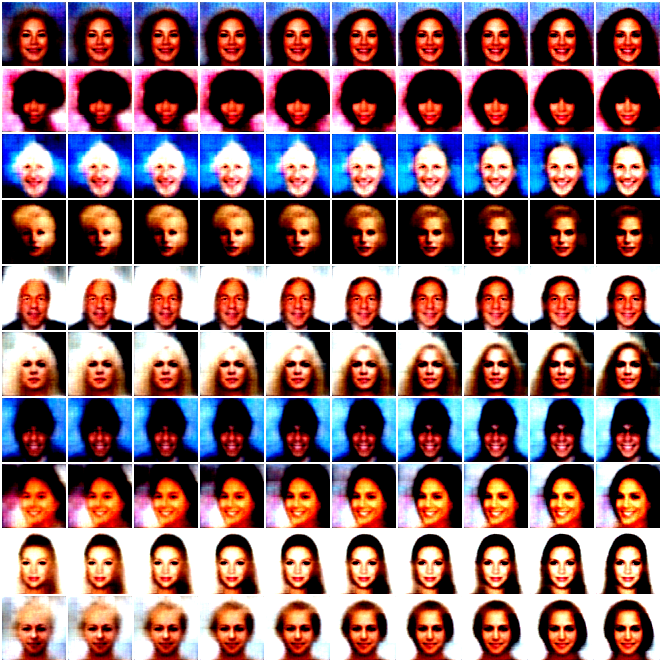}}\\
\subfigure[Gender w/ Mustache]
{\includegraphics[width=0.35\textwidth]{Celeba_Figs/feature_64_gender_Mustache.png}}\
\subfigure[Glasses]
{\includegraphics[width=0.35\textwidth]{Celeba_Figs/feature_77_glass.png}}
  \caption{Disentangled Representations learned by combining pre-train CLIP and FastICA.}
  \label{fig: celeba11}
\end{figure}

\begin{figure}[!htp]
  \centering
\subfigure[Face width]
{\includegraphics[width=0.4\textwidth]{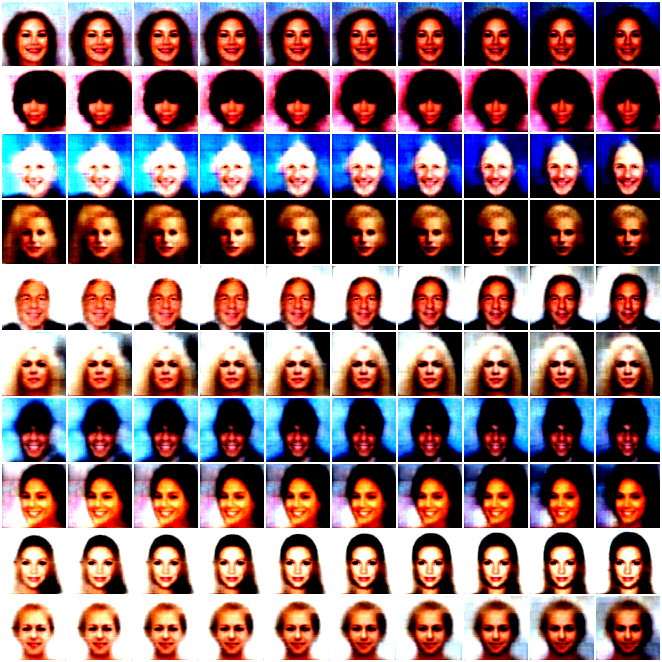}}\
\subfigure[Face Size]
{\includegraphics[width=0.4\textwidth]{Celeba_Figs/feature_114_facesize.png}}\\
\subfigure[Background color]
{\includegraphics[width=0.4\textwidth]{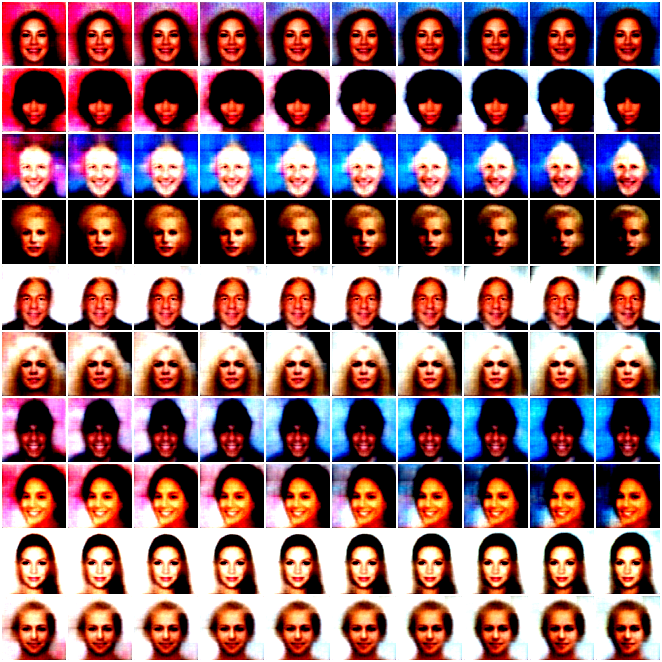}}\
\subfigure[Skin color]
{\includegraphics[width=0.4\textwidth]{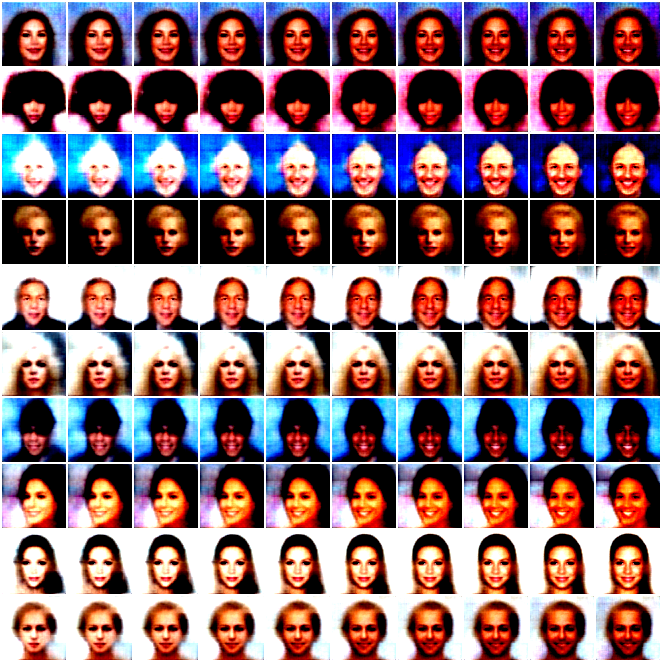}}\\
\subfigure[Azimuth]
{\includegraphics[width=0.4\textwidth]{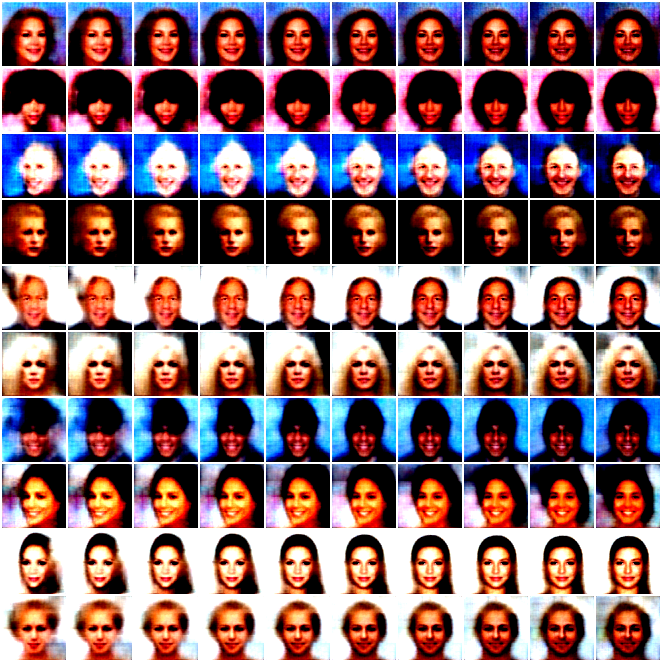}}\
\subfigure[Background remove]
{\includegraphics[width=0.4\textwidth]{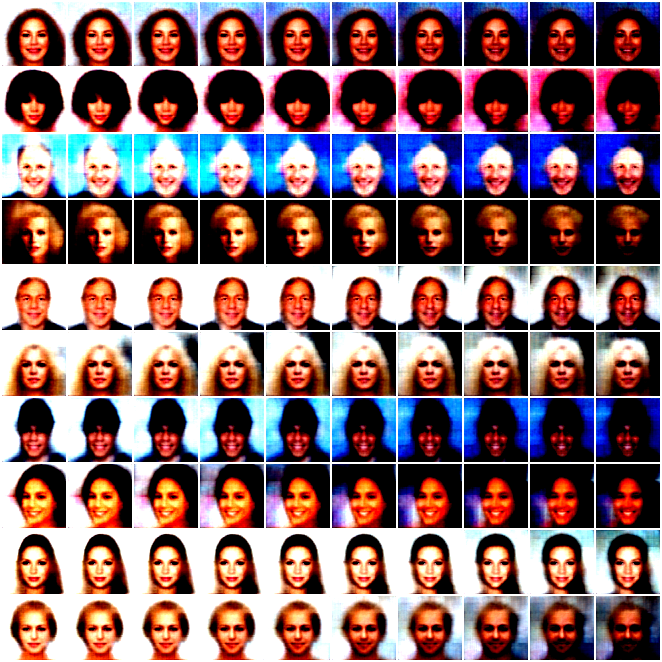}}\\
  \caption{Disentangled Representations learned by combining pre-train CLIP and FastICA.}
  \label{fig: celeba111}
\end{figure}

\begin{figure}[!htp]
  \centering
\subfigure[Face Height]
{\includegraphics[width=0.4\textwidth]{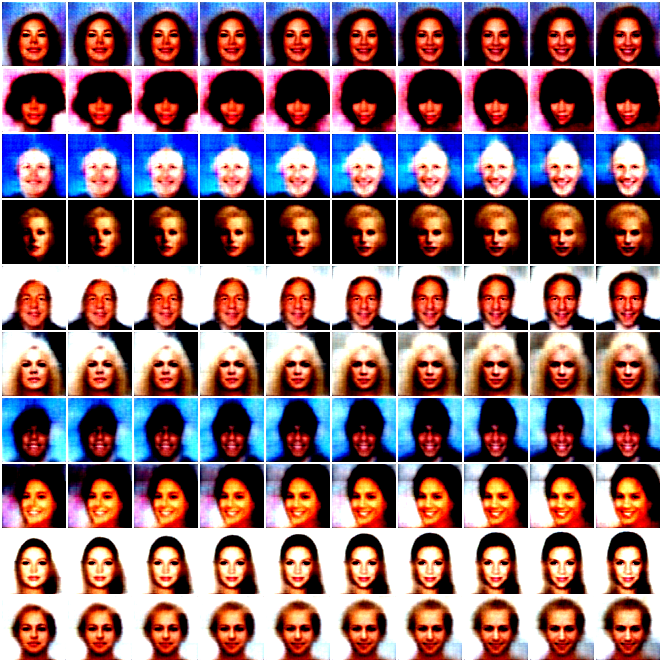}}\
\subfigure[Hair Style]
{\includegraphics[width=0.4\textwidth]{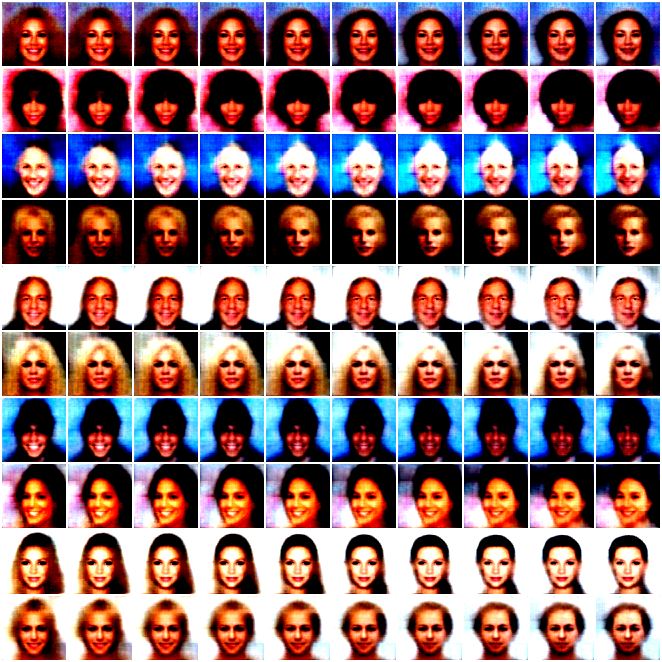}}\\
\subfigure[Hair length]
{\includegraphics[width=0.4\textwidth]{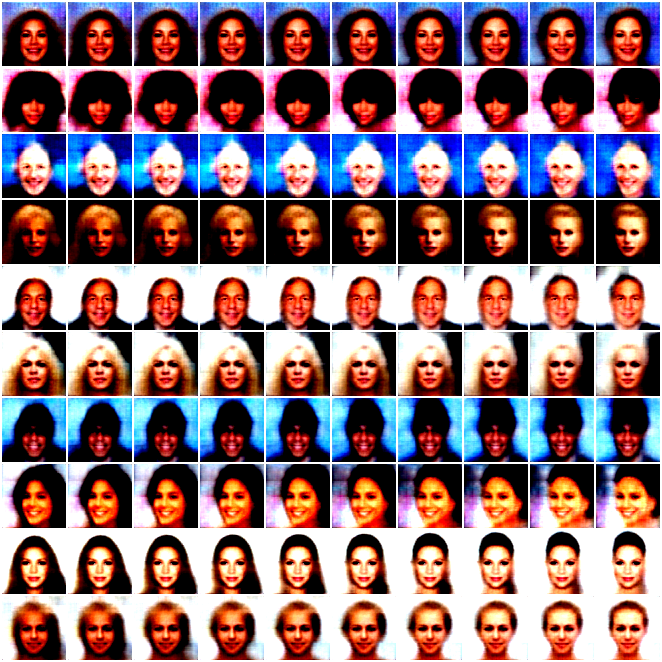}}\
\subfigure[Age]
{\includegraphics[width=0.4\textwidth]{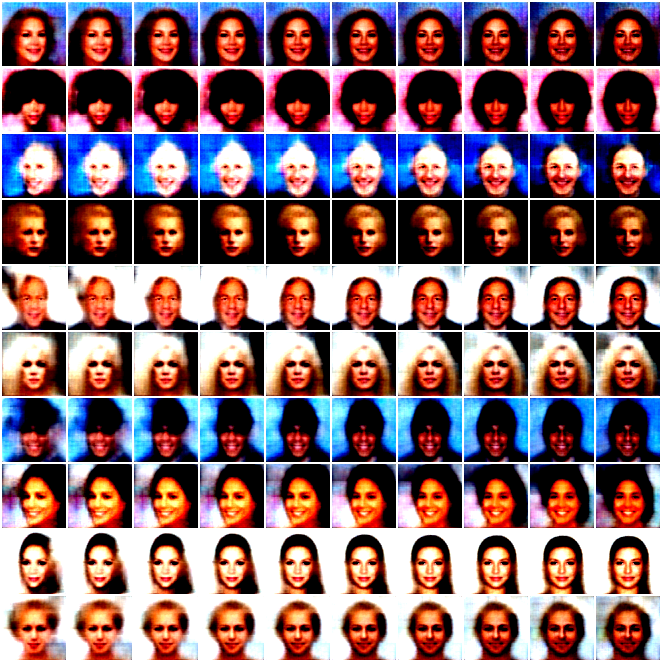}}\\

  \caption{Disentangled Representations learned by combining pre-train CLIP and FastICA.}
  \label{fig: celeba1}
\end{figure}

\section{More Results on ImageNet-Type Data} \label{app: imagenet}
\begin{table*}[!htp]
\caption{Quantitative results for 16-shot transfer learning and domain generalization by different methods. Lin. P. (Linear Probe).}
\label{table: 16-shot}
\vskip 0.15in
\begin{center}
\begin{scriptsize}
\begin{sc}
\begin{tabular}{lccccccc}
\toprule
&        & Source & \multicolumn{5}{c}{Target (ImageNet-)}   \\
&        & {----------} & \multicolumn{5}{c}{----------------------------------------------}  \\        
Encoders&Methods & ImageNet & V2 & Sketch & R & A &Avg.\\
\midrule
RN50&Lin. P.   & 55.36 & 45.45& 18.22 & 34.09 & 12.52 & 27.77\\
~ &Lin. P. w/ FastICA & \best{57.82}& \best{47.78}& \second{19.77} & \second{38.05}&\best{13.15}&\second{29.69}\\
~ &Lin. P. w/ PCA and FastICA & \second{57.37}& \second{47.67}& \best{20.39} & \best{38.76}&\second{12.89}&\best{29.93}\\
\midrule
RN101&Lin. P.   &  60.98  & 50.36  & 25.80  & 46.61  &  18.64 & 35.35 \\
~ &Lin. P. w/ FastICA & \best{61.86}& \best{51.85}& \second{27.29} & \second{49.29}&\second{19.89}&\second{37.08}\\
~ &Lin. P. w/ PCA and FastICA & \second{61.58}& \second{51.44}& \best{28.86} & \best{50.32}&\best{19.97}&\best{37.64}\\
\midrule
ViT32 &Lin. P.   &  60.76  & 50.92   & 28.81  &    49.18&   19.72 & 37.15 \\
~ &Lin. P. w/ FastICA & \second{61.94}& \second{51.95}& \second{30.30} & \best{51.82 }&\second{20.81}&\second{38.72}\\
~ &Lin. P. w/ PCA and FastICA & \best{62.00}& \best{52.39}& \best{30.39} & \second{51.61 }&\best{20.96}&\best{38.84}\\

\midrule
ViT16 &Lin. P.   & 67.17   &  57.01  & 35.43  &   60.96 &  35.41  & 47.20 \\
~ &Lin. P. w/ PCA and FastICA & \best{68.12}& \best{58.45}& \second{38.41} & \second{63.89}&\second{37.17}&\second{49.48}\\
~ &Lin. P. w/ FastICA & \second{67.96}& \second{58.38}& \best{38.75} & \best{65.45}&\best{38.28}&\best{50.22}\\
\bottomrule
\end{tabular}
\end{sc}
\end{scriptsize}
\end{center}
\vskip -0.1in
\end{table*}

\begin{table*}[!htp]

\caption{Quantitative results for 8-shot transfer learning and domain generalization by different methods. Lin. P. (Linear Probe).}
\label{table: 8-shot}
\vskip 0.15in 
\begin{center}
\begin{scriptsize}
\begin{sc}
\begin{tabular}{lccccccc}
\toprule
&        & Source & \multicolumn{5}{c}{Target (ImageNet-)}   \\
&        & {----------} & \multicolumn{5}{c}{----------------------------------------------}  \\        
Encoders&Methods & ImageNet & V2 & Sketch & R & A &Avg.\\
\midrule
RN50&Lin. P.   & 49.33 & 40.83& 15.06 & 31.23 & 10.99 & 24.53 \\
~ &Lin. P. w/ FastICA & \best{51.99}& \best{43.58}& \second{15.47} & \second{34.35}&\best{12.85}&\second{26.56}\\

~ &Lin. P. w/ PCA and FastICA & \second{51.42}& \second{42.93}& \best{17.28} & \best{35.53}&\second{12.33}&\best{27.02}\\
\midrule
RN101&Lin. P.   &  55.41  & 46.04  & \second{23.38}  & 43.26  &  16.88 &  32.39 \\
~ &Lin. P. w/ FastICA & \best{56.59}& \best{47.47}& {22.09} & \second{44.59}&\best{18.39}&\second{33.14}\\
~ &Lin. P. w/ PCA and FastICA & \second{55.84}& \second{46.59}& \second{23.68} & \best{44.94}&\second{18.25}&\best{33.37}\\
\midrule
ViT32 &Lin. P.   &  55.17  & 46.11   & 25.53  &   45.32&   18.35 & 33.83  \\
~ &Lin. P. w/ FastICA & \best{56.90}& \best{47.96}& \best{27.62} & \best{49.13 }&\best{20.31}&\best{36.26}\\
~ &Lin. P. w/ PCA and FastICA & \second{55.83}& \second{46.55}& \second{26.54} & \second{46.77}&\second{18.80}&\second{34.67}\\
\midrule
ViT16 &Lin. P.   & 61.82   &   52.34  & 32.26  &   55.93 &  32.63  & 43.29  \\
~ &Lin. P. w/ FastICA & \best{63.55}& \best{54.81}& \second{34.21} & \second{61.54 }&\best{38.21}&\second{47.29}\\
~ &Lin. P. w/PCA and FastICA & \second{63.47}& \second{54.32}& \best{35.83} & \best{61.88}&\second{37.35}&\best{47.36}\\
\bottomrule
\end{tabular}
\end{sc}
\end{scriptsize}
\end{center}
\vskip -0.1in
\end{table*}

\begin{table*}[h]
\caption{ Quantitative results for 4-shot transfer learning and domain generalization by different methods. Lin. P. (Linear Probe).}
\label{table: 4-shot}
\begin{center}
\begin{scriptsize}
\begin{sc}
\begin{tabular}{lccccccc}
\toprule
&        & Source & \multicolumn{5}{c}{Target (ImageNet-)}   \\
&        & {----------} & \multicolumn{5}{c}{----------------------------------------------}  \\        
Encoders&Methods & ImageNet & V2 & Sketch & R & A &Avg.\\
\midrule
RN50&Lin. P.   & 41.34 & 33.67& 11.55 & 26.27 & 9.67 & 20.29 \\
~ &Lin. P. w/ FastICA & \best{44.10}& \best{36.07}& \best{12.75} & \best{30.15}&\best{11.64}&\best{22.65}\\

~ &Lin. P. w/ PCA and FastICA & \second{42.86}& \second{35.38}& \second{12.29} & \second{28.81}&\second{9.79}&\second{21.57}\\
\midrule

RN101&Lin. P.   &  48.23  & 39.53  & \second{18.80}  & 38.10  &  14.32 &  27.69 \\
~ &Lin. P. w/ FastICA & \best{49.43}& \best{41.02}& {17.49} & \second{39.33}&\best{15.25}&\second{28.27}\\
~ &Lin. P. w/ PCA and FastICA & \second{49.01}& \second{40.25}& \best{19.26} & \best{39.71}&\second{14.75}&\best{28.49}\\
\midrule
ViT32 &Lin. P.   &  47.82  & 39.53   & 21.51 &   40.94&   15.99 &  29.49 \\
~ &Lin. P. w/ FastICA & \second{49.43}& \second{40.66}& \second{22.66} & \second{41.78 }&\second{16.41}&\second{30.38}\\
~ &Lin. P. w/ PCA and FastICA & \best{49.48}& \best{41.09}& \best{23.72} & \best{43.48}&\best{16.77}&\best{31.27}\\
\midrule
ViT16 &Lin. P.   & 54.30  &   46.06  & 27.58  &   50.76 &   29.24  & 38.41  \\
~ &Lin. P. w/ FastICA & \best{56.65}& \best{48.18}& \second{28.27} & \best{55.50}&\best{33.39}&\best{41.33}\\
~ &Lin. P. w/ PCA and FastICA & \second{56.16}& \second{47.46}& \best{30.21} & \second{55.49}&\second{31.71}&\second{41.22}\\
\bottomrule
\end{tabular}
\end{sc}
\end{scriptsize}
\end{center}
\vskip -0.25in
\end{table*}

\begin{table*}[!htp]
\caption{Quantitative results for 1-shot transfer learning and domain generalization by different methods. Lin. P. (Linear Probe).}
\label{table: 1-shot}
\vskip 0.15in
\begin{center}
\begin{scriptsize}
\begin{sc}
\begin{tabular}{lccccccc}
\toprule
&        & Source & \multicolumn{5}{c}{Target (ImageNet-)}   \\
&        & {----------} & \multicolumn{5}{c}{----------------------------------------------}  \\        
Encoders&Methods & ImageNet & V2 & Sketch & R & A &Avg.\\
\midrule
RN50&Lin. P.   & 21.74 & 18.24& {5.68} & \second{15.41} & 6.55& 11.47 \\

~ &Lin. P. w/ FastICA & \second{23.22}& \second{19.68}& \second{6.37} & {13.84}&\second{7.21}&\second{11.77}\\
~ &Lin. P. w/ FastICA & \best{24.06}& \best{20.26}& \best{6.85} & \best{17.54}&\best{8.05}&\best{13.18}\\
\midrule

RN101&Lin. P.   &  26.05  & 21.48  & \second{9.90}  & \second{23.85}  &  10.17 & \second{16.35}  \\
~ &Lin. P. w/ FastICA & \second{27.50}& \second{23.33}& {8.35} & {17.87}&\second{10.71}&{15.07}\\

~ &Lin. P. w/ PCA and FastICA & \best{28.50}& \best{24.17}& \best{11.63} & \best{26.38}&\best{12.28}&\best{18.62}\\
\midrule
ViT32 &Lin. P.   &  26.99  & 22.99   & \second{11.93} &   \second{25.25}&   11.56 & \second{17.93}  \\

~ &Lin. P. w/ FastICA & \best{29.21}& \best{24.80}& {9.97} & {21.23}& \second{12.23}&{17.06}\\
~ &Lin. P. w/ PCA and FastICA & \second{29.05}& \second{24.45}& \best{12.39} & \best{27.61}&\best{12.56}&\best{19.25}\\
\midrule
ViT16 &Lin. P.   & 32.42  &    27.64  & \second{16.34}  &  \second{34.28} &   21.84  & \second{25.02}   \\
~ &Lin. P. w/ FastICA & \second{34.35}& \second{29.31}& {13.91} & {28.61}&\second{23.24}&{23.77}\\
~ &Lin. P. w/ PCA and FastICA & \best{35.20}& \best{30.26}& \best{19.17} & \best{38.87}&\best{26.41}&\best{28.68}\\
\bottomrule
\end{tabular}
\end{sc}
\end{scriptsize}
\end{center}
\vskip -0.1in
\end{table*}

\clearpage
\newpage
\section{More Results on Few-Shot Learning Task}
\label{app: fsl}
\begin{figure}[!htp]
  \centering
\subfigure[Caltech101]
{\includegraphics[width=0.3\textwidth]{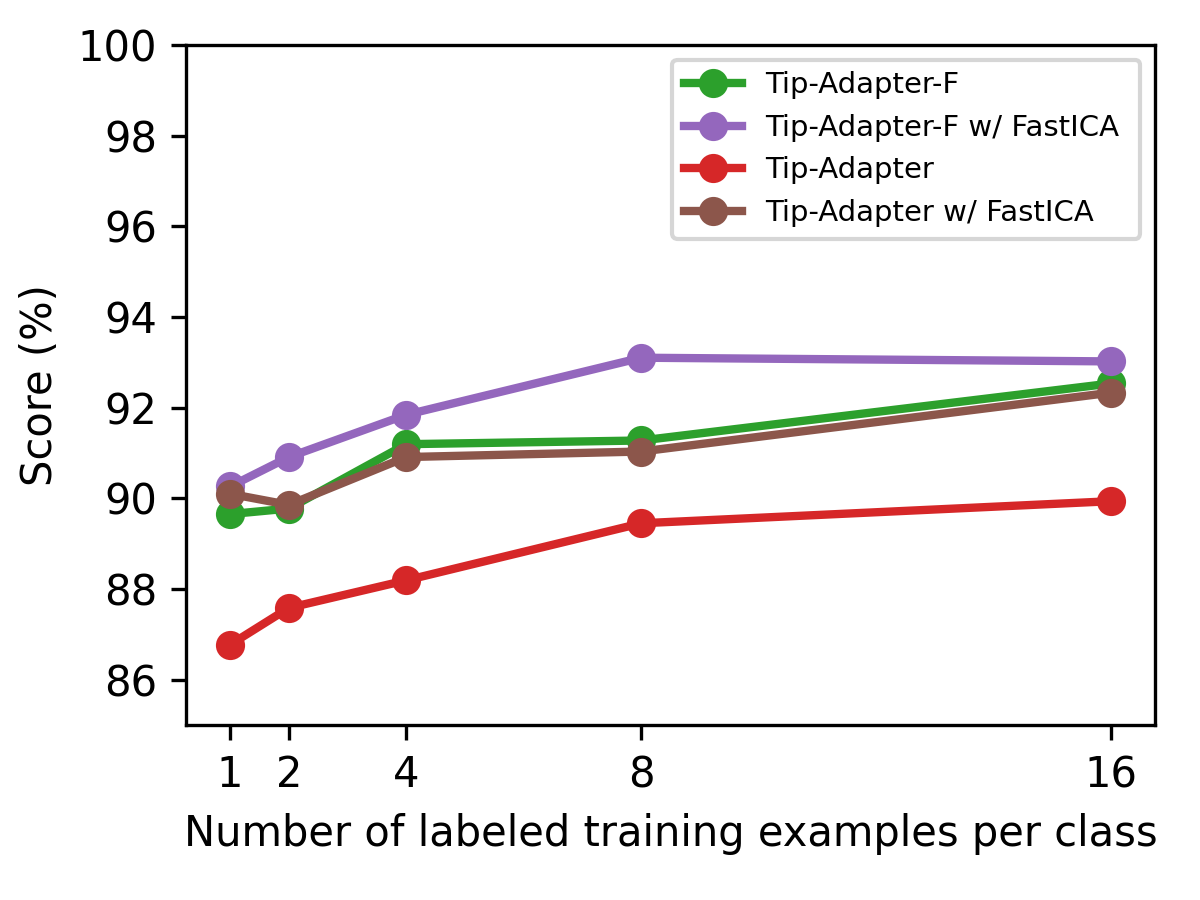}}\
\subfigure[DTD]
{\includegraphics[width=0.3\textwidth]{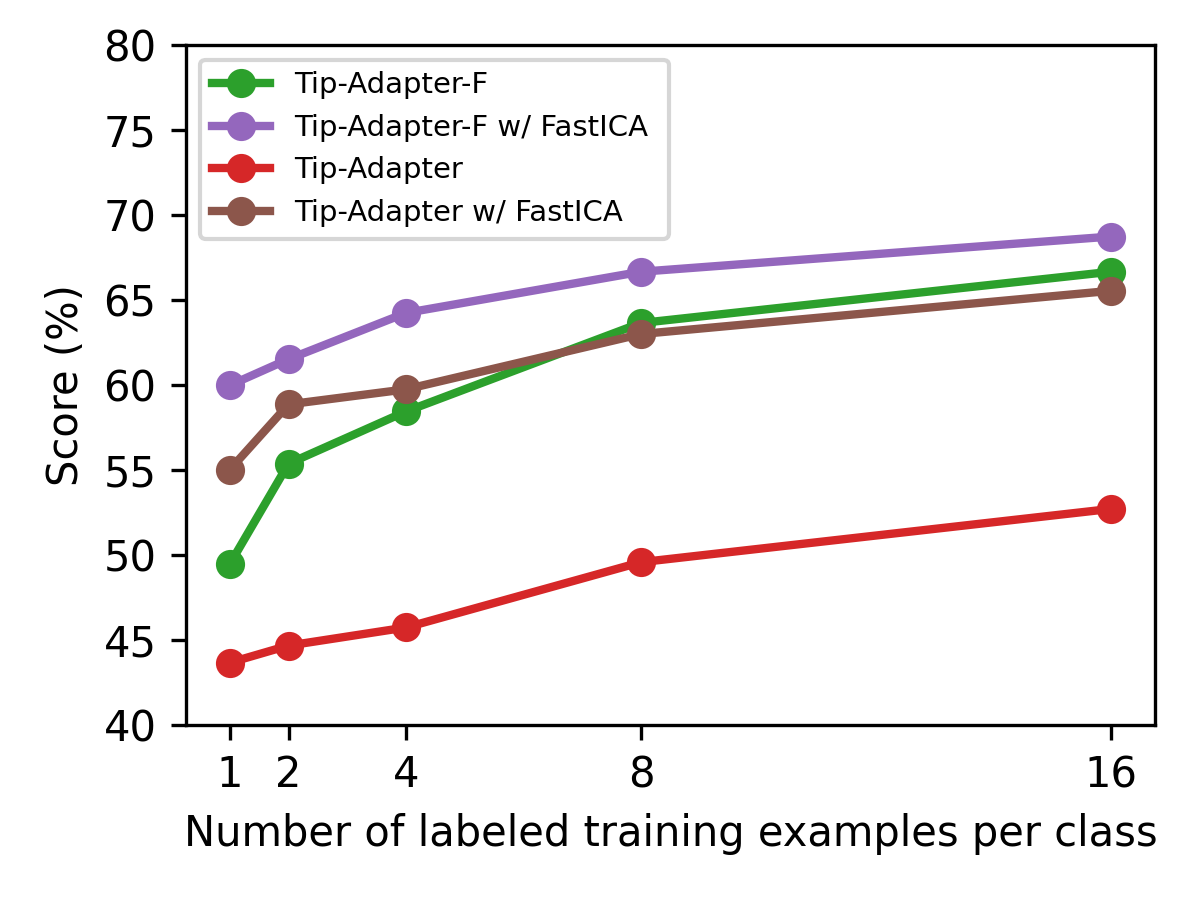}}\
\subfigure[EuroSAT]
{\includegraphics[width=0.3\textwidth]{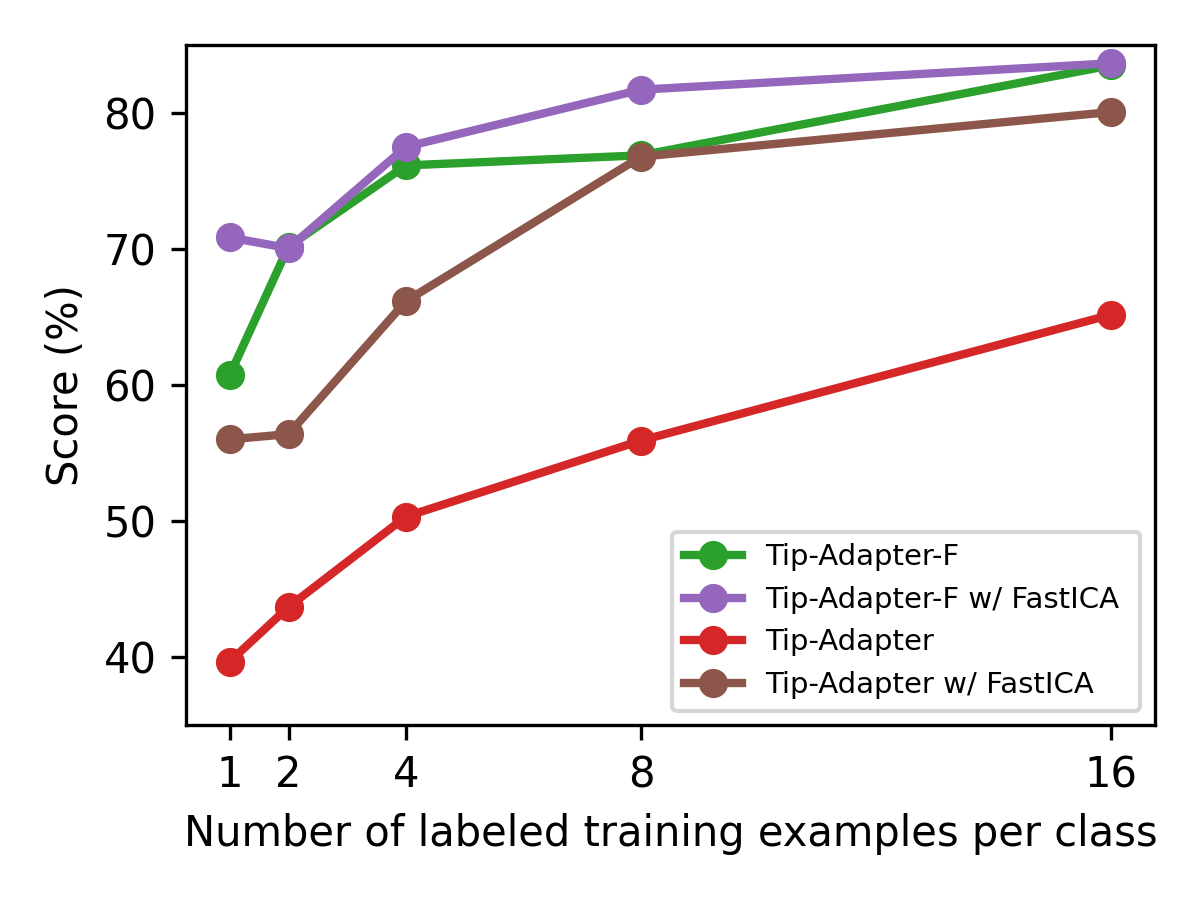}}\\
\subfigure[FGVCAircraft]
{\includegraphics[width=0.3\textwidth]{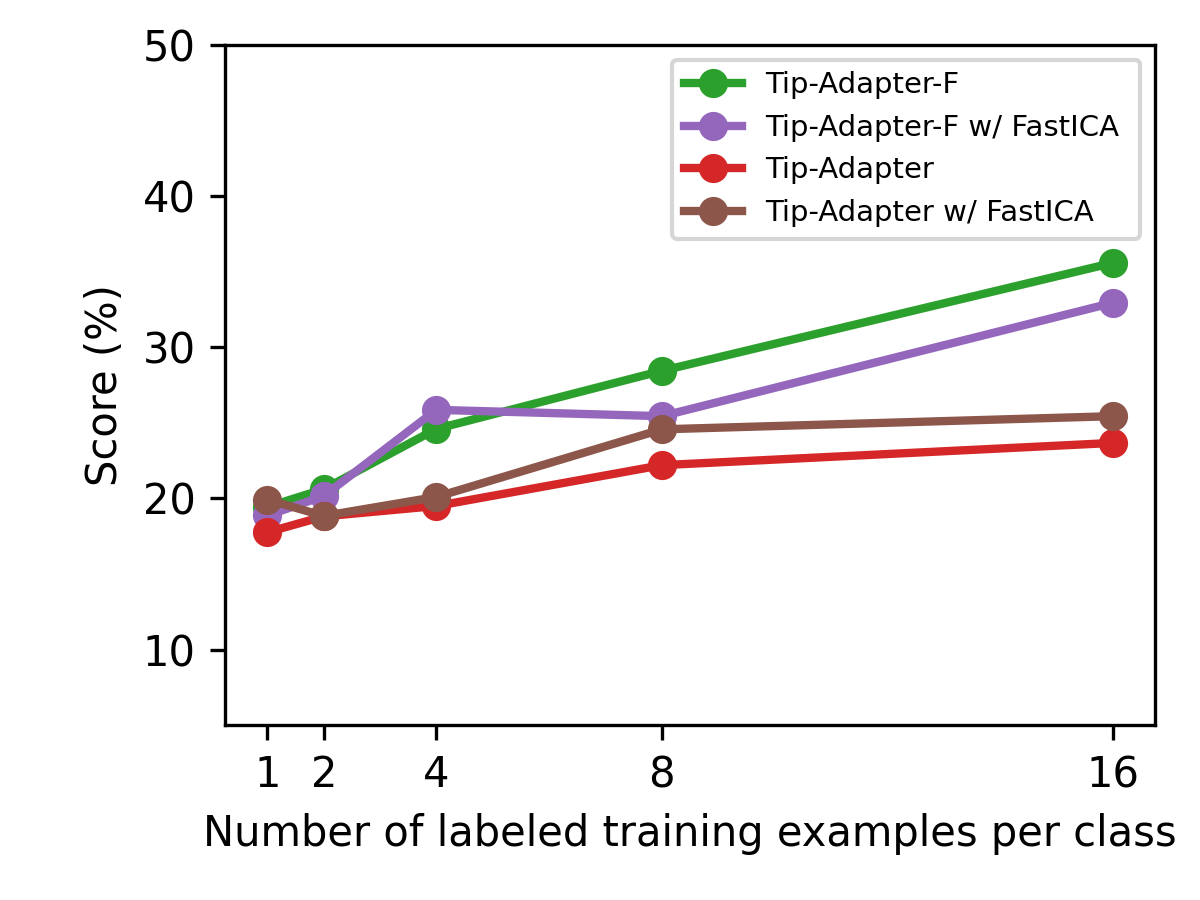}}\
\subfigure[Food101]
{\includegraphics[width=0.3\textwidth]{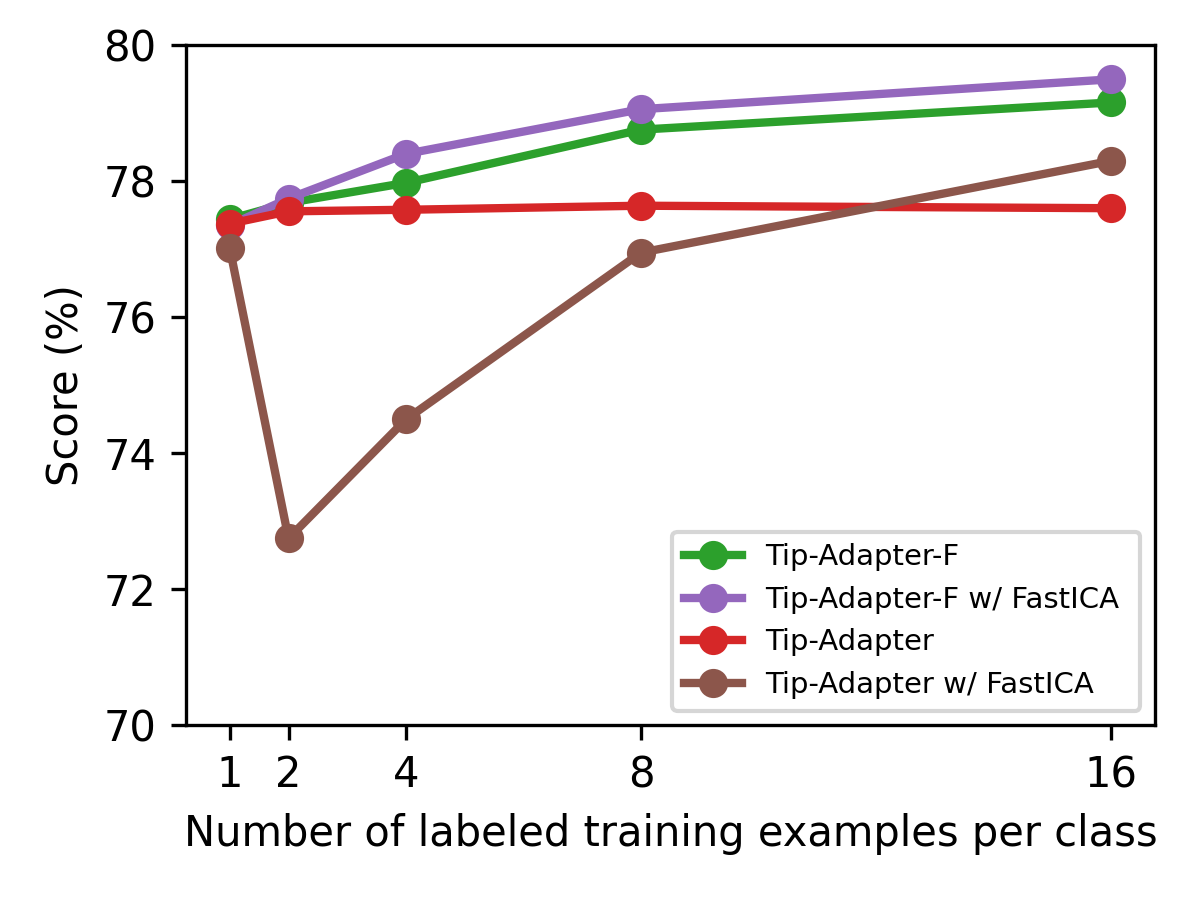}}\
\subfigure[ImageNet]
{\includegraphics[width=0.3\textwidth]{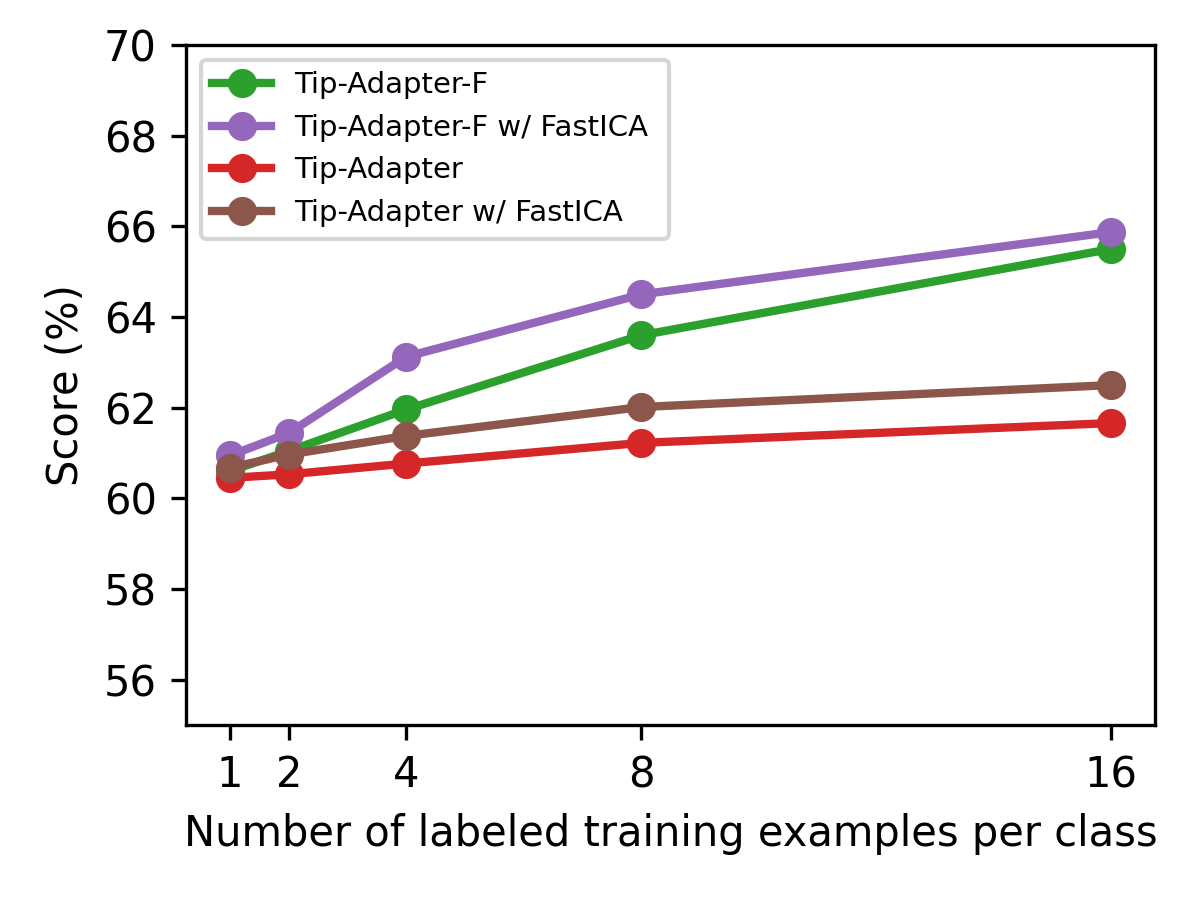}}\\
\subfigure[Oxford Flowers]
{\includegraphics[width=0.3\textwidth]{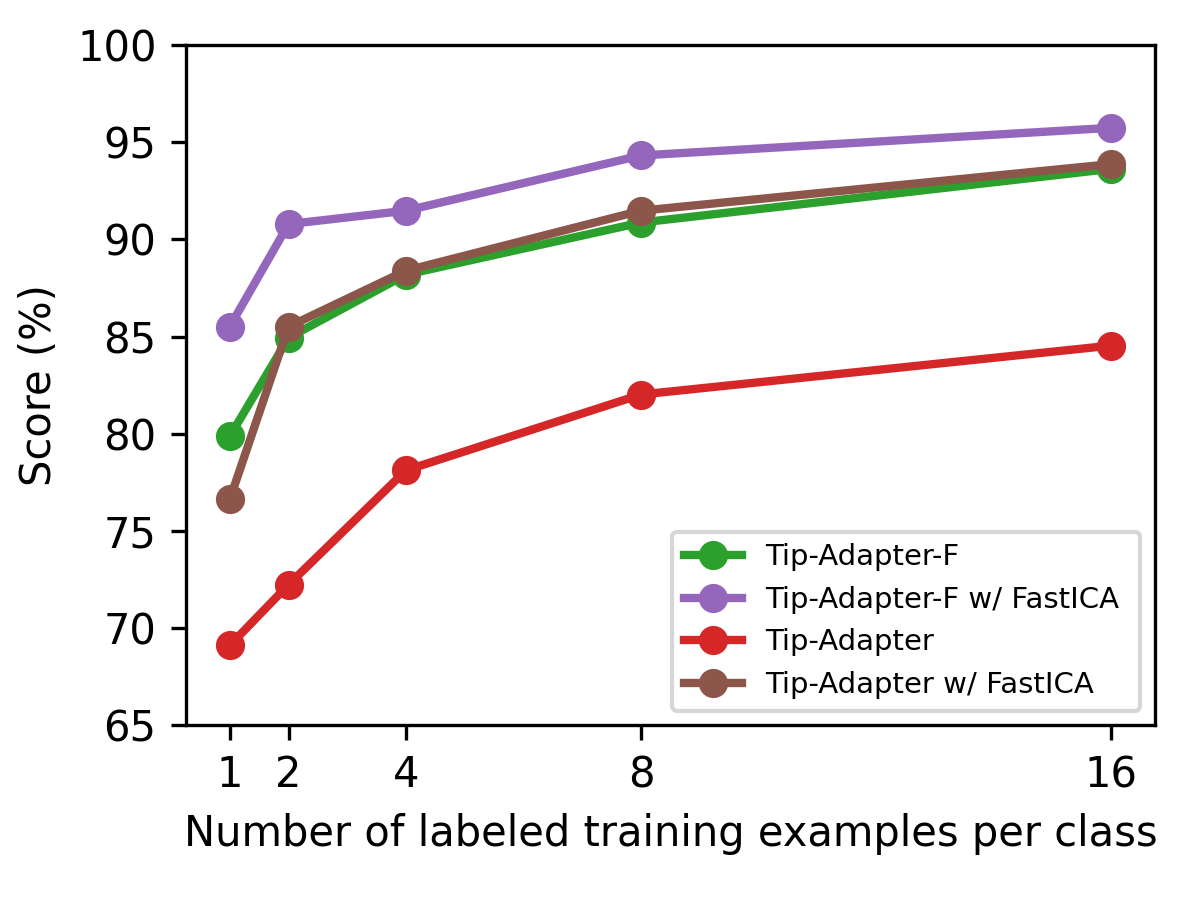}}\
\subfigure[Oxford Pets]
{\includegraphics[width=0.3\textwidth]{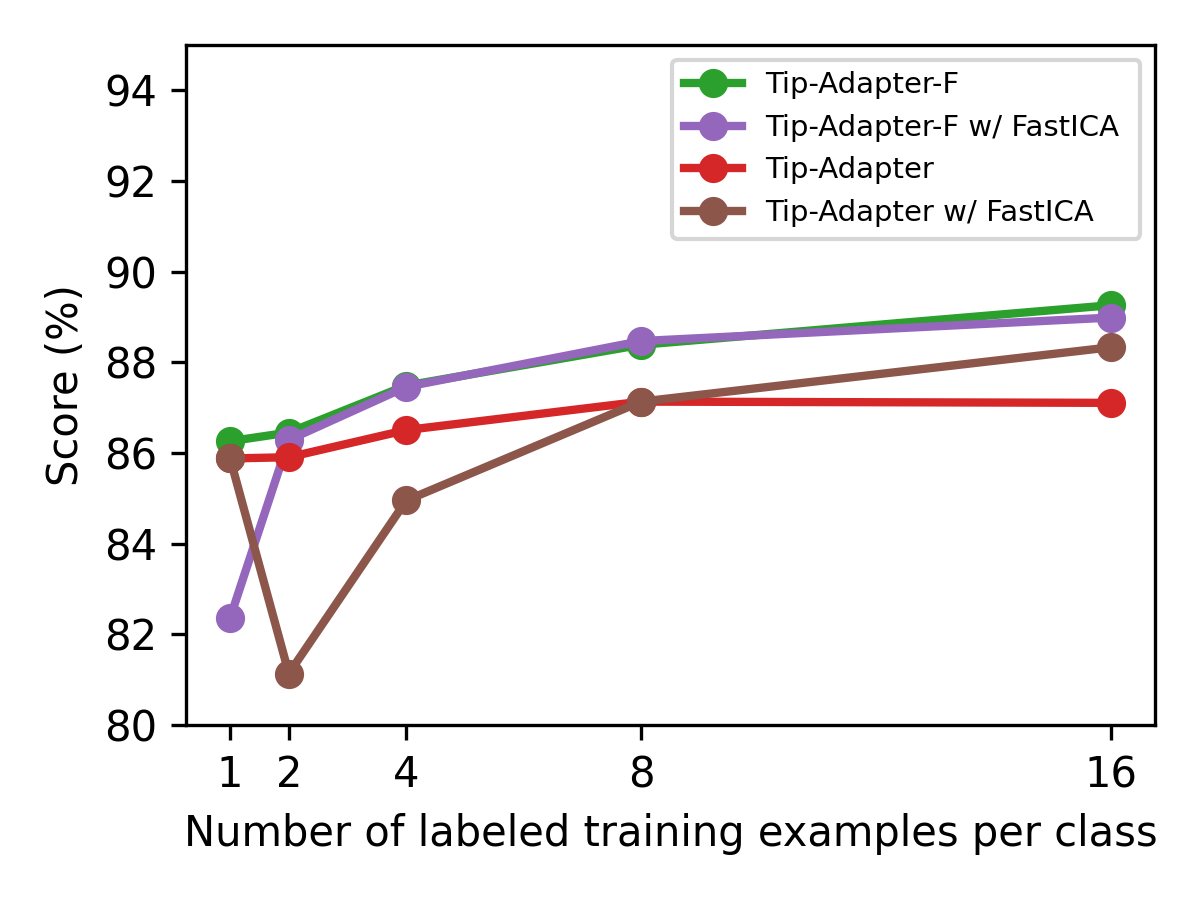}}\
\subfigure[Stanford Cars]
{\includegraphics[width=0.3\textwidth]{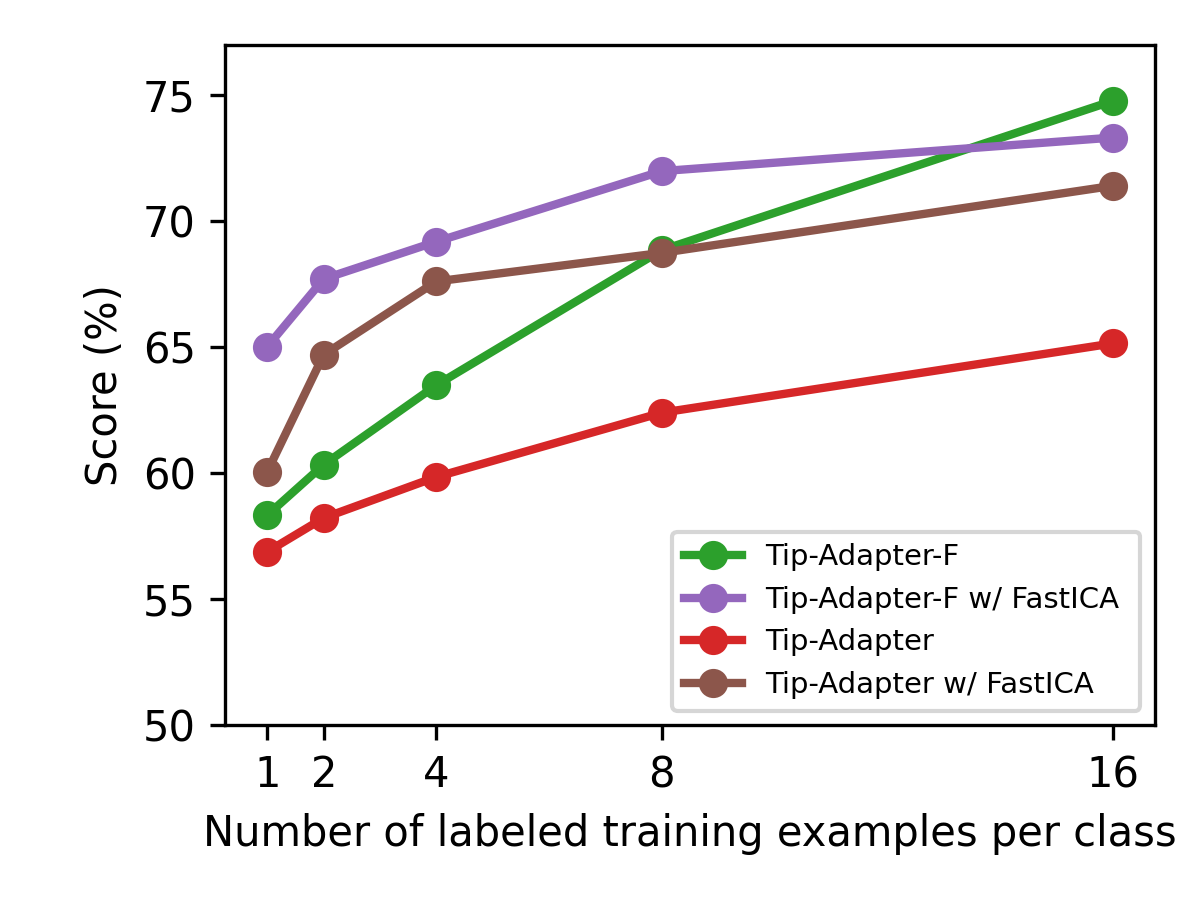}}\\
\subfigure[SUN397]
{\includegraphics[width=0.3\textwidth]{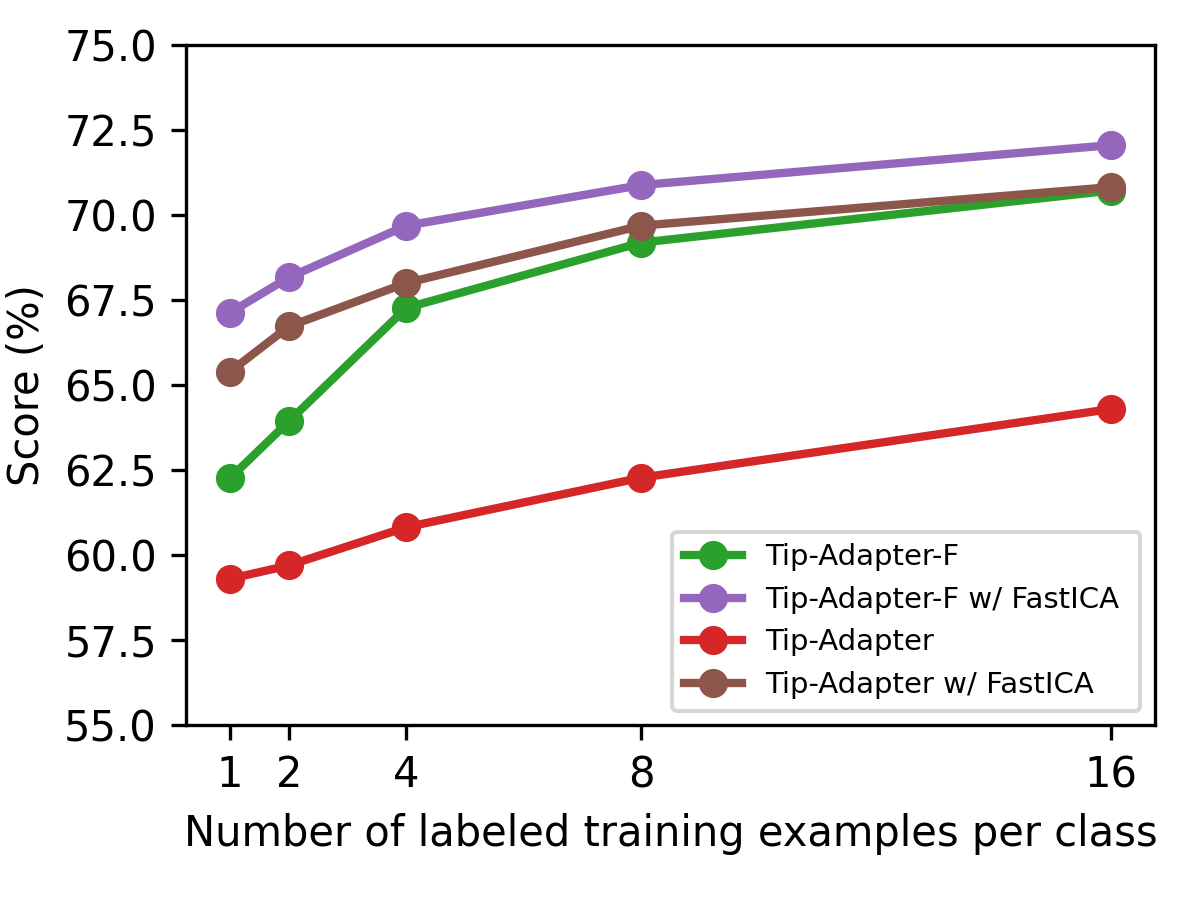}}\
\subfigure[UCF101]
{\includegraphics[width=0.3\textwidth]{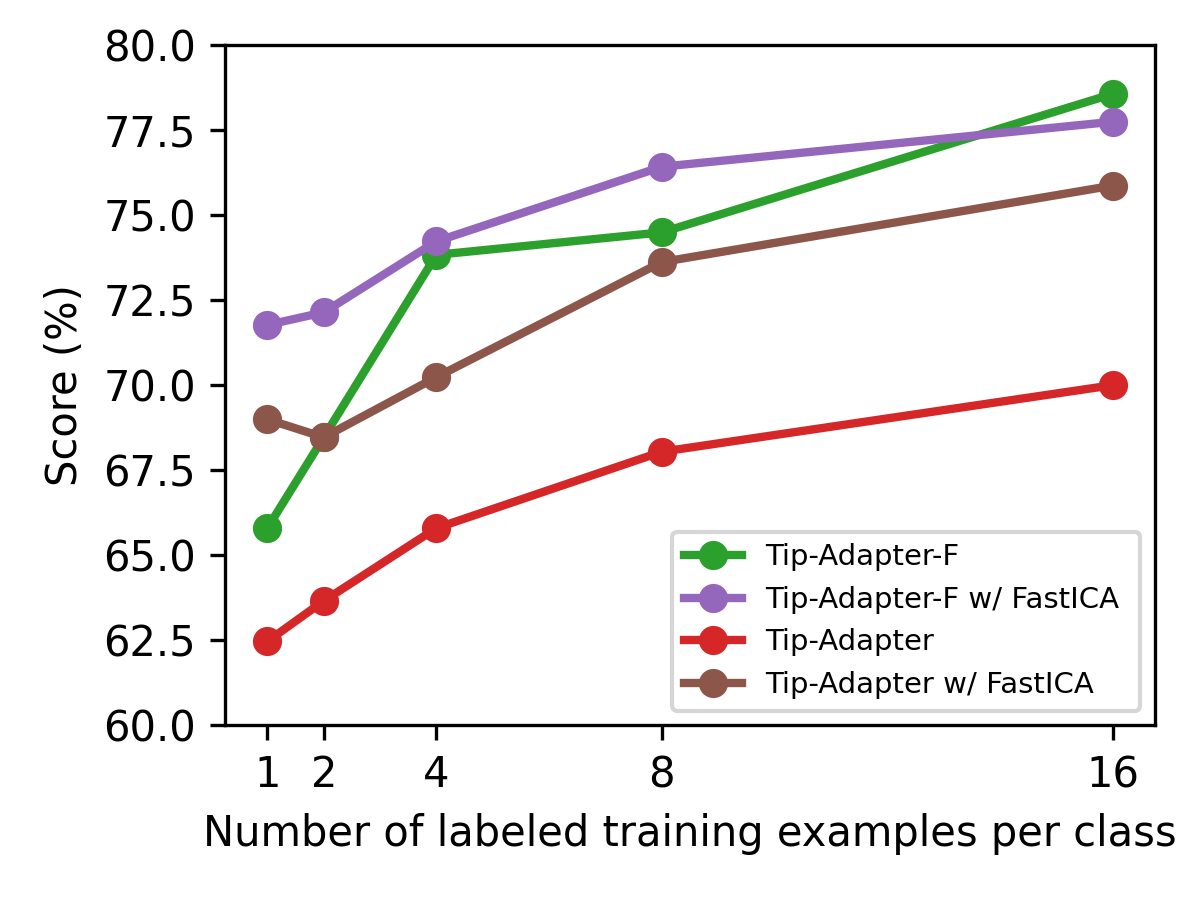}}
  \caption{More results on few-shot learning task: A comparison of top-1 accuracy (\%) achieved by various few-shot CLIP adaptation methods across 11 datasets, including ImageNet \citep{deng2009imagenet}, Caltech101 \citep{fei2004learning}, FGVCAircraft \citep{maji2013fine}, UCF101 \citep{soomro2012ucf101}, EuroSAT \citep{helber2019eurosat}, Flowers102 \citep{nilsback2008automated}, StanfordCars \citep{krause20133d},
DTD \citep{cimpoi2014describing}, Food101 \citep{bossard2014food}, OxfordPets \citep{parkhi2012cats}, and, SUN397 \citep{xiao2010sun}. The x-axis indicates the number of training examples per class.The incorporation of FastICA notably enhances the performance of the original methods, Tip-Adapter and Tip-Adapter-F, proposed by  \citep{zhang2022tip}.}
  \label{fig: fs1}
\end{figure}

\newpage

\section{Implementation Details}
\label{app: imple}
We perform all experiments using the GPU RTX 4090, equipped with 32 GB of memory.
\paragraph{Synthetic Data} We consider latent coupled variables $\mathbf{z}_x$ and $\mathbf{z}_t$, each with a dimensionality of 10. Additionally, we have modality-specific latent variables $\mathbf{m}_x$ and $\mathbf{m}_t$, both set to a dimension of 5. The process begins with sampling from the marginal distribution $p(\mathbf{z}_x)$, and the samples of modality-specific latent variables $\mathbf{m}_x$ and $\mathbf{m}_t$ are obtained by sampling from Gaussian distributions with zero mean and one variance. We then create real pairs by sampling from the conditional distribution $p(\mathbf{z}_t|\mathbf{z}_x)$. The observational data $\mathbf{x}$ and $\mathbf{t}$ are generated using two different Multi-Layer Perceptrons (MLPs). Specifically, we utilize MLPs comprising three hidden layers with leaky ReLU units and random weights. To ensure the invertibility of the MLP g, we carefully control the condition number of the weight matrices. For our encoders concerning both $\mathbf{z}_t$ and $\mathbf{z}_x$, we adopt an MLP architecture with leaky ReLU units.

\paragraph{Evaluation:} To evaluate the linear identifiability result established in Corollary \ref{corollary:hyper}, we assess how well the learned representations \(\mathbf{\hat{z}}_x\) preserve the structure of the ground-truth latent variables \(\mathbf{z}_x\) up to a linear transformation. Specifically, we perform the following steps:

\begin{enumerate}
    \item \textbf{Learned Representations Extraction:} We first obtain representations \(\mathbf{\hat{z}}_x\) learned by multimodal contrastive learning.
    
    \item \textbf{Linear Regression Fitting:} We fit a linear regression model of the form:
    \begin{equation}
        \mathbf{\hat{z}}_x = \mathbf{\hat A} \mathbf{z}_x + \mathbf{\hat c} + \epsilon, \notag
    \end{equation}
    where \(\mathbf{\hat A}\) is a learned transformation matrix, \(\mathbf{\hat c}\) is an offset vector, and \(\epsilon\) represents residual errors.
    
    \item \textbf{Coefficient of Determination (\(R^2\)) Computation:} We compute the \(R^2\) score, defined as:
    \begin{equation}
        R^2 = 1 - \frac{\sum_i \|\mathbf{\hat{z}}_{x, i} - (\mathbf{\hat A} \mathbf{z}_{x, i} + \mathbf{\hat c})\|^2}{\sum_i \|\mathbf{\hat{z}}_{x, i} - \bar{\mathbf{\hat{z}}}_x\|^2}, \notag
    \end{equation}
    where \(\bar{\mathbf{\hat{z}}}_x\) is the mean of \(\mathbf{\hat{z}}_x\). This metric quantifies how well the learned representations can be linearly mapped to the true latent variables.
    
    \item \textbf{Analysis Under Different Assumption Violations:} We repeat the evaluation under settings that both satisfy and violate the theoretical assumptions, as listed in Table \ref{tab: lineartrans}, allowing us to empirically assess the robustness of the identifiability results.
\end{enumerate}

By reporting the \(R^2\) scores across different conditions, we quantify the extent to which multimodal contrastive learning successfully recovers the latent variables up to a linear transformation.

to evaluate permutation identifiability result in Corollary \ref{coro:conv}, we employ the mean correlation coefficient (MCC) between the ground-truth $\mathbf{ z}_x$ and representations $\mathbf{f}_x(\mathbf{x})$ learned by multimodal contrastive learning. To compute MCC, we follow these steps:

\begin{enumerate}
    \item \textbf{Compute Correlation Coefficients:} We first calculate the correlation coefficients between all pairs of ground-truth source variables and representations learned by multimodal contrastive learning. Specifically, for each pair of source component \( \mathbf{z}_{x, i} \) and recovered latent component \( \mathbf{\hat{z}}_{x, j} \), we compute the Pearson correlation coefficient:
    \begin{equation}
        \rho_{i, j} = \frac{\text{Cov}(\mathbf{z}_{x, i}, \mathbf{\hat{z}}_{x, j})}{\sigma_{\mathbf{z}_{x, i}} \sigma_{\mathbf{\hat{z}}_{x, j}}},
    \end{equation}
    where \(\text{Cov}(\cdot, \cdot)\) denotes the covariance, and \(\sigma_{\mathbf{z}_{x, i}}\) and \(\sigma_{\mathbf{\hat{z}}_{x, j}}\) are the standard deviations of the respective components.

    \item \textbf{Solve the Linear Sum Assignment Problem:} Since the estimated components may be permuted relative to the ground-truth variables, we solve a linear sum assignment problem to determine the optimal one-to-one mapping between the ground-truth and the learned representations. The goal is to maximize the total absolute correlation across all assigned pairs.

    \item \textbf{Compute the Mean Correlation Coefficient (MCC):} Given the optimal assignment of the ground-truth variables to the learned representations, we compute the mean of the absolute values of the assigned correlation coefficients:
    \begin{equation}
        \text{MCC} = \frac{1}{d} \sum_{i=1}^{d} |\rho_{i, \pi(i)}|,
    \end{equation}
    where \( \pi(i) \) denotes the index of the assigned representation corresponding to the \(i\)th latent variable, and \( d \) is the number of latent variables.

\end{enumerate}

A high MCC indicates that the learned representation closely match the true source variables, up to permutation transformations, thereby validating the identifiability of the learned representations.

\begin{table}
\centering
\begin{tabular}{c}
\hline ReLU(BN(ConvTranspose2d(512,  512, kernelsize=1, stride=1, padding=0)))\\
\hline ReLU(BN(ConvTranspose2d(512,  64, kernelsize=4, stride=1, padding=0)))\\
\hline ReLU(BN(ConvTranspose2d(64,  64, kernelsize=4, stride=1, padding=0)))\\
\hline ReLU(BN(ConvTranspose2d(64,  32, kernelsize=4, stride=1, padding=0)))\\
\hline ReLU(BN(ConvTranspose2d(32,  32, kernelsize=4, stride=1, padding=0)))\\
\hline ConvTranspose2d(32, 3, kernelsize=4, stride=2, padding=1)\\
\hline
\end{tabular}
\caption{Decoder for the image data.}\label{decode}
\end{table}

\paragraph{Disentangled Representation Learning on CelebA} To obtain disentangled representations for the CelebA dataset, we initially employ the FastICA implementation available in the scikit-learn software on the representations extracted from the pretrained ViT-B/32 encoder. Subsequently, we train the decoder, as outlined in Table \ref{decode}, utilizing Mean Squared Error (MSE) loss. 

\paragraph{Experiments of Linear Probe} In our experiments with ImageNet-Type data, we utilized the PCA and FastICA implementations provided by scikit-learn. For our proposed method, which combines PCA and ICA, we configured the number of components to 500 for PCA, and for FastICA, we set it to 160 for 1, 2, and 4-shot learning scenarios, and 200 for 8 and 16-shot learning scenarios. When employing ICA alone, we chose to use 300 components. For the proposed method with ICA only, we set number of components to 300. Following the setting of linear probe in CLIP, we train a logistic regression classifier using scikit-learn’s L-BFGS implementation, with maximum 1,000 iterations. We determine the L2 regularization strength using a hyperparameter sweep on the validation sets over the range between $10^{-6}$ and $10^{6}$
, with 96 logarithmically spaced steps. To save compute required for
the sweeps, we perform a parametric binary search and iteratively halves the interval around the peak until it reaches a resolution of 8 steps per decade. The hyperparameter sweeps are performed on a validation split of each dataset.

\paragraph{FastICA as a plug-and-play Tool.}  We incorporate FastICA in the framework proposed in \citep{zhang2022tip} to enhance its ability for few shot learning. The framework consists of two primary modules: one keeps the zero-shot capabilities of pre-trained CLIP, ensuring effective utilization of prior knowledge, while the other, the cache module, constitutes the central contribution of the work. The cache module endeavors to transfer knowledge from labeled training samples. Given the above, we integrate FastICA into the cache module, preserving the invaluable prior knowledge derived from the zero-shot abilities of pre-trained CLIP. For parameter settings in FastICA, we opted for 100 components for the majority of datasets. Specifically, we assigned 350 components for the ImageNet dataset, 300 components for the OxfordPets dataset, and 50 components for the EuroSAT dataset. A learning rate of 0.1 was employed for implementation. For the remaining parameter settings, we adhered to the specifications outlined by \citep{zhang2022tip}.

\newpage

\section{Discussions on FastICA vs. PCA Followed by FastICA}
{
Our theoretical findings are based on two distinct assumptions: one on the hypersphere (Sec.~\ref{sec: identihyper}) and the other on convex bodies (Sec.~\ref{sec: identiconv}). Each of these assumptions motivates a corresponding practical method for real applications, namely FastICA, and PCA followed by FastICA, respectively. In practice, however, the true latent generative process is typically unknown, making it difficult to determine a priori which method is more appropriate. From an empirical standpoint, we observe that face image datasets, such as CelebA, tend to align more closely with the hypersphere assumption. This observation is supported both by our experiments on CelebA, where learning disentangled representations under the hypersphere assumption improves performance, such as dynamic facial expressions generation,
dynamic facial expression transfer\citep{otberdout2020dynamic}, face recognition \citep{zhong2021sface,liu2017sphereface}. Moreover, consistent with the main motivation for learning disentangled representations, we find that the representations obtained under the hypersphere assumption lead to improved performance on related downstream tasks, further validating its practical usefulness.}

\section{Acknowledgment of LLMs Usage}
We acknowledge that large language models (LLMs) were used in this work only for word-level tasks, including correcting typos, improving grammar, and refining phrasing. No substantive content, results, or scientific interpretations were generated by LLMs. All scientific ideas, analyses, and conclusions presented in this manuscript are solely the work of the authors.

{
\section{High-Level Discussion and Rationale for the used Assumptions} 

Our identifiability analysis, like most theoretical works on latent variable recovery, relies on specific parametric assumptions about the underlying Data Generating Process (DGP) for $\mathbf{z}_x$ and $\mathbf{z}_t$. While the exact DGP of large-scale multimodal data is unknown, these assumptions are essential for theoretical tractability and are motivated by prevalent machine learning practices and geometrical constraints.

\subsection{Rationale and Interpretation of Assumptions}

We introduce two sets of assumptions, primarily centered on the nature of the \textbf{latent space geometry} and the \textbf{distributional modeling} of the coupled variables.

\paragraph{Hypersphere Assumptions (Eq. \ref{eq: gener})}

\begin{itemize}
    \item \textbf{Latent Space Geometry ($\mathbb{S}^{M-1}$):} The assumption that the latent space resides on a Hypersphere is motivated by consistency with models trained via MMCL. Specifically, modern architectures like CLIP typically enforce $\text{L}_2$ normalization on their embeddings, which geometrically constrains the learned representations to lie on the unit sphere. Therefore, assuming the underlying generative factors are also on the hypersphere is a natural choice for space matching. Moreover, this geometry is inspired by prior work in \cite{zimmermann2021contrastive}, which demonstrates its potential for achieving disentanglement in the single-modal contrastive learning context.
    
    \item \textbf{Marginal Distribution $p(\mathbf{z}_x)$ as Uniform:} This represents a maximum-entropy assumption. Essentially, in the absence of specific prior knowledge, we assume that the distribution of the shared latent variables, $\mathbf{z}_x$, is uniform across the latent space.
    
    \item \textbf{Conditional Distribution $p(\mathbf{z}_t|\mathbf{z}_x)$ as von Mises-Fisher (vMF) Distribution:} The vMF distribution is the natural counterpart of the Gaussian distribution defined on a sphere. Its parameterized form models the semantic coupling by formalizing the objective of MMCL: given a factor $\mathbf{z}_x$, the distribution expects its positive pair $\mathbf{z}_t$ to be concentrated nearby with high probability. The alignment parameter $k\mathbf{z}_t^T \mathbf{z}_x$ precisely quantifies the strength of this shared semantic information across modalities.
\end{itemize}

\paragraph{Convex Body Assumptions (Eq.~\ref{eq: gener_convex})}

The assumptions for the convex body (e.g., hyperrectangle) case provide an alternative geometric setting, often preferred in classic disentanglement works.
\begin{itemize}
    \item \textbf{Latent Space ($\mathcal{Z}_c$) as a Bounded Convex Body:} This definition specifies a non-spherical, bounded space, which is typically crucial for ensuring identifiability and has been used in previous related works \citet{zimmermann2021contrastive}.
    \item \textbf{Conditional Distribution $p(\mathbf{z}_t|\mathbf{z}_x)$ as Exponential Distribution:} The mathematical form $e^{-\delta(\mathbf{z}_t, \mathbf{z}_x)/\lambda}$ models the coupling relationship by assuming the likelihood decays exponentially with the distance ($\delta$, a distance metric induced by a norm between the coupled variables. This implies that given $\mathbf{z}_x$, the paired variable $\mathbf{z}_t$ is likely to be found in its immediate vicinity.
\end{itemize}

\subsection{Limitations and Practical Implications}

While these assumptions are theoretically sufficient for identifiability, as we have shown, their strict adherence in real-world scenarios is challenging to verify. This difficulty arises because the true data-generating process is unknown, making direct verification of conditions generally impossible. As is common in practice, performance gains on downstream tasks are therefore used as a surrogate to assess the plausibility of the theoretical assumptions. In particular, if the methods derived from our identifiability theorems (e.g., using FastICA or PCA+FastICA to recover disentangled representations) consistently yield strong improvements across diverse downstream tasks—as demonstrated in our extensive experiments on few-shot learning and domain generalization—then it is reasonable to infer that the underlying assumptions are either satisfied or, more likely, approximated sufficiently well for the theory to be practically meaningful.

\section{A Formal Definition of Disentanglement}

\begin{definition}[Component-wise Disentanglement]
A representation $\mathbf{f}_x(\mathbf{x})$ (and symmetrically, $\mathbf{f}_t(\mathbf{t})$) learned by MMCL is defined as \textbf{Component-wise Disentangled} if two conditions are met:
\begin{enumerate}
    \item \textbf{Factor Independence (Prerequisite):} The components of the underlying latent coupled variable $\mathbf{z}_x$ (and $\mathbf{z}_t$) are mutually statistically independent.
    \item \textbf{Identifiability up to Trivial Transformation:} The representation $\mathbf{f}_x(\mathbf{x})$ (and symmetrically, $\mathbf{f}_t(\mathbf{t})$) is related to the true latent variable $\mathbf{z}_x$ (and $\mathbf{z}_t$) through a simple, invertible transformation $\mathbf{T}$ up to a constant $\mathbf{c}$:
    $$
    \mathbf{f}_x(\mathbf{x}) = \mathbf{T} \mathbf{z}_x + \mathbf{c} \quad \text{and symmetrically} \quad \mathbf{f}_t(\mathbf{t}) = \mathbf{T}' \mathbf{z}_t + \mathbf{c}'
    $$
    where $\mathbf{T}$ (and $\mathbf{T}'$) is a matrix of trivial ambiguity, specifically:
    \begin{itemize}
        \item $\mathbf{T}$ is an orthogonal matrix (in the hypersphere latent space, Corollary \ref{corollary:hyper}).
        \item $\mathbf{T}$ is a permutation matrix with scaling (in the convex body latent space, Corollary \ref{coro:conv}).
    \end{itemize}
\end{enumerate}
This result guarantees that the shared, independent components of $\mathbf{z}_x$ (and $\mathbf{z}_t$) can be uniquely recovered by resolving the ambiguity $\mathbf{T}$ using post-hoc linear methods, such as FastICA.
\end{definition}

\section{Quantitative Validation on High-Dimensional Image}
To provide a more direct and quantitative assessment of our theory's disentanglement capabilities in higher-dimensional and more complex settings, we utilize the \textit{Multimodal3DIdent} dataset \citep{daunhawer2023identifiability}, which provides paired image and text samples with complete ground-truth latent factors. We validate the identifiability of the shared latent variables ($\mathbf{z}_x$ and $\mathbf{z}_t$) that jointly influence both the image and text modalities. For shared factors, we consider the object's \textit{shape} (7 discrete values) and its position (\texttt{object\_xpos}, \texttt{object\_ypos}, \texttt{object\_zpos}). The remaining factors are treated as modality-specific; see \citet{daunhawer2023identifiability} for further details. The table below presents the $\mathbf{R^2}$ scores for recovering the ground-truth shared latent factors from both the image ($\mathbf{z}_x$) and text ($\mathbf{z}_t$) factors.

\begin{figure}[h]
    \centering
    \begin{minipage}{0.2\textwidth}
        \centering
        \includegraphics[width=\linewidth]{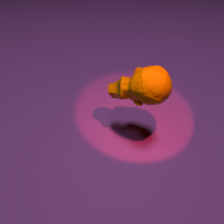}
        \label{fig:mmsample}
    \end{minipage}
    \hfill
    \begin{minipage}{0.75\textwidth}
        \centering
        \begin{tabular}{lcc}
            \toprule
            \textbf{Modality} & \textbf{ Representation} & $\mathbf{R^2}$ Score (Recovery of $\mathbf{z}$) \\
            \midrule
            Image & $\mathbf{z}_x$ & $0.97 \pm 0.05$ \\
            Text & $\mathbf{z}_t$ & $0.75 \pm 0.04$ \\
            \bottomrule
        \end{tabular}
        \label{tab:ident_scores}
    \end{minipage}
\caption{A sample from Multimodal3DIdent (Left). The corresponding text is: `The top-right of the image shows a "tab:orange" colored head'. Identifiability Scores ($\mathbf{R^2}$) (Right).}
\end{figure}

The image branch ($\mathbf{z}_x$) achieves near-perfect recovery ($\mathbf{R^2} \approx 0.97$), robustly validating our theoretical framework's ability to identify and unmix latent factors from complex, high-dimensional inputs. Recovery performance for the text representation ($\mathbf{z}_t$) is slightly lower ($\mathbf{R^2} \approx 0.75$) (Similar observations were also reported by \citet{daunhawer2023identifiability}.), which we attribute primarily to the violation of the idealized continuous assumptions inherent to the text modality—text factors (e.g., color) are often represented as discrete, named values, which conflicts with the continuous assumptions. Overall, these results suggest that our linear identifiability results extend effectively to high-dimensional image data.

\end{document}